\documentclass{article}
\PassOptionsToPackage{numbers, compress}{natbib}
\usepackage{natbib}
\usepackage{graphicx} %
\usepackage{fancyhdr}

\usepackage{fullpage}%
\usepackage{subcaption}

\usepackage[utf8]{inputenc} %
\usepackage[T1]{fontenc}    %
\usepackage[hidelinks]{hyperref}       %
\usepackage{url}            %
\usepackage{booktabs}       %
\usepackage{amsfonts}       %
\usepackage{nicefrac}       %
\usepackage{microtype}      %
\usepackage{xcolor}         %
\usepackage{tikz}
\usepackage{amssymb}
\usepackage{amsthm}
\usepackage{amsmath}
\usepackage{cleveref}

\usepackage{mathtools}
\usepackage{enumitem}
\usepackage{xspace}
\usepackage{wrapfig}
\usepackage{algorithm}
\usepackage{algorithmic}
\usepackage{authblk}

\usepackage{siunitx}
\usepackage{amartya_ltx}
\usepackage{multirow}
\usepackage{thm-restate}

\usepackage{titlesec}
\usepackage[acronym]{glossaries}  %

\newacronym{ml}{ML}{Machine Learning}
\newacronym{id}{ID}{in-distribution}
\newacronym{ood}{OOD}{out-of-distribution}
\newacronym{fmow}{fMoW}{Functional Map of the World}
\newcommand{\aline}{\emph{Accuracy-on-the-line}\xspace}
\newcommand{\awline}{\emph{Accuracy-on-the-wrong-line}\xspace}

\title{Accuracy on the wrong line: On the pitfalls of noisy data for out-of-distribution generalisation}

\author[1]{Amartya Sanyal}
\author[1]{Yaxi Hu}
\author[2]{Yaodong Yu}
\author[3]{Yian Ma}
\author[4]{Yixin Wang}
\author[1]{Bernhard Sch\"{o}lkopf}

\affil[1]{Max Planck Institute for Intelligent Systems, T\"ubingen, Germany}
\affil[2]{University of California, Berkeley, U.S.A.}
\affil[3]{Hal\i c\i o\u{g}lu Data Science Institute, University of California San Diego, San Diego, U.S.A.}
\affil[4]{University of Michigan, Ann Arbor, U.S.A.}

\date{}

\begin{document}
\maketitle

\begin{abstract}
``\aline'' is a widely
observed phenomenon in machine learning, where a model's accuracy on
in-distribution (ID) and out-of-distribution (OOD) data is positively
correlated across different hyperparameters and data configurations.
But when does this useful relationship break down? In this work, we
explore its robustness. The key observation is that noisy data and the
presence of nuisance features can be sufficient to shatter the~\aline phenomenon. In these cases, ID and OOD accuracy can become
negatively correlated, leading to ``\awline.'' This
phenomenon can also occur in the presence of spurious (shortcut)
features, which tend to overshadow the more complex signal (core,
non-spurious) features, resulting in a large nuisance feature space.
Moreover, scaling to larger datasets does not mitigate this
undesirable behavior and may even exacerbate it. We formally prove a lower bound on~\Gls{ood} error in a linear classification
model, characterizing the conditions on the noise and nuisance features for a large~\Gls{ood} error. We finally demonstrate this phenomenon across both
synthetic and real datasets with noisy data and nuisance features.

\end{abstract}

\section{Introduction}
\label{sec:intro}

\begin{wrapfigure}{r}{0.6\linewidth}
    \begin{subfigure}[t]{0.49\linewidth}
    \includegraphics[width=1.0\linewidth]{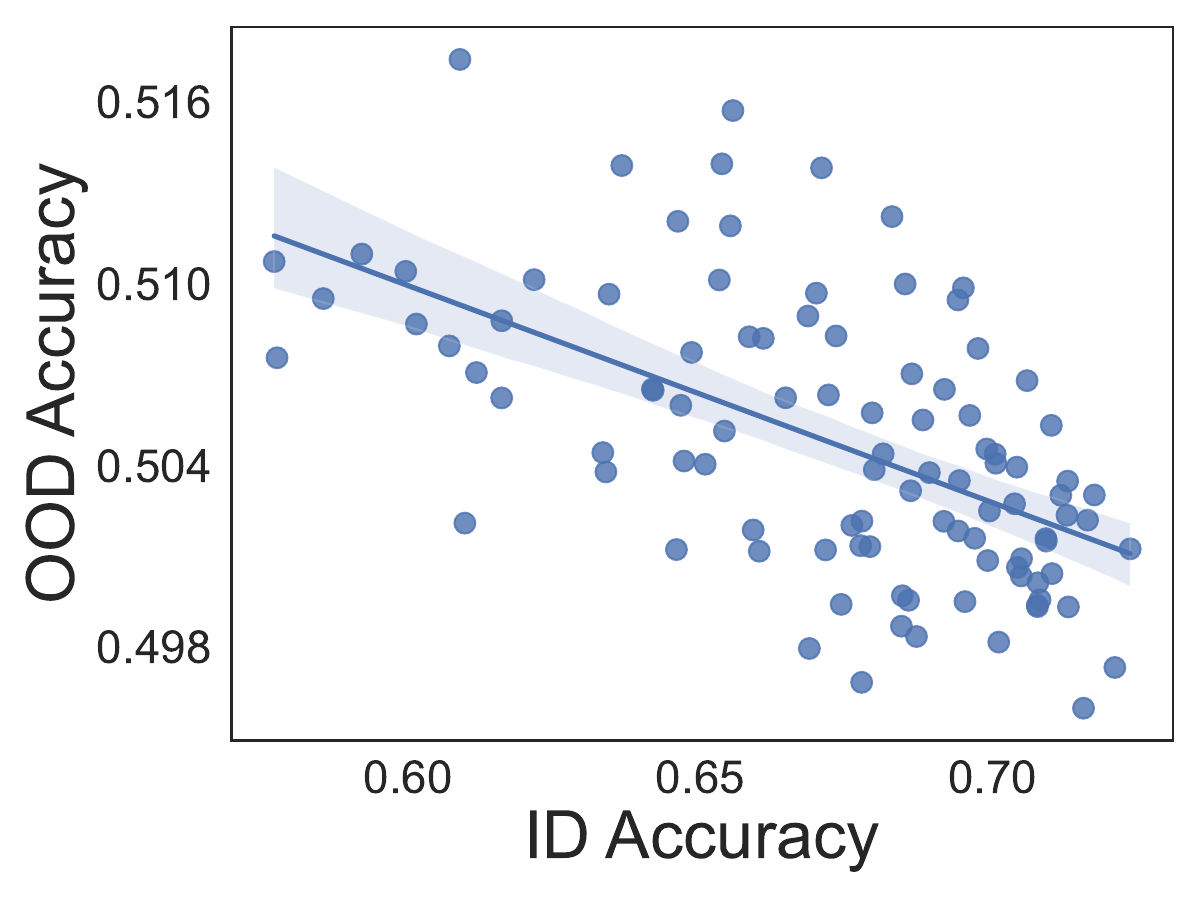}
    \caption{Noisy dataset}
    \end{subfigure}
    \begin{subfigure}[t]{0.49\linewidth}
    \includegraphics[width=1.0\linewidth]{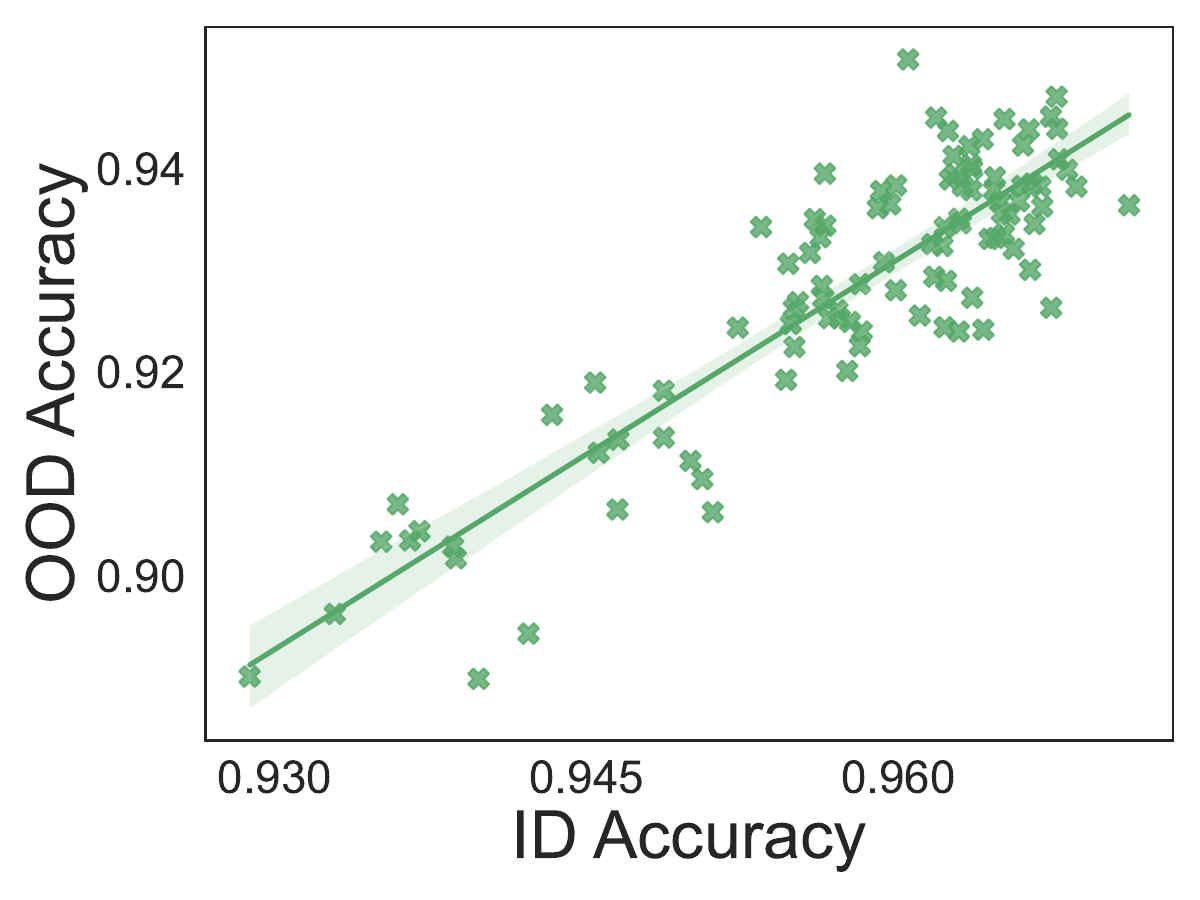}
    \caption{Noiseless dataset}
    \end{subfigure}
    \caption{\small~\awline behaviour in Noisy dataset vs.~\aline behaviour in Noiseless dataset in linear setting. See~\Cref{sec:linear} for a description of the setting.}
    \label{fig:lin-id-vs-ood}
\end{wrapfigure}

\Gls{ml} models often exhibit a consistent behavior known as
``\aline''~\citep{miller2021accuracy}. This phenomenon
refers to the positive correlation between a model's accuracy
on \Gls{id} and \Gls{ood} data. The positive correlation is widely observed across various models, datasets,
hyper-parameters, and configurations.  It suggests that  improving a model's ID performance can enhance its
generalization to OOD data. This addresses a fundamental challenge in
machine learning: extrapolating knowledge to new, unseen scenarios by allowing OOD performance assessment across models
without retraining. It also suggests that modern machine learning does
not need to trade off~\Gls{id} and \Gls{ood} accuracy.

However, does~\aline always hold? In this work,
we explore the robustness of this phenomenon.  Our key observation,
supported by theoretical insights, is that~\emph{noisy data and the presence of nuisance features can shatter the phenomenon, leading to an
\emph{\awline} phenomenon (\Cref{fig:lin-id-vs-ood})
with a negative correlation between ID and OOD accuracy}. Noisy
data is common in machine learning, as datasets expand and are sourced
automatically from the web, introducing label noise through human
annotation \citep{frenay2014class,northcutt2021confident}. It is also common for modern~\Gls{ml} models to obtain zero training error on noisy training data~\citep{zhang2017understanding}, a phenomenon called noisy interpolation. In this work, we show that when algorithms memorise noisy data, the correlation between ID and OOD
performance can become nearly inverse.

Beyond noisy data, another crucial condition for
\awline is the presence of multiple ``nuisance features'', namely features irrelevant to the classification task.
Nuisance features are common in machine learning, as task-relevant
features in high-dimensional data frequently
lies on a low-dimensional manifold; see e.g.
\citet{brown2022verifying} and \citet{pope2021intrinsic}, This lower
intrinsic dimensionality implies that classification-relevant
information is concentrated in a smaller, more manageable subset of
the feature space, rendering the remaining features as ``nuisance''.%

Even in the direct absence of these nuisance features, this phenomenon can occur due to so-called spurious features. These are features that are not genuinely relevant
to the target task but appear predictive because of coincidental
correlations or dataset biases.  Spurious features are often simpler
and more easily learned than non-spurious ones, creating the illusion that data lies on a low-dimensional manifold defined by these spurious features. This makes other non-spurious features effectively ``nuisance''. This illusion often becomes reality in training and has been observed in a series of works~\citep{shah2020pitfalls,arjovsky2019invariant,parascandolo2021learning,singla2021salient}\footnote{This observation dates back to the `urban tank legend' e.g. see section 8 in~\citet{scholkopf2022causality}}:
Models tend to exploit the easiest spurious features during training, overshadowing the true, more complex non-spurious ones. This leads to a large nuisance space that exceeds the true number of nuisance features, as observed in~\citet{qiu2023complexity}.

\begin{wrapfigure}{r}{0.6\linewidth}
    \begin{subfigure}[t]{0.48\linewidth}
    \includegraphics[width=1.0\linewidth]{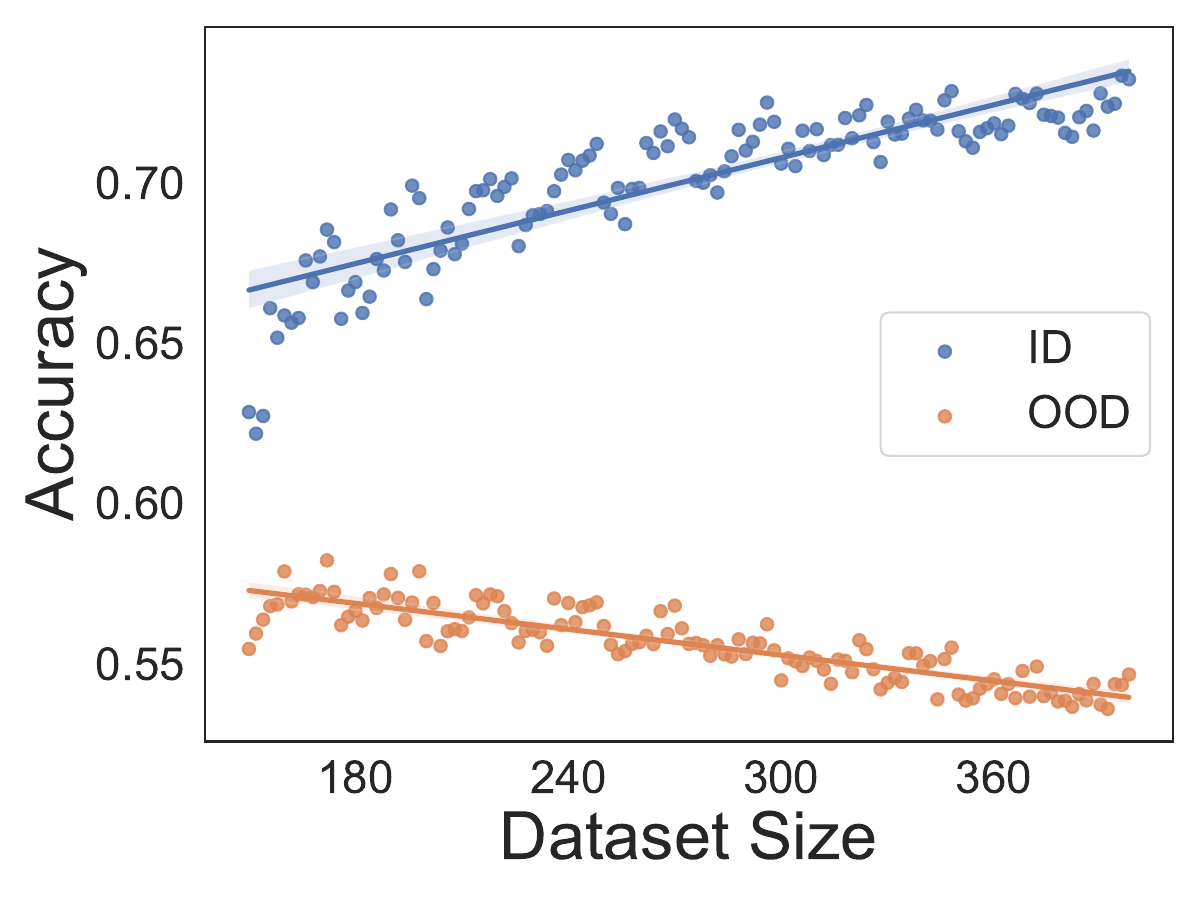}
    \caption{Noisy dataset}
    \end{subfigure}
    \begin{subfigure}[t]{0.48\linewidth}
    \includegraphics[width=1.0\linewidth]{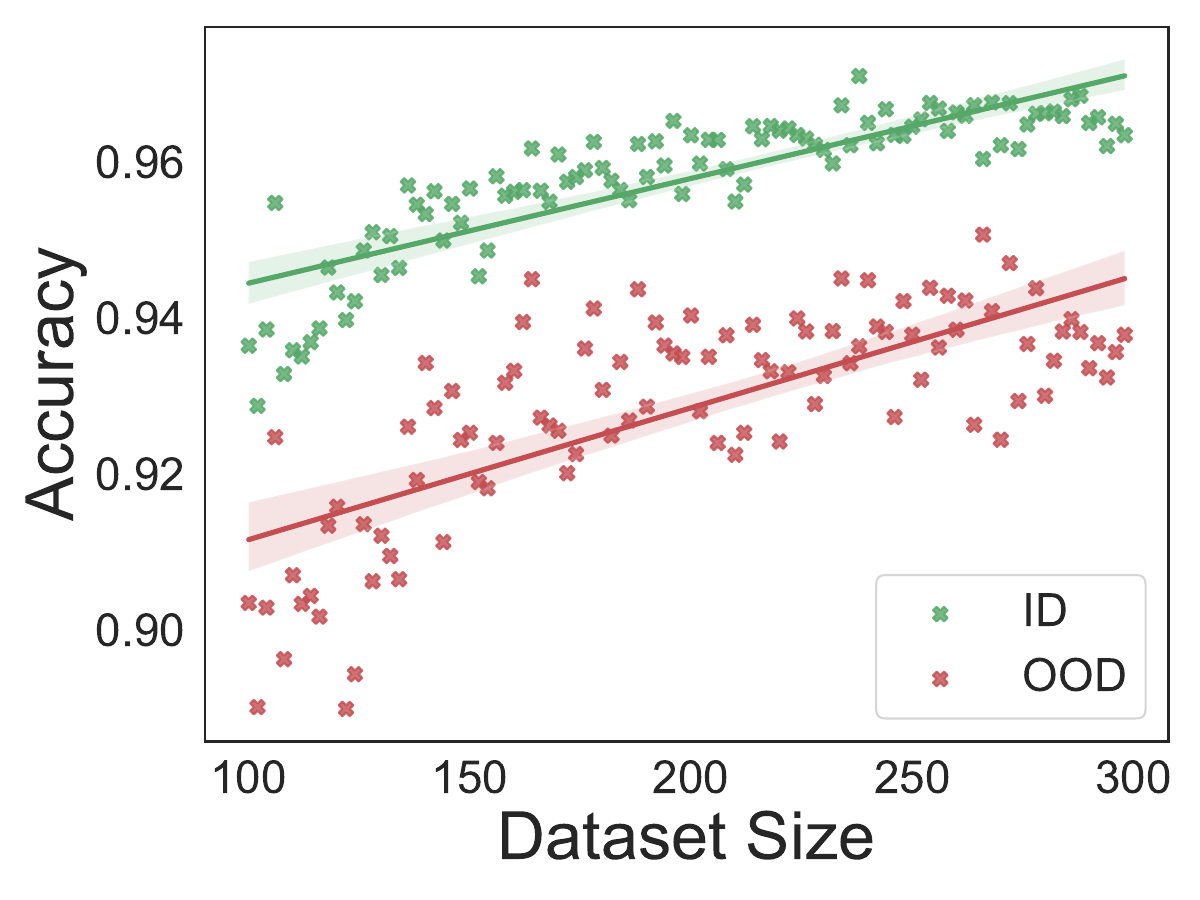}
    \caption{Noiseless dataset}
    \end{subfigure}
    \caption{\small Same setting as~\Cref{fig:lin-id-vs-ood}, increasing dataset size always increases~\Gls{id} accuracy irrespective of label noise, but decreases \Gls{ood} accuracy in the presence of label noise.}
    \label{fig:diff-lin-id-vs-ood}
\end{wrapfigure}
A reader might ask: Can scaling up dataset size resolve the
undesirable \awline phenomenon? Recent literature
on scaling laws \citep{kaplan2020scaling,hestness2017deep} as well as uniform-convergence-based results in classical learning theory suggest that larger datasets are needed to fully benefit from
larger models. Even with noise interpolation, larger datasets usually improve generalisation error~\citep{bartlett2020benign, belkin2019reconciling,nakkiran2021deep}.  However, as~\Cref{fig:diff-lin-id-vs-ood} suggests, we show that the answer is no. Scaling
can adversely impact \gls{ood} error, exacerbating the negative
correlation between ID and OOD performance. As dataset sizes grow,
even a small label noise rate increases the absolute number of noisy
points, significantly impacting \Gls{ood} error, even if it may not
always affect \Gls{id} error.

\textbf{Contributions.} To summarize, the contributions of this work are as follows: (1) We show that~\awline can occur in practice and provide experimental results on two image datasets. (2) In a linear setting, we formally prove a lower bound on the difference between per-instance \Gls{ood} and \Gls{id} error and characterise sufficient conditions for it to increase. (3) We argue that these conditions are natural properties of learned models, especially for noisy interpolation and in the presence of nuisance features.

\textbf{Related work.} Our work builds on the body of work
on the~\aline phenomenon.~\citet{miller2021accuracy}
first demonstrated empirically that~\Gls{id} and~\Gls{ood} performances exhibit a linear correlation, across many datasets and associated distribution shifts, while~\citet{abe2022} showed a similar positive correlation for ensemble models.~\citet{pmlr-v202-liang23d} zoomed in on subpopulation shift--a particular type of distribution shift---and observed a nonlinear "accuracy-on-the-curve" phenomenon. Theoretical analyses by \citet{tripuraneni2021covariate,tripuraneni2021} supported
\aline in overparameterised linear models.
\citet{baek2022} extended the concept to "agreement-on-the-line,"
showing a correlation between OOD and ID agreement for neural
network classifiers.~\citet{pmlr-v202-lee23o} provided theoretical
analysis for this phenomenon in high-dimensional random features
regression.~\citet{kim2023reliable} used these insights to develop a
test-time adaptation algorithm for robustness against distribution
shift while~\citet{eastwood2024spuriosity} used similar insights to develop a pseudo-labelling strategy to develop robustness across domain shifts. In contrast to these works, our work focuses on the robustness
of~\aline, investigating when it breaks and under what
conditions.

Other works have also studied the robustness of~\aline.~\citet{wenzel2022} and~\citet{teney2023} empirically suggested that it holds in most but not all datasets and configurations.In
particular,~\citet{teney2023} also provided a simple linear example
where adding spurious features can differently impact ID and OOD
risks. \citet{kumar2022finetuning} theoretically established an ID-OOD
accuracy tradeoff when fine-tuning overparameterised two-layer linear
networks. In contrast to all of these works, our work provides both a
theoretical analyses and empirical investigations demonstrating the
essential impact of \emph{label noise} and \emph{nuisance features} in
breaking \aline. Moreover, we state that it leads to a
negative correlation between ID and OOD accuracies, dubbed
"\awline." %
Other lines of work show how label noise affects adversarial
robustness~\citep{sanyal2021how,paleka2023a} and
fairness~\citep{wu2022fair,wang2021fair}. However, they are not directly related to this work.

\section{``Accuracy-on-the-wrong-line'' in practice}
\label{sec:real-world}

We first show how a combination of practical limitations in modern machine learning can result in~\awline in two real-world computer vision datasets: MNIST~\citep{mnist} and \Gls{fmow}~\citep{christie2018functional}. Then in~\Cref{sec:theory}, we formalise the necessary conditions and provide a theoretical proof showing their sufficiency. In~\Cref{sec:linear}, we conduct synthetic interventional experiments in a simple linear setting to demonstrate these conditions are indeed sufficient and behave in line with our theoretical results.

\subsection{Colored MNIST dataset}
\label{sec:cmnist}
\begin{figure*}[t]
    \centering
\begin{subfigure}[t]{0.24\linewidth}
    \includegraphics[width=0.99\linewidth]{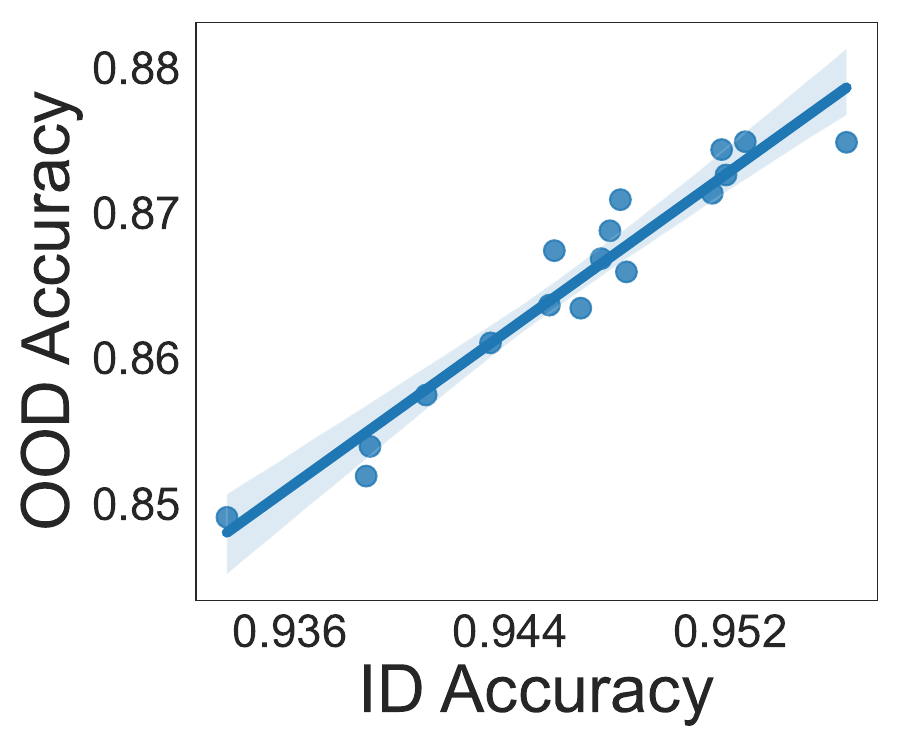}
    \caption{\(\eta=0.\)}
  \label{fig:mnist0}
\end{subfigure}
\begin{subfigure}[t]{0.24\linewidth}
    \includegraphics[width=0.99\linewidth]{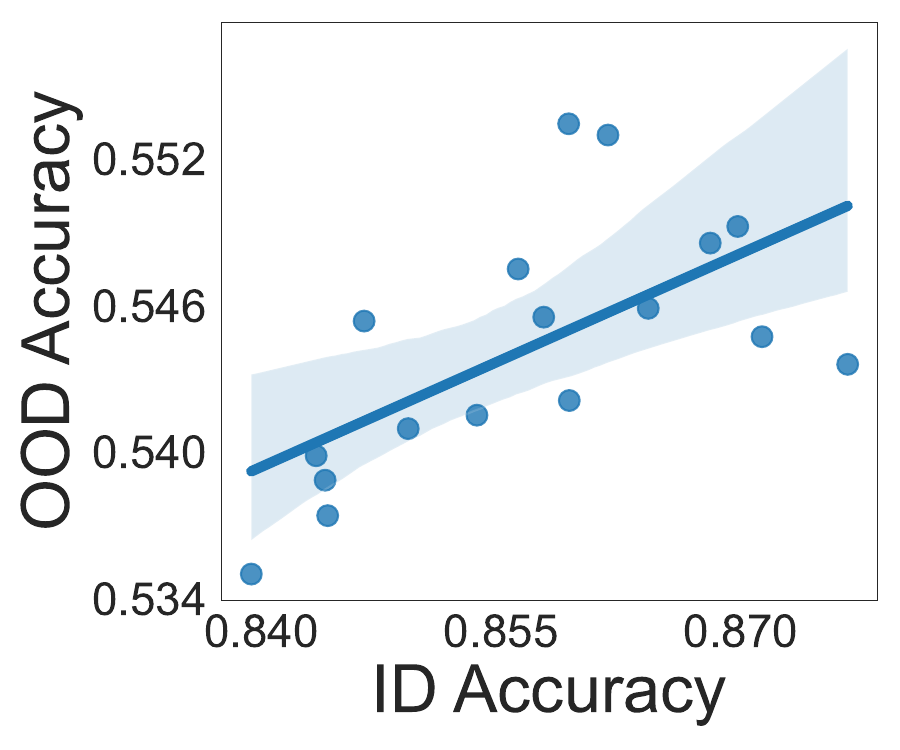}
    \caption{\(\eta=0.15\)}
    \label{fig:mnist15}
\end{subfigure}
\begin{subfigure}[t]{0.24\linewidth}
    \includegraphics[width=0.99\linewidth]{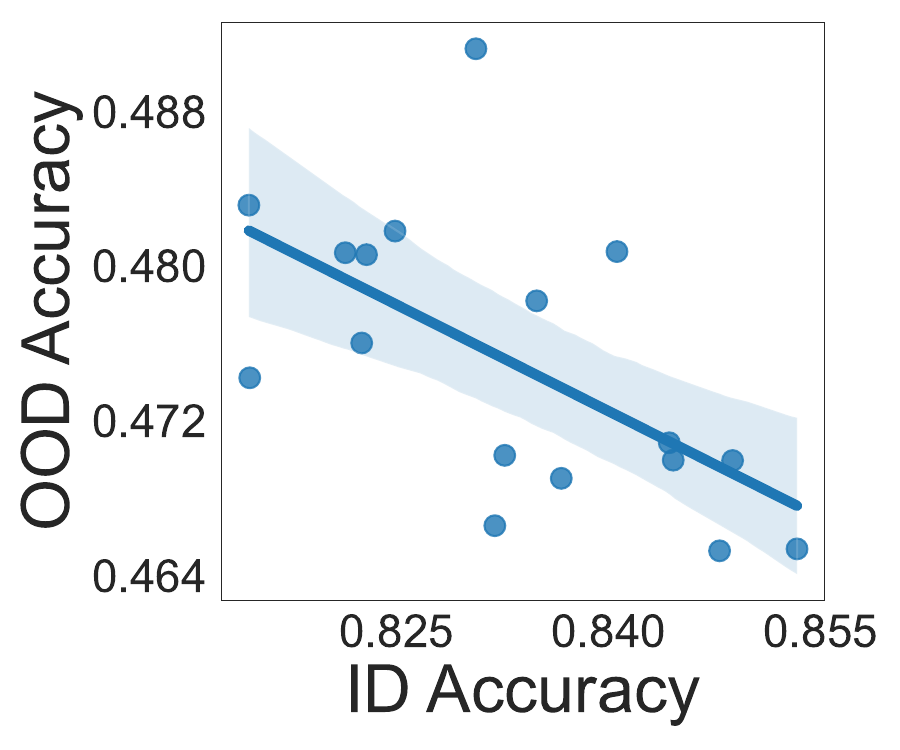}
    \caption{\(\eta=0.2\)}
    \label{fig:mnist2}
\end{subfigure}
\begin{subfigure}[t]{0.24\linewidth}
    \includegraphics[width=0.99\linewidth]{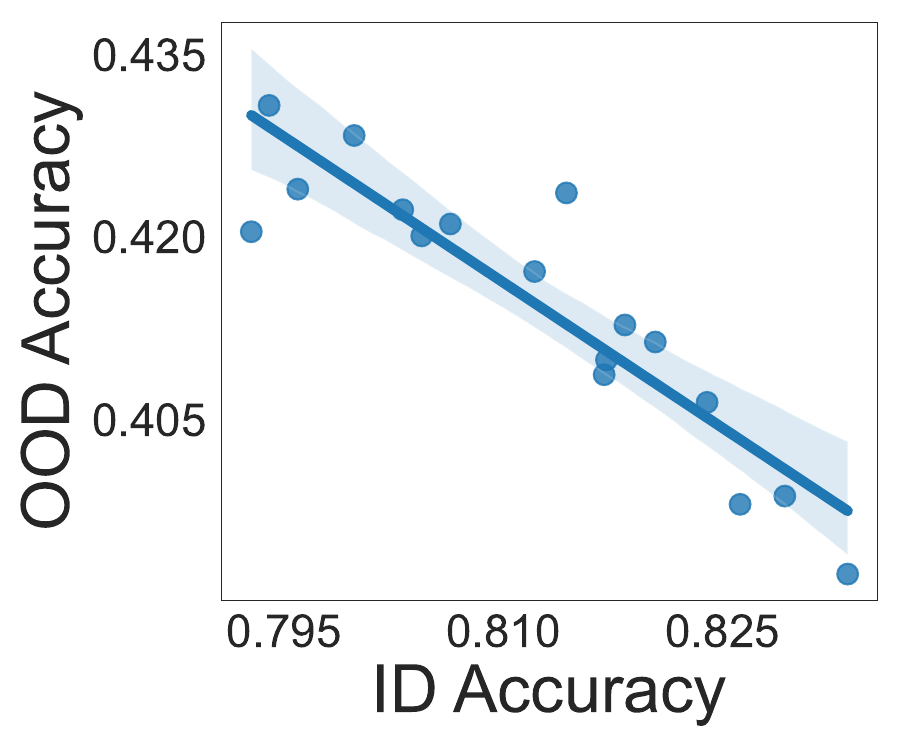}
    \caption{\(\eta=0.25\)}
    \label{fig:mnist25}
\end{subfigure}
\caption{\small Each plot shows~\Gls{ood} vs~\Gls{id} accuracy for varying label noise rates \(\eta\) on the colored MNIST dataset. Similar to~\Cref{fig:lin-id-vs-ood}, the~\aline phenomenon degrades with increasing amount of label noise.}
\label{fig:mnist}
\end{figure*}
We first examine the Colored MNIST dataset, a variant of MNIST, derived from MNIST by introducing a color-based spurious correlation, where the color of each digit is determined by its label with a certain probability. Specifically, digits are assigned a binary label based on their numeric value (less than 5 or not), which is then corrupted with label noise probability \(\eta\). The color assigned to each digit is thus correlated with the label with a small probability. A three-layer MLP is then trained on this dataset to achieve zero training error. The model is subsequently tested on a freshly sampled test set from the same distribution but without label noise and with a smaller spurious correlation; see~\Cref{app:cmnist} for more details. The accuracy on the training distribution is referred to as the~\Gls{id} accuracy, and the accuracy on the test distribution is referred to as the~\Gls{ood} accuracy.

The results of this experiment are presented in~\Cref{fig:mnist}. When the amount of label noise is low~(\Cref{fig:mnist0,fig:mnist15}), the~\Gls{id} and~\Gls{ood} accuracy are positively correlated, whereas they become negatively correlated at higher levels of label noise~(\Cref{fig:mnist2,fig:mnist25}).

\subsection{Functional Map of the World (fMoW) dataset}
\label{sec:fmow}

For the next set of experiments, we use the~\Gls{fmow}-CS dataset designed by~\citet{shi2023how} based on the original~\Gls{fmow} dataset~\citep{christie2018functional} in WILDS~\citep{wilds2021,sagawa2022extending}. The dataset contains satellite images from various parts of the world and are labeled according to one of 30 objects in the image. Similar to Colored MNIST,~\Gls{fmow}-CS dataset is constructed by introducing a spurious correlation between the geographic region and the label.  Similar to our previous experiments, we also introduce label noise with a probability of 0.5. For the~\Gls{ood} test data, we use the original WILDS~\citep{wilds2021,sagawa2022extending} test set for~\Gls{fmow}. %
Further details regarding the dataset are available in~\Cref{app:fmow}. To obtain various training runs, we fine-tuned ImageNet pre-trained models, including ResNet-18, ResNet-34, ResNet-50, ResNet-101, and DenseNet121, with various learning rates and weight decays on the~\Gls{fmow}-CS dataset. We also varied the width of the convolution layers to increase or decrease the width of each network. In total, we trained more than 400 models using various configurations and report the results in~\Cref{fig:fmow-exp-plot-main}. 

Consistent with previous experiments on Colored MNIST,~\Cref{fig:fmow-exp-plot-noise} shows that when the data comprises label noise and training accuracy is \(100\%\),~\Gls{id} and~\Gls{ood} accuracy are inversely correlated. In the absence of label noise,~\Cref{fig:fmow-exp-plot-noiseless} shows that the two are positively correlated. These two plots only consider models that fully interpolate the dataset: noisy and noiseless, respectively. To highlight that noisy interpolation is indeed necessary to break the~\aline phenomenon, we also plot the experiments for those training runs where the data is not fully interpolated in~\Cref{fig:fmow-exp-plot-noisy-nointerp}. This corresponds to early stopping, stronger regularizations, as well as smaller widths. Our results show that in this case, the~\Gls{id} and~\Gls{ood} accuracy are still positively correlated but to a lesser degree than the noiseless setting. We conjecture that this is because even minimizing the cross-entropy loss on noisy labels contributes to this behavior and is strongest when the minimisation leads to interpolation.

\begin{figure}[t]
\begin{subfigure}[t]{0.32\linewidth}
    \centering
    \includegraphics[width=0.99\linewidth]{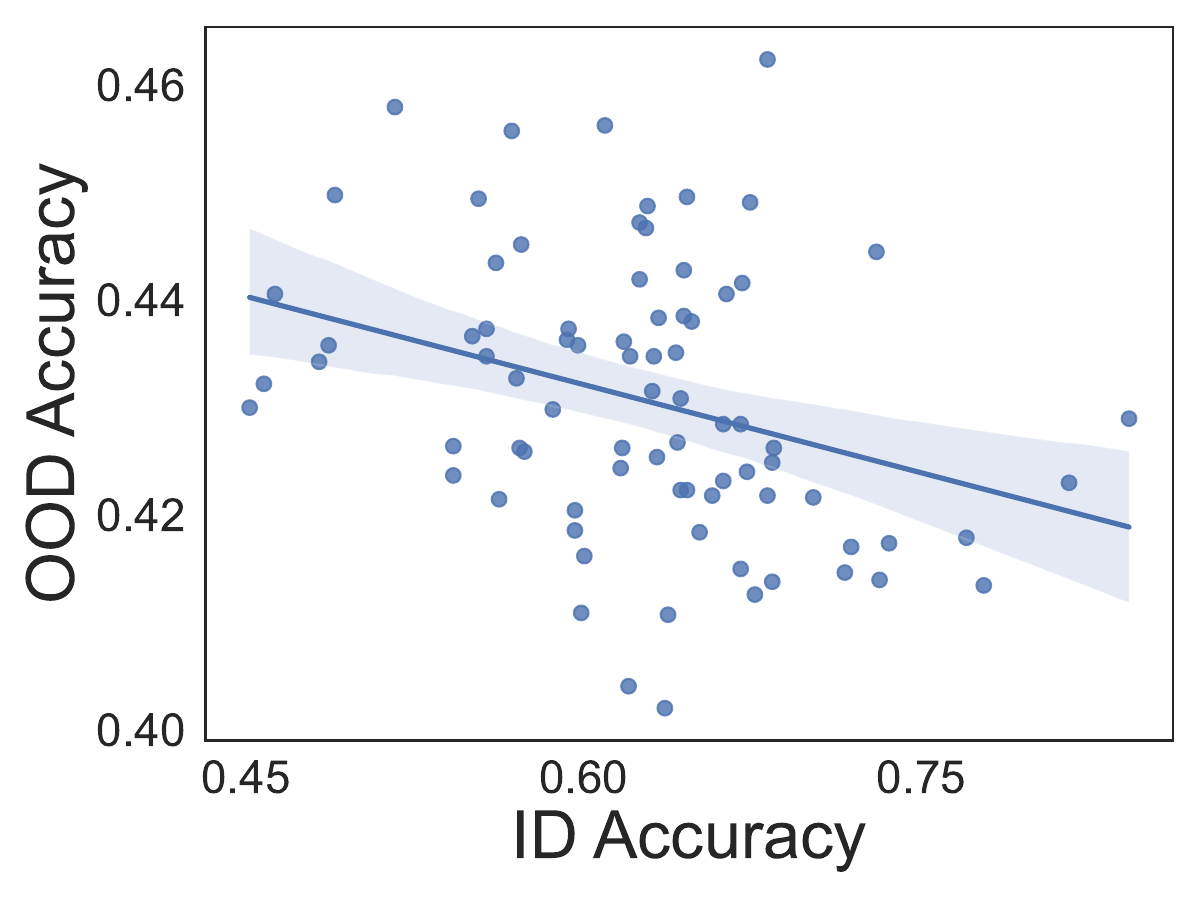}
    \caption{\footnotesize Noisy Interpolation}
    \label{fig:fmow-exp-plot-noise}
\end{subfigure}
\begin{subfigure}[t]{0.32\linewidth}
    \centering
    \includegraphics[width=0.99\linewidth]{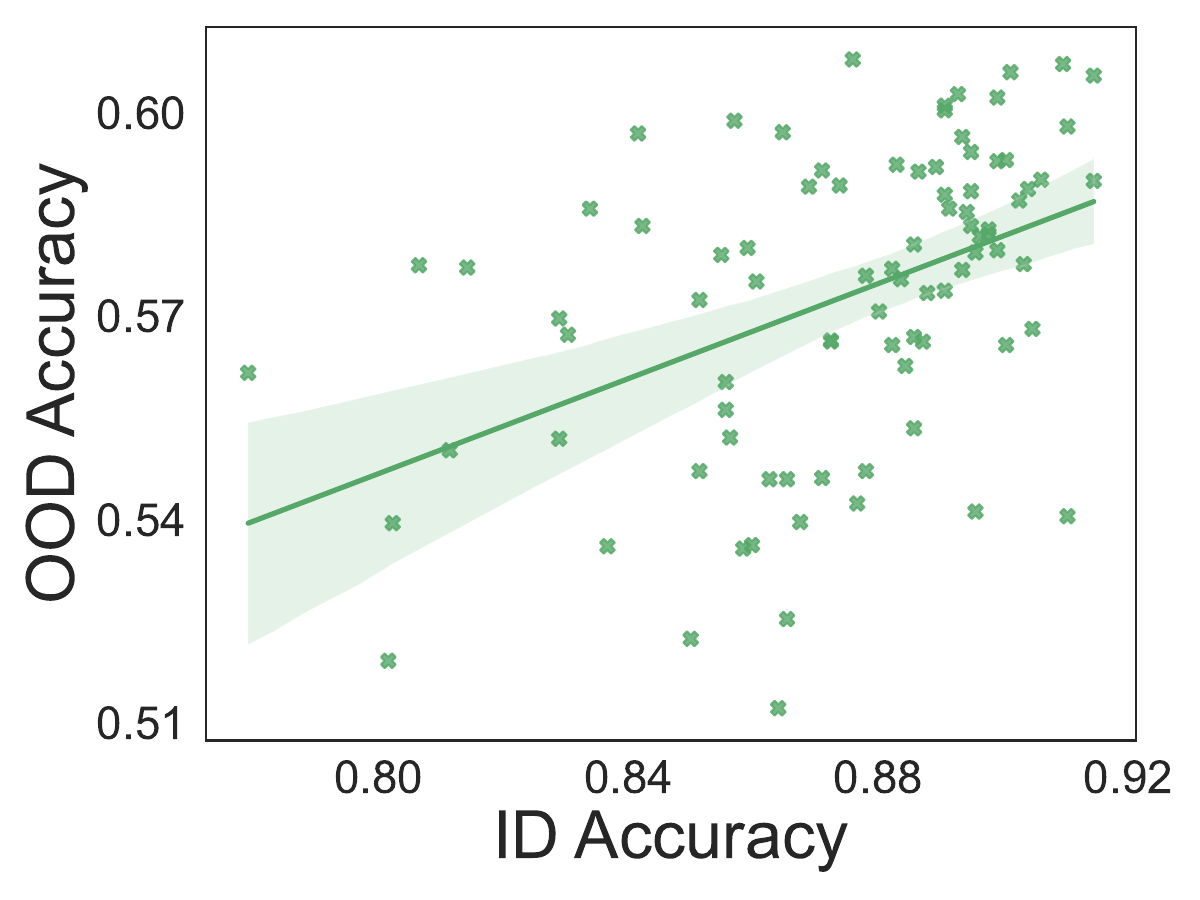}
    \caption{\footnotesize Noiseless Interpolation}
    \label{fig:fmow-exp-plot-noiseless}
\end{subfigure}
\begin{subfigure}[t]{0.32\linewidth}
    \centering
    \includegraphics[width=0.99\linewidth]{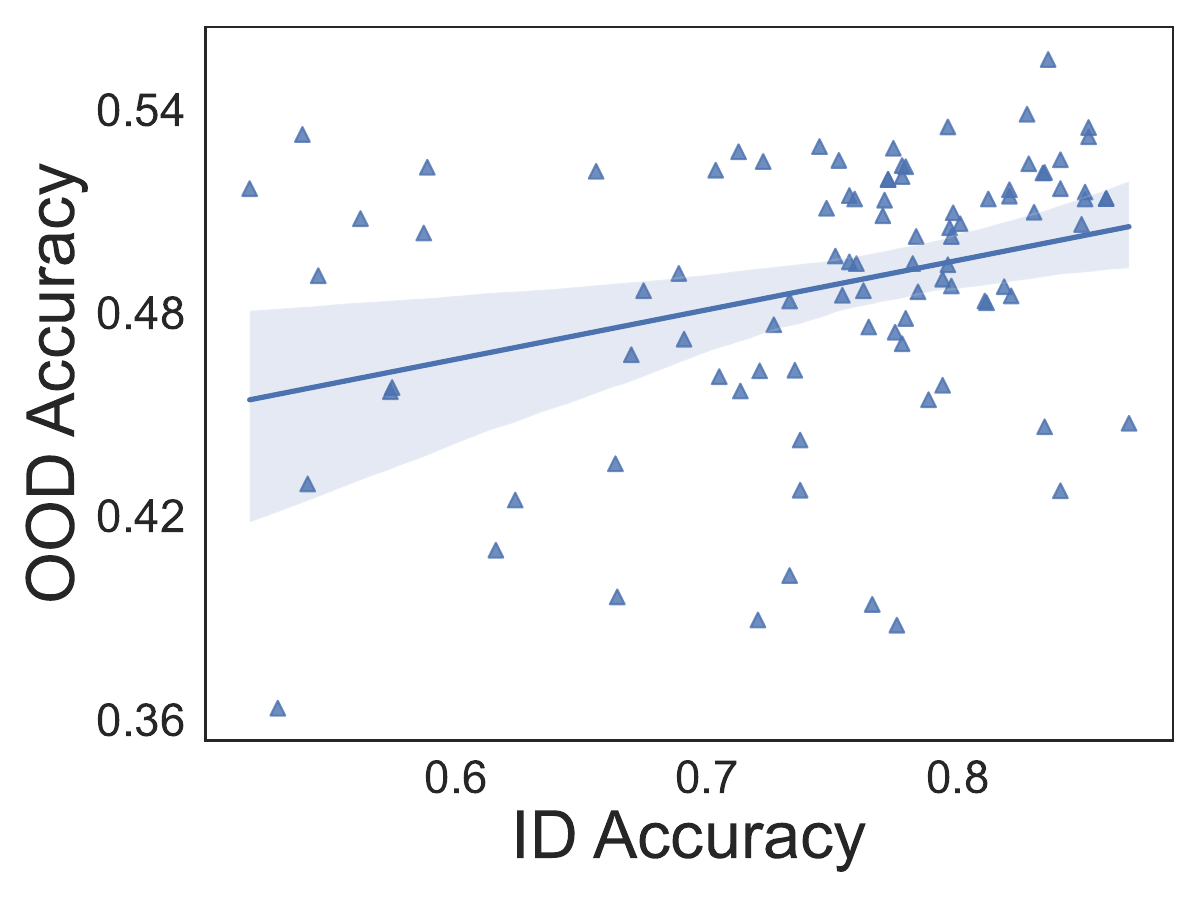}
    \caption{\footnotesize Noisy without interpolation}
    \label{fig:fmow-exp-plot-noisy-nointerp}
\end{subfigure}
\caption{Experiments on the \Gls{fmow} domain-correlated dataset with label noise. The noisy dataset (left) shows the~\awline phenomenon, while the noiseless dataset (center) shows the~\aline phenomenon. When the noisy dataset is not interpolated \eg due to early stopping (right), the~\aline phenomenon persists.}
\label{fig:fmow-exp-plot-main}
\end{figure}

The results in this section highlight that spurious correlations without label noise are insufficient to enforce~\awline. In~\Cref{app:exp-det}, we also show evidence (\Cref{fig:fmow-exp-plot}) that the presence of spurious correlations is necessary in these datasets.

\section{``Accuracy-on-the-wrong-line'' in theory: sufficient conditions}
\label{sec:theory}
In this section, we present our main theoretical results to isolate the main factors responsible for breaking the~\aline
phenomenon. First, we define the data distribution~\(\dist\)
in~\Cref{defn:disjoint-dist} and shift distribution~\(\shift\)
in~\Cref{defn:shift-dist}. The~\Gls{id} error is measured as the
expected error on \(\dist\) and the~\Gls{ood} error is the expected error on the shifted data i.e. on~\(x+\delta\) where \(x\sim\dist\) and \(\delta\sim\shift\). 

Intuitively, the main property of the distribution \(\dist\) is
that the ``signal'' and ``nuisance'' features are supported on
disjoint subspaces~(\(S_d\) and \(S_k\) respectively) and that the
shift \(\shift\) does not affect the signal features. Then,~\Cref{thm:main-thm}
lower bounds the difference between~\Gls{ood} and~\Gls{id} error, as a increasing function of the lower bound on nuisance sensitivity of the learned model. Further,
in~\Cref{thm:A1-informal} , we provide a theoretical example, where
the lower bound on nuisance sensitivity increases with label noise when the label
noise is interpolated. Taken
together,~\Cref{sec:main-reslt-thm,sec:theory-assump} proves that
under high dimensions of nuisance space, and high label noise
interpolation negatively impacts~\Gls{ood} error. ~\emph{We remark that in our theoretical results, we have avoided defining specific data distribution, distribution shifts, or learning algorithms in order to show a result on the general phenomenon and highlight the conditions that give rise to it. We leave it to future work to derive problem and algorithm specific statistical rates of~\Gls{ood} error.}
\subsection{Data distribution}\label{subsec:data-distribution} We model our data
distribution~\(\dist\) to have a few signal features and multiple
irrelevant~(or nuisance) features. This corresponds to real-world
settings where data is usually high dimensional but lies in a low
dimensional manifold. We further simplify the setting by restricting
this low-dimensional manifold to the linear subspace spanned by the
first \(d\in\bZ\) coordinate basis vectors. Formally,  for
\(d,k\in\bZ\) let \(S_d,S_k\) be any two disjoint subsets of
\(\bc{1\ldots d+k}\).  Without loss of generality, we assume them to
be contiguous i.e.  \(S_d=\bc{1,\ldots,d}\) and
\(S_k=\bc{d+1,\ldots,d+k}\).

\begin{defn}\label{defn:disjoint-dist} A distribution \(\dist\) on
\(\reals^{d+k}\times\bc{-1,1}\)~is called a \(\br{S_d,S_k}\)-disjoint
signal distribution with signal and nuisance support  \(S_d,S_k\)
respectively if there exists a linear separator \(w\in\reals^{d+k}\)
with its support exclusively on \(S_d\) and
\[\bE_{\br{x,y}\sim\dist}\bs{\ind{\sgn{\ip{w}{x}}\neq y}}=0.\]
\end{defn}

We define the shift distribution \(\shift\) as only impacting the nuisance features. This corresponds to widely held assumptions that distribution shifts do not affect the dependence of the label on the signal features. We define such shifts as \(S_d\)-oblivious shifts in~\Cref{defn:shift-dist}.

\begin{defn}\label{defn:shift-dist}
    A shift distribution \(\shift\) is called a \(S_d\)-oblivious shift distribution if the marginal distribution \(\shift_{S_d}\) on the support \(S_d\) is concentrated fully on \(\mathbf{0}_{d}\),~\ie~\[\shift_{S_d}\br{\mathbf{0}_{d}}=1.\]
\end{defn}

In short, both definitions assume that there are two orthogonal subspaces. For theoretical modelling, we consider that the signal subspace and the nuisance subspace are exactly disjoint in the standard coordinate basis. While this may not hold in the original data space in practice, this usually holds in the latent space as a few components in the latent space are sufficient to solve the problem at hand.  We regard those components as the signal space and the rest as the nuisance subspace. The assumption that the \(S_d\)-oblivious shift distribution has no mass on the signal space reflects a natural assumption about distribution shifts: they do not affect the causal factors in the data. 

\subsection{Properties of learned model}\label{subsec:assumpt}
The above definitions for~\Gls{id} and~\Gls{ood} data alone do not provide sufficient conditions to break the~\aline behavior. These definitions align with realistic settings, such as the sparsity of the true labeling function and distribution shifts orthogonal to the features in the signal space. Consequently, real-world experiments on the~\aline phenomenon already explore these conditions. Therefore, we next define three conditions on the learned model that we identify as sufficient to break this phenomenon. 

Let \(\what\in\reals^{d+k}\) be the learned sparse linear classifier
with support \(S\). Let \(\dist,\shift\) be any \((S_d,
S_k)\)-disjoint signal distribution and \(S_d\)-oblivious shift
distribution as defined in~\Cref{defn:disjoint-dist} for some
\(S_d,S_k\), and define \(\nu=\bE\bs{\shift}\) as the mean
of~\(\shift\). Then, we state the following conditions on
\(\what\).
\begin{itemize}[leftmargin=0.1in]
    \item[] \textbf{Condition~\ref{ass:1-sens}}:~``Bounded sensitivity" of \(\hat{w}\) on nuisance
    subspace assumes there exists \(M,\tau \geq 0\) s.t.
    \begin{equation}\label{ass:1-sens}\tag{C1}
    M\geq \max_{i \in S_k \cap S} |\hat{w}_i|\geq \min_{i \in S_k \cap S} |\hat{w}_i| \geq \tau.
    \end{equation}

    \item[] \textbf{Condition~\ref{ass:2-align}}:~``Negative Alignment" of \(\hat{w}\) with mean of shift
    \(\nu\) assumes there exists \(\gamma > 0\) s.t.
    \begin{equation}\label{ass:2-align}
    \gamma \leq -\frac{\sum_{i \in S_k \cap S} \hat{w}_i \nu_i}{\norm{\what_{S_k\cap S}}_1}.\tag{C2}
    \end{equation}

    \item[] \textbf{Condition~\ref{ass:3-margin}}:~``Small Margin" of \(\hat{w}\) assumes that for all
    \(x\) s.t. \(\ip{\what}{x}>0\), the following holds
    \begin{equation}\label{ass:3-margin}
        \ip{\hat{w}}{x}\leq \tau\gamma \abs{S_k \cap S}.\tag{C3}
    \end{equation}
\end{itemize}

\subsection{Main theoretical result}
\label{sec:main-reslt-thm}
Now, we are ready to state the main result. In~\Cref{thm:main-thm}, we
provide a lower bound on the~\Gls{ood} error corresponding to a fixed
\(x\) where the randomness is over the sampling of the shift \(\delta\sim\shift\). 

\begin{restatable}{thm}{mainthm}\label{thm:main-thm} For any \(S_d,S_k\) let
    \(\mathcal{D}\) be a \((S_d, S_k)\)-disjoint signal distribution,
    and \(\shift\) be a \(S_d\)-oblivious shift distribution where
    each coordinate is an independent subgaussian with parameter
    \(\sigma\). 
    
    Then, for any \(x\in\mathrm{dom}\br{\dist}\) and \(\what \in
    \mathbb{R}^{d+k}\) with support \(S\) such that \(\what\)
    satisfies Conditions~\ref{ass:1-sens},~\ref{ass:2-align}, 
    and~\ref{ass:3-margin},  we have $\prob_{\delta}\br{\ip{\hat{w}}{x + \delta} \leq 0} \geq 1- e^{-\Gamma}$ where 
    \begin{equation}
    \label{eq:thm1-err-bnd}
    \Gamma =\frac{\abs{S_k \cap S}\br{\tau\gamma - \nicefrac{\cC}{\abs{S_k\cap S}}}^2 }{2 \sigma^2 M^2}. 
    \end{equation}
   for all \(x\in\mathrm{dom}\br{\dist}\) where \(\ip{\what}{x}\geq
   0\) and \(\cC=\max_{\ip{\what}{x}\geq 0}\ip{\what}{x}\).
\end{restatable}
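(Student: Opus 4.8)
The plan is to split $\ip{\what}{x+\delta}=\ip{\what}{x}+\ip{\what}{\delta}$ and to argue that, with probability at least $1-e^{-\Gamma}$ over $\delta\sim\shift$, the random term $\ip{\what}{\delta}$ is negative enough to outweigh the deterministic term $\ip{\what}{x}$. The first move is to reduce $\ip{\what}{\delta}$ to a sum over only the nuisance coordinates that $\what$ actually uses: since $\what$ is supported on $S\subseteq S_d\cup S_k$ and, by the $S_d$-obliviousness of $\shift$ (\Cref{defn:shift-dist}), $\delta$ is almost surely zero on the coordinates in $S_d$, we get $\ip{\what}{\delta}=Z$ almost surely, where $Z:=\sum_{i\in S_k\cap S}\what_i\delta_i$. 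This is the one step where the structure of $\dist$ and $\shift$ enters.

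Next I would control the mean and the subgaussian parameter of $Z$. For the mean, $\bE[Z]=\sum_{i\in S_k\cap S}\what_i\nu_i$, so Condition~\ref{ass:2-align} gives $\bE[Z]\le-\gamma\norm{\what_{S_k\cap S}}_1$, and the lower bound $\abs{\what_i}\ge\tau$ on $S_k\cap S$ from Condition~\ref{ass:1-sens} yields $\norm{\what_{S_k\cap S}}_1\ge\tau\abs{S_k\cap S}$, hence $\bE[Z]\le-\tau\gamma\abs{S_k\cap S}$. For the fluctuations, the $\delta_i$ are independent and $\sigma$-subgaussian about their means, and $\abs{\what_i}\le M$ on $S_k\cap S$ by Condition~\ref{ass:1-sens}, so the centered sum $Z-\bE[Z]$ is subgaussian with parameter $\sigma\sqrt{\sum_{i\in S_k\cap S}\what_i^2}\le\sigma M\sqrt{\abs{S_k\cap S}}$.

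Finally I would combine this with the margin bound. For any $x\in\mathrm{dom}(\dist)$ with $\ip{\what}{x}\ge0$ we have $\ip{\what}{x}\le\cC$ by definition of $\cC$, so whenever $Z\le-\cC$ we get $\ip{\what}{x+\delta}=\ip{\what}{x}+Z\le 0$; it therefore suffices to show $\prob_\delta(Z>-\cC)\le e^{-\Gamma}$. Using the mean bound, $\{Z>-\cC\}\subseteq\{Z-\bE[Z]>\tau\gamma\abs{S_k\cap S}-\cC\}$, where the deviation $\tau\gamma\abs{S_k\cap S}-\cC$ is nonnegative exactly because of Condition~\ref{ass:3-margin}. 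A one-sided subgaussian tail bound with parameter $\sigma M\sqrt{\abs{S_k\cap S}}$ then gives $\prob_\delta(Z>-\cC)\le\exp\!\big(-(\tau\gamma\abs{S_k\cap S}-\cC)^2/(2\abs{S_k\cap S}\sigma^2M^2)\big)$, and pulling a factor $\abs{S_k\cap S}$ out of the square in the numerator turns the exponent into exactly $-\Gamma$. Taking complements yields the claim.

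The point requiring care is that the bound is driven by a nonzero shift mean rather than by pure concentration: Condition~\ref{ass:2-align} forces $\bE[Z]$ to be negative, Condition~\ref{ass:1-sens} makes $\abs{\bE[Z]}$ grow like $\tau\gamma\abs{S_k\cap S}$, and Condition~\ref{ass:3-margin} guarantees the classifier's positive margin $\cC$ is small enough that this negative drift actually pushes $\ip{\what}{x+\delta}$ below zero (equivalently, keeps the deviation in the tail bound nonnegative, so that the one-sided estimate is valid and nontrivial). One should also read ``$\sigma$-subgaussian'' as subgaussianity of $\delta_i-\bE[\delta_i]$; otherwise $\nu\equiv\mathbf 0$ and Condition~\ref{ass:2-align} is vacuous, so the whole mechanism collapses.
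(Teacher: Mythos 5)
Your proof is correct and follows essentially the same route as the paper's: both decompose $\ip{\what}{x+\delta}$ into the margin term plus $\sum_{i\in S_k\cap S}\what_i\delta_i$, bound the mean of that sum by $-\tau\gamma\abs{S_k\cap S}$ via Conditions~\ref{ass:1-sens} and~\ref{ass:2-align}, and apply a Chernoff/subgaussian tail bound with variance proxy $\sigma^2 M^2\abs{S_k\cap S}$, with Condition~\ref{ass:3-margin} guaranteeing the deviation $\tau\gamma\abs{S_k\cap S}-\cC$ is nonnegative. The only cosmetic difference is that you invoke the standard one-sided subgaussian tail directly where the paper writes out the MGF and optimizes over $\lambda$ explicitly; your closing remark about reading ``subgaussian with parameter $\sigma$'' as a condition on the centered $\delta_i$ matches the MGF bound the paper actually uses.
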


\looseness=-1\Cref{thm:main-thm} proves that for all~(positively) correctly classified points, the probability of misclassification under the~\Gls{ood} perturbation $\delta$ increases with $\Gamma$. In particular, \(\Gamma\) scales with the nuisance sensitivity \(\tau\) and nuisance density \(\abs{S_k\cup S}\). Our experiments later show that increasing data size, which leads to lower~\Gls{id} error, in fact increases nuisance density, which as~\Cref{thm:main-thm} suggest, leads to larger~\Gls{ood} error.

The proof of the theorem is based on applying a Chernoff-style bound on a weighted combination of \(k\) sub-gaussian random variables; see \Cref{app:proof} for the full proof. \Cref{thm:main-thm} captures a broad class of distribution shifts, including bounded and normal distributions. The results can also be extended to other shifts with bounded moments but we omit them here as they add more mathematical complexity without additional insights. In particular, under the conditions of~\Cref{thm:main-thm}, for some \(\sigma>0,\nu\in\reals^{d+k}\) where \(\nu\) satisfies A2, consider either:
    \begin{itemize}
        \item \textbf{Gaussian Shifts:} Each \(\delta_i\) for \(i \in S_k \cap S\) is independently distributed as \(\cN(\nu_i, \sigma^2)\), or
        \item \textbf{Bounded Shifts:} Each \(\delta_i\) for \(i \in S_k \cap S\) satisfies \(\abs{\delta_i - \nu_i} \leq \sqrt{3}\sigma\).
    \end{itemize}
    Then, the probability that \(\what\) misclassifies \(x\)  which satisfies~\eqref{ass:3-margin}  under the shift \(\delta\) is bounded by the same expression as in~\Cref{eq:thm1-err-bnd}.

\subsection{Understanding and relaxing conditions}
\label{sec:theory-assump}

We next argue why these conditions are merely abstractions of phenomena already observed in practice, as opposed to strong synthetic constraints absent in applications. In addition, we also show how some of these conditions can be significantly relaxed.

\textbf{Relaxing Condition~\eqref{ass:2-align} and~\eqref{ass:3-margin}.} Our setting is not restricted to only discussing samples from one class, balanced classes, or cases where all data points have small margins. In this section, we relax Condition~\eqref{ass:2-align} and \eqref{ass:3-margin} to allow for imbalanced classes and for some data points to have large margins. Condition~\ref{ass:3-margin} requires that for all data points that are positively classified, the margin of classification is bounded from above. Note that this is not a limitation of our result. A simple corollary~(\Cref{corr:imbalanced}) states the proportion of \(\dist\) for which this holds directly affects the proportion for which the~\Gls{ood} performance is poor. We use the notation \(\rho\) in~\Cref{ass:4-partalign} to characterise the fraction of the dataset classified positively (or negatively, whichever yields a higher \(\rho\)) by \(\what\) with a margin that is less than half of the maximum allowed margin.

Condition~\eqref{ass:2-align} requires that the distribution shift should not orthogonal to \(\what\). This is not a strict requirement,
as exact orthogonality of \(\nu\) with \(\hat{w}\) is a very unlikely
setting, and even slight misalignment will suffice for our result. Here, we show that if the shift distribution is a mixture of multiple components with a combination of positive and negative alignments, our result extends to that setting. Consider a new shift distribution \(\shift^\prime\), which
is a mixture of two shift distributions \(\shift_1\) and
\(\shift_{-1}\) with mixture coefficients \(c_1\) and \(c_{-1}\),
respectively. Now note that at least one of the two-component shift
distributions will likely satisfy Condition~\eqref{ass:2-align} with
\(\what\) or \(-\what\). Assume, \(\shift_1\) satisfies
condition~\eqref{ass:2-align} with \(\gamma_1\), and \(\shift_{-1}\)
satisfies the same condition by replacing \(\what\) with \(-\what\)
for \(\gamma_{-1}\). 

As shown in~\Cref{corr:imbalanced}, when the distribution becomes more class-imbalanced and a large fraction of data points have small margins and at least one of the distributions has a large negative alignment \(\gamma\), the parameter \(\rho\) increases, thereby increasing the probability of misclassification.

\begin{corollary}
\label{corr:imbalanced} Define \(S_d,S_k\), and \(\dist\) as in~\Cref{thm:main-thm} and \(\shift^\prime\) as described above. Consider any \(\what \in
\mathbb{R}^{d+k}\) with support \(S\) such that \(\what\) satisfies
Conditions~\ref{ass:1-sens} and \ref{ass:2-align}. Define
\begin{equation}\label{ass:4-partalign}\rho=\max_{\hat{y}\in\bc{-1,1}}\prob_{x\sim\dist}\bs{\bI\bc{\hat{y}\ip{\what}{x}\geq 0}\cdot \bI\bc{\hat{y}\ip{\what}{x}\leq \frac{\tau\gamma_{\hat{y}} \abs{S_k \cap S}}{2} }}.\tag{C4}\end{equation}
Then, we have
    \[
    \prob_{x,\delta}\br{\ip{\hat{w}}{x + \delta} \neq \ip{\what}{x}} \geq \rho_c \sum_{i\in \{-1, 1\}}c_{i}\br{1 - \exp\bc{-\frac{\abs{S_k \cap S}\tau^2\gamma_{i}^2}{8 \sigma^2 M^2}}}.
    \]
\end{corollary}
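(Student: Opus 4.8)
The plan is to deduce \Cref{corr:imbalanced} from \Cref{thm:main-thm} in two moves: condition on which component of the mixture $\shift^\prime$ was drawn, and for each component apply \Cref{thm:main-thm} to the side of $\what$'s decision boundary that that component can push points across, exploiting the fact that restricting to the small-margin slab sharpens the exponent $\Gamma$.

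First I would invoke the law of total probability. Since $\shift^\prime=c_1\shift_1+c_{-1}\shift_{-1}$,
\[
\prob_{x,\delta\sim\shift^\prime}\br{\ip{\what}{x+\delta}\neq\ip{\what}{x}}\;=\;\sum_{i\in\{-1,1\}}c_i\,\prob_{x,\delta\sim\shift_i}\br{\ip{\what}{x+\delta}\neq\ip{\what}{x}}.
\]
Here I read the event as a sign change of $\ip{\what}{\cdot}$, i.e.\ a flip of $\what$'s prediction, which is harmless up to the $\dist$-null set $\{\ip{\what}{x}=0\}$. It then suffices to bound the $\shift_i$-summand below by $\rho\br{1-\exp\bc{-\abs{S_k\cap S}\tau^2\gamma_i^2/(8\sigma^2M^2)}}$.

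Take the $\shift_1$ term. Condition~\eqref{ass:2-align} for $\shift_1$ gives $\sum_{i\in S_k\cap S}\what_i\,\nu^{(1)}_i\le-\gamma_1\norm{\what_{S_k\cap S}}_1<0$, so under $\shift_1$ the quantity $\ip{\what}{\delta}$ has strictly negative mean, hence the only points $\shift_1$ can flip are those with $\ip{\what}{x}>0$. I would therefore restrict to the slab $A_+=\bc{x:0\le\ip{\what}{x}\le\tau\gamma_1\abs{S_k\cap S}/2}$, whose $\dist$-mass is the quantity in~\eqref{ass:4-partalign} for $\hat{y}=1$ (and equals $\rho$ when $\hat{y}=1$ attains the maximum there). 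Conditioning $\dist$ on $A_+$ leaves it an $(S_d,S_k)$-disjoint signal distribution --- the separator $w$ supported on $S_d$ still has zero error --- while $\shift_1$ stays an $S_d$-oblivious coordinatewise-subgaussian shift, and on $\dist\mid A_+$ the classifier $\what$ satisfies Condition~\eqref{ass:3-margin} with $\cC_{A_+}:=\max_{x\in A_+}\ip{\what}{x}\le\tau\gamma_1\abs{S_k\cap S}/2$. \Cref{thm:main-thm} then gives $\prob_{\delta\sim\shift_1}\br{\ip{\what}{x+\delta}\le0}\ge1-e^{-\Gamma_+}$ for every $x\in A_+$, where, using $\cC_{A_+}/\abs{S_k\cap S}\le\tau\gamma_1/2$,
\[
\Gamma_+=\frac{\abs{S_k\cap S}\br{\tau\gamma_1-\cC_{A_+}/\abs{S_k\cap S}}^2}{2\sigma^2M^2}\;\ge\;\frac{\abs{S_k\cap S}\br{\tau\gamma_1/2}^2}{2\sigma^2M^2}=\frac{\abs{S_k\cap S}\tau^2\gamma_1^2}{8\sigma^2M^2}.
\]
Integrating over $x\in A_+$ bounds the $\shift_1$-summand below by $\prob_x\br{A_+}\br{1-\exp\bc{-\abs{S_k\cap S}\tau^2\gamma_1^2/(8\sigma^2M^2)}}$. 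For the $\shift_{-1}$-summand I would repeat everything with $\what$ replaced by $-\what$: Condition~\eqref{ass:1-sens} does not see the sign of $\what$, Condition~\eqref{ass:2-align} holds for $-\what$ with $\gamma_{-1}$ by hypothesis, the slab is now $\{x:0\le-\ip{\what}{x}\le\tau\gamma_{-1}\abs{S_k\cap S}/2\}$, and \Cref{thm:main-thm} applied to $-\what$ gives exponent $\abs{S_k\cap S}\tau^2\gamma_{-1}^2/(8\sigma^2M^2)$. Summing the two weighted contributions yields the claim.

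The hard part is this margin-halving step. \Cref{thm:main-thm} is stated assuming Condition~\eqref{ass:3-margin} holds for \emph{all} positively classified $x$, whereas the corollary only assumes~\eqref{ass:1-sens} and~\eqref{ass:2-align}; so one must check that the Chernoff argument behind \Cref{thm:main-thm} enters~\eqref{ass:3-margin} only through the single scalar $\cC$ --- so that passing to $\dist\mid A_\pm$ legitimately shrinks it to $\tau\gamma_i\abs{S_k\cap S}/2$ --- or else re-run that one-line Chernoff bound directly on the slab. The remaining work is bookkeeping: pairing each mixture component with the side of the boundary it threatens, confirming that the conditioned distribution still meets the hypotheses of \Cref{thm:main-thm}, and relating the single quantity $\rho$ of~\eqref{ass:4-partalign} --- a maximum over the two sign choices --- to the two (in general unequal) slab masses $\prob_x(A_+)$ and $\prob_x(A_-)$ that actually appear in the two summands.
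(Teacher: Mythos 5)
Your core mechanism is the same as the paper's: restrict attention to points whose margin is at most half of \(\tau\gamma\abs{S_k\cap S}\), note that \Cref{thm:main-thm} uses Condition~\eqref{ass:3-margin} only through the single scalar \(\cC=\max\ip{\what}{x}\), and conclude that on the restricted set one may take \(\cC\leq\tau\gamma\abs{S_k\cap S}/2\), which turns the exponent into \(\abs{S_k\cap S}\tau^2\gamma_i^2/(8\sigma^2M^2)\). The paper's proof (of the \(c\)-parametrised version, specialised to \(c=1/2\)) performs exactly this substitution, so the ``margin-halving step'' you single out as the hard part is fine and is not where the two arguments diverge.

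The divergence is in the order of the two decompositions, and it matters for the prefactor. The paper first conditions on a \emph{single} slab \(\cE\) --- the one attaining the maximum in~\eqref{ass:4-partalign}, of mass \(\rho\) --- and only then splits over the mixture, applying \Cref{thm:main-thm} to both \(\shift_1\) and \(\shift_{-1}\) on that same slab; this is what produces the common factor \(\rho\) multiplying the whole sum. You split over the mixture first and give each component its own slab (\(A_+\) for \(\shift_1\), \(A_-\) for \(\shift_{-1}\)), which yields \(c_1\prob_x\br{A_+}\br{1-e^{-\Gamma_1}}+c_{-1}\prob_x\br{A_-}\br{1-e^{-\Gamma_{-1}}}\) with \(\Gamma_i=\abs{S_k\cap S}\tau^2\gamma_i^2/(8\sigma^2M^2)\). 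Since \(\rho=\max\bc{\prob_x\br{A_+},\prob_x\br{A_-}}\), this is in general \emph{smaller} than the claimed \(\rho\sum_ic_i\br{1-e^{-\Gamma_i}}\) (take \(\prob_x\br{A_-}=0\) and \(c_{-1}>0\)): the final inequality you would need goes the wrong way. You flag ``relating \(\rho\) to the two slab masses'' as residual bookkeeping, but it is the actual obstruction --- your route proves a component-wise-weighted statement, not \Cref{corr:imbalanced} as written, and no amount of bookkeeping upgrades each slab mass to the maximum of the two inside a lower bound. I will add that your instinct to pair each mixture component with the side of the boundary it can actually push points across is the careful one: applying \Cref{thm:main-thm} with \(\what\) and shift \(\shift_{-1}\) on the positively classified slab, as the paper does, requires Condition~\eqref{ass:2-align} to hold for the pair \((\what,\shift_{-1})\), which the setup explicitly replaces by alignment with \(-\what\). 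So the mismatch you ran into reflects a real tension in the statement rather than a slip in your argument, but as a proof of the corollary as stated, the attempt is incomplete.
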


The above result shows how the~\Gls{ood} error adaptively depends on various properties of the learned classifier and shift distribution. It shows that the~\Gls{ood} error increases with the increase in the density of \(\what\) in the nuisance subspace,~\ie~\(\abs{S_k\cap S}\), as well as the ratio of the minimum and maximum spurious sensitivity \(\frac{\tau}{M}\), from Condition~\ref{ass:1-sens}. Second, the increase in the negative alignment \(\gamma\) increases the~\Gls{ood} error and it depends on which class has the worse parameters. Finally, we note that \(1-\rho\) upper bounds~\Gls{id} error. Therefore, while a larger \(\rho\) leads to a larger lower bound on the~\Gls{ood} error, it leads to a smaller upper bound on the~\Gls{id} error--- a reflection of the \awline behaviour.

\textbf{Understanding Condition~\eqref{ass:1-sens}.}
 Condition~\eqref{ass:1-sens} describes the condition that the learned
classifier has moderately large values in its support on the nuisance
subspace.  We provide an example in~\Cref{thm:A1-informal} to show that this naturally occurs when interpolating label noise. Consider a min-$\ell_2$-interpolator that solves the following optimization problem given a dataset $(X, Y)\in \reals^{n\times d}\times \reals^n$, 
    \[\min_{w\in \reals^d}\norm{w}_2\quad \mathrm{s.t.}~Xw = Y.\]

Consider a simple linear model with a single signal feature $Y= \xi \odot \ip{X}{w^\star}$ where $w^\star = \br{1, 0, \ldots, 0}$. Here, $X$ is a $d$-dimensional dataset of size \(n\) with signal feature $X_1\sim \cN(\upsilon\mathbf{1}_n, I_n)$ and the remaining \(d-1\) nuisance features $X_{2:d}\sim \cN(0, I_n)$.\footnote{Here, $\mathbf{1}_n$ denotes an all-ones vector of length $n$.} The label noise $\xi\in \reals^{n}$ follows the distribution $\pi$ and $\odot$ denotes the Hadamard product. When~\(\xi\) is a Bernoulli random variable on the set $\bc{-1, 1}$, it captures the setting of uniformly random label flip.

\begin{proposition}[Informal]\label{thm:A1-informal}
    If for $d = \Omega\br{\log n}$, some constant $C > 0$, and $\beta\in (0, 1)$, the noise distribution $\pi$ satisfies $\prob\bs{\norm{X_{2:d}^{+} \xi}_2> C}\geq 1-\beta$, where $A^+$ is the pseudo-inverse of $A$, then with probability at least $0.9-\beta$, the min-$\ell_2$-interpolator $\hat{w}$ on the noisy dataset $(X, Y)$ satisfies \[\norm{\hat{w}_{2:d}}_\infty =\Omega\br{C}.\]     
\end{proposition}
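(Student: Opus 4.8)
The plan is to write the min-$\ell_2$ interpolator in closed form and then argue that fitting the noisy labels forces at least one nuisance coordinate of $\hat{w}$ to have magnitude of order $C$. In the overparameterised (well-conditioned) regime the constraint set $\{w:Xw=Y\}$ is a nonempty affine subspace almost surely, so $\hat{w}=X^{\top}(XX^{\top})^{-1}Y$; equivalently, partitioning $w=(w_1,v)\in\reals\times\reals^{d-1}$ and optimising out the signal coefficient first, $\hat{w}_{2:d}=X_{2:d}^{+}\br{Y-\hat{a}X_1}$, where $\hat{a}$ is the minimiser of the one-dimensional least squares problem $a\mapsto a^2+\norm{X_{2:d}^{+}\br{Y-aX_1}}_2^2$. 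Since $\hat{w}_{2:d}$ lies in the row space of $X_{2:d}$, each nuisance coordinate can also be written as $\ip{X_i}{u}$ for $X_i\sim\cN(0,I_n)$ the $i$-th nuisance column and a vector $u\in\reals^n$; this representation will drive the final $\ell_\infty$ step.

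The second step surfaces the noise. Writing $g:=X_1-\upsilon\mathbf{1}_n\sim\cN(0,I_n)$, which is independent of $\xi$, we have $Y=\xi\odot X_1=\upsilon\,\xi+\xi\odot g$ and hence \[ \hat{w}_{2:d}\;=\;\upsilon\,X_{2:d}^{+}\xi\;+\;X_{2:d}^{+}\br{\xi\odot g-\hat{a}X_1}. \] The first term is precisely the quantity controlled by the hypothesis: on an event of probability at least $1-\beta$ its norm is at least $\upsilon C$. Two things remain: (i) $\hat{a}=O(1)$ with high probability, which follows from concentration of $\norm{X_1}_2^2$, $\ip{X_1}{Y}$ and the extreme singular values of the Gaussian matrix $X_{2:d}$ (these make the scalar least-squares coefficient well behaved); and (ii) the cross term $X_{2:d}^{+}\br{\xi\odot g-\hat{a}X_1}$ does not cancel $\upsilon X_{2:d}^{+}\xi$. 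For (ii) I would use that $g$ is mean zero and independent of $\xi$, and that inner products of the form $\ip{X_{2:d}^{+}a}{X_{2:d}^{+}b}=a^{\top}(X_{2:d}X_{2:d}^{\top})^{-1}b$ are small relative to $\norm{X_{2:d}^{+}\xi}_2^2$ once $X_{2:d}$ is well conditioned, yielding a lower bound of the right order on $\norm{\hat{w}_{2:d}}_2$ by a near-Pythagorean estimate.

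The final step upgrades the $\ell_2$ control to the stated bound on $\max_i\abs{\hat{w}_i}$ by a decoupling argument over the $d-1$ nuisance coordinates. For each $i$, a Sherman--Morrison / leave-one-out expansion replaces $u$ by a version $u^{(-i)}$ that omits the $i$-th nuisance column (and $\hat{a}$ by $\hat{a}^{(-i)}$), up to a correction that is small with high probability; since $u^{(-i)}$ is independent of $X_i$, the coordinate $\ip{X_i}{u^{(-i)}}$ is conditionally $\cN\br{0,\norm{u^{(-i)}}_2^2}$ with $\norm{u^{(-i)}}_2\approx\norm{u}_2$, and $\norm{u}_2$ is pinned down by the previous step. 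A union bound over the nuisance coordinates then produces one of size $\Omega\br{\norm{u}_2\sqrt{\log d}}=\Omega(C)$; this is the step that needs $d=\Omega(\log n)$, both so that the maximum is over enough coordinates and so that all the Gaussian-matrix concentration events — including the $d$ leave-one-out corrections — hold simultaneously with probability $1-o(1)$. Summing the failure probabilities ($\beta$ from the hypothesis and $o(1)$ from concentration) gives the claimed $0.9-\beta$.

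The main obstacle is the coupling: $\hat{a}$, the Gram matrix $(XX^{\top})^{-1}$, $X_1$, $X_{2:d}$ and $\xi$ are all entangled through the optimisation, so both step (ii) and the leave-one-out step require carefully peeling off the rank-one contribution of $X_1X_1^{\top}$ and of each individual nuisance column while preserving the conditional Gaussianity of the remaining columns, and — crucially — tracking constants so that the cancellation in (ii) is genuinely ruled out rather than merely shown to be unlikely in expectation.
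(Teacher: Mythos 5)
Your opening moves match the paper's: both start from the closed form $\hat{w}=X^{\top}(XX^{\top})^{-1}Y$, peel off the rank-one contribution of the signal column (the paper via the Sherman--Morrison-type identity applied to $XX^{\top}=X_1X_1^{\top}+X_{2:d}X_{2:d}^{\top}$, you via block elimination of the signal coefficient --- these are equivalent), isolate a main term of the form $X_{2:d}^{+}(\cdot)$ that the hypothesis on $\pi$ controls, and bound the correction using concentration of $\|X_1\|_2^2$ and of the spectrum of the Gaussian nuisance block. Up to that point the two arguments are essentially the same.

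The genuine gap is in your final $\ell_2\!\to\!\ell_\infty$ step, and it is quantitative, not cosmetic. Writing $\hat{w}_i=\ip{X_i}{u}$ with $u=(X_{2:d}X_{2:d}^{\top})^{-1}\xi$ (up to the corrections you describe), the hypothesis $\|X_{2:d}^{+}\xi\|_2\geq C$ controls $\|X_{2:d}^{\top}u\|_2$, not $\|u\|_2$. On exactly the concentration event you invoke ($X_{2:d}$ well conditioned, so $X_{2:d}X_{2:d}^{\top}\approx (d-1)I_n$), the nonzero singular values of $X_{2:d}^{\top}$ are all of order $\sqrt{d}$, hence $\|u\|_2\approx C/\sqrt{d}$, and your leave-one-out max-of-Gaussians bound yields $\max_i|\ip{X_i}{u^{(-i)}}|=\Theta\bigl(\|u\|_2\sqrt{\log d}\bigr)=\Theta\bigl(C\sqrt{\log d/d}\bigr)$, which is $o(C)$, not $\Omega(C)$. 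Your assertion ``$\Omega(\|u\|_2\sqrt{\log d})=\Omega(C)$'' silently identifies $\|u\|_2$ with $C$; they differ by a factor of order $\sqrt{d}$. The paper avoids this entirely because its formal hypothesis (Theorem A1 in the appendix) is the \emph{coordinatewise} bound $X_{2:d}^{+}\xi\geq C$, so every entry of the main term is already of size $C$ times an entry of $X_1$, and the $\ell_\infty$ bound follows from anti-concentration of a single standard Gaussian coordinate ($|X_{1i}|\geq 0.1$ for some $i$ with probability $0.92$); the $\|\cdot\|_2$ in the informal statement is a loose paraphrase of that entrywise condition. A secondary issue: your main term $\upsilon\,X_{2:d}^{+}\xi$ degenerates as $\upsilon\to 0$ while the advertised conclusion has no $\upsilon$ dependence; the paper keeps $X_{2:d}^{+}(\xi\odot X_1)$ intact and extracts the needed magnitude from the fluctuations of $X_1$ rather than only from its mean, so you would need to retain the $\xi\odot g$ contribution as part of the signal rather than relegating it to a cross term to be dominated.
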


While~\Cref{thm:A1-informal} only considers multiplicative label noise for simplicity of the analysis, similar results also hold for additive label noise models. \Cref{thm:A1-informal} also captures the properties of label noise that are sufficient to increase the sensitivity of the nuisance features. It suggests that a label noise distribution, whose (noisy) labels are nearly orthogonal to the nuisance subspace (indicated by large \(C\) and small \(\beta\)), induces small nuisance sensitivity. For example, a noiseless setting is equivalent to the noise distribution where $\prob_{\xi\sim \mu}\bs{\xi = \mathbf{1}_n} = 1$. Then, standard concentration bounds on \(X_{2:d}\) imply that \(C\) must be small while \(\beta\) must be large, leading to a vacuous bound. Therefore,~\Cref{thm:A1-informal} implies that the lower bound on nuisance sensitivity is much smaller in the noiseless setting.

Next, we verify these properties on a simple learning problem in~\Cref{sec:linear}. Our experiments show that increasing dataset size can naturally increase \(\abs{S_k \cap S}\) and \(\tau\), thereby leading to a larger \(\Gls{ood}\) error. Conversely, increasing the dataset size will also lead to a lower~\Gls{id} error. Together, this demonstrates an inverse correlation between~\Gls{id} and~\Gls{ood} error, displaying~\awline.

\section{Experimental ablation of sufficient conditions in linear setting}
\label{sec:linear}
\begin{figure}[t]
    \centering
    \begin{subfigure}[b]{0.06\linewidth}
    \includegraphics[width=1.0\linewidth]{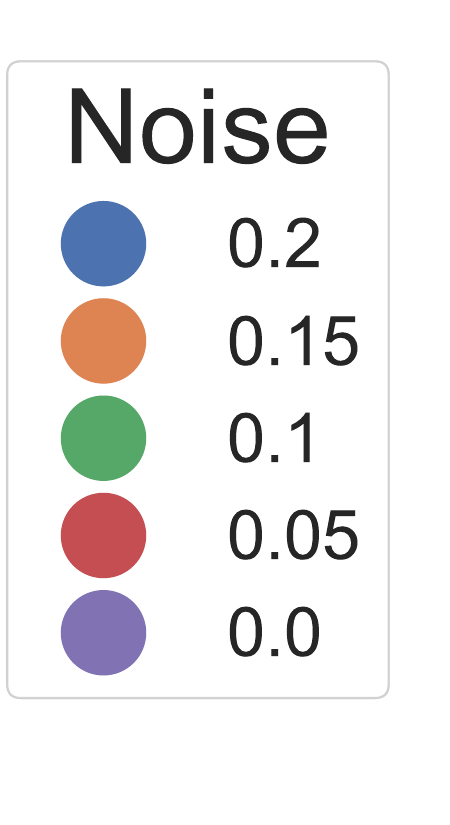}
    \end{subfigure}
    \begin{subfigure}[t]{0.32\linewidth}
    \begin{subfigure}[t]{0.49\linewidth}
    \includegraphics[width=1.\linewidth]{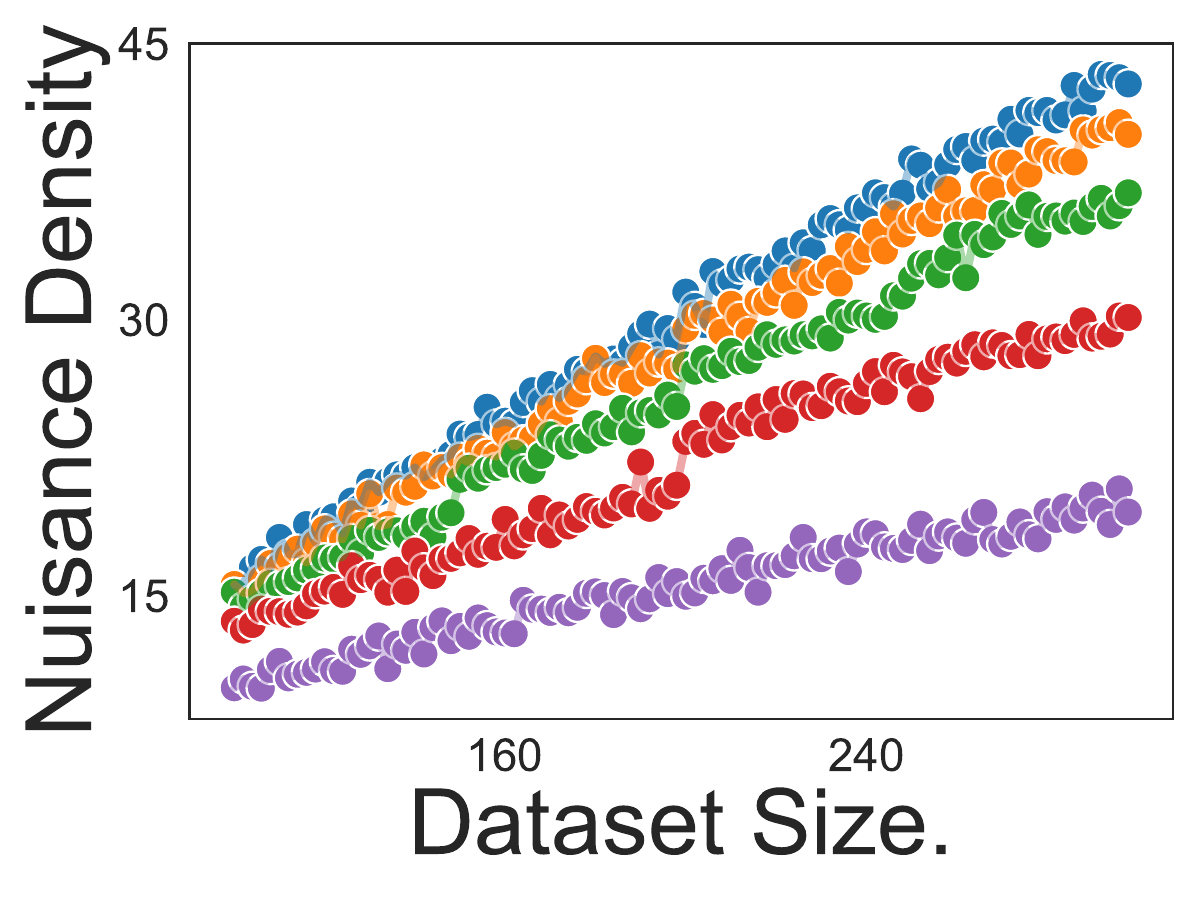}
    \end{subfigure}  
    \begin{subfigure}[t]{0.49\linewidth}  
    \includegraphics[width=1.\linewidth]{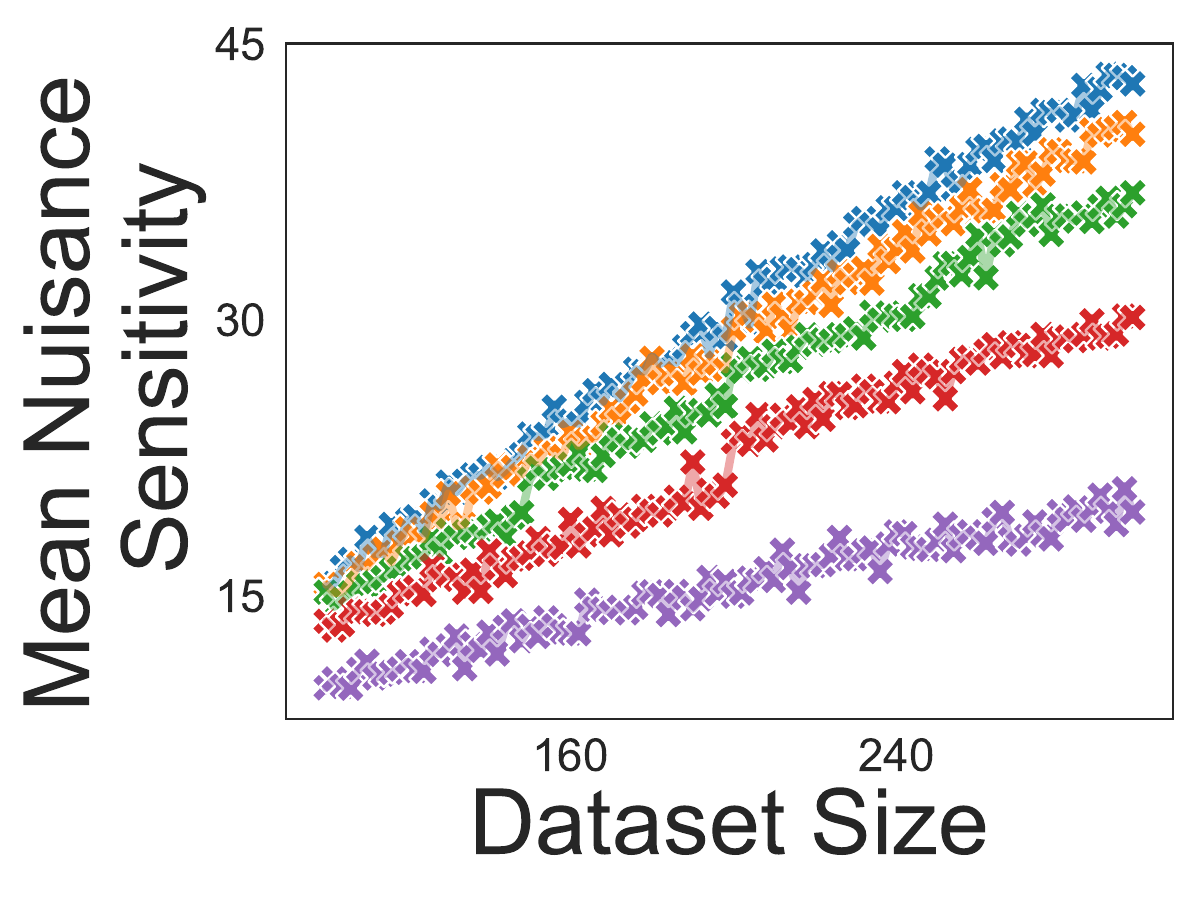}
    \end{subfigure}
    \caption{Nuisance Sensitivity}
    \label{fig:var-noise-sens}
    \end{subfigure}\qquad
     \begin{subfigure}[t]{0.32\linewidth}
    \begin{subfigure}[t]{0.49\linewidth}  
    \includegraphics[width=1.\linewidth]{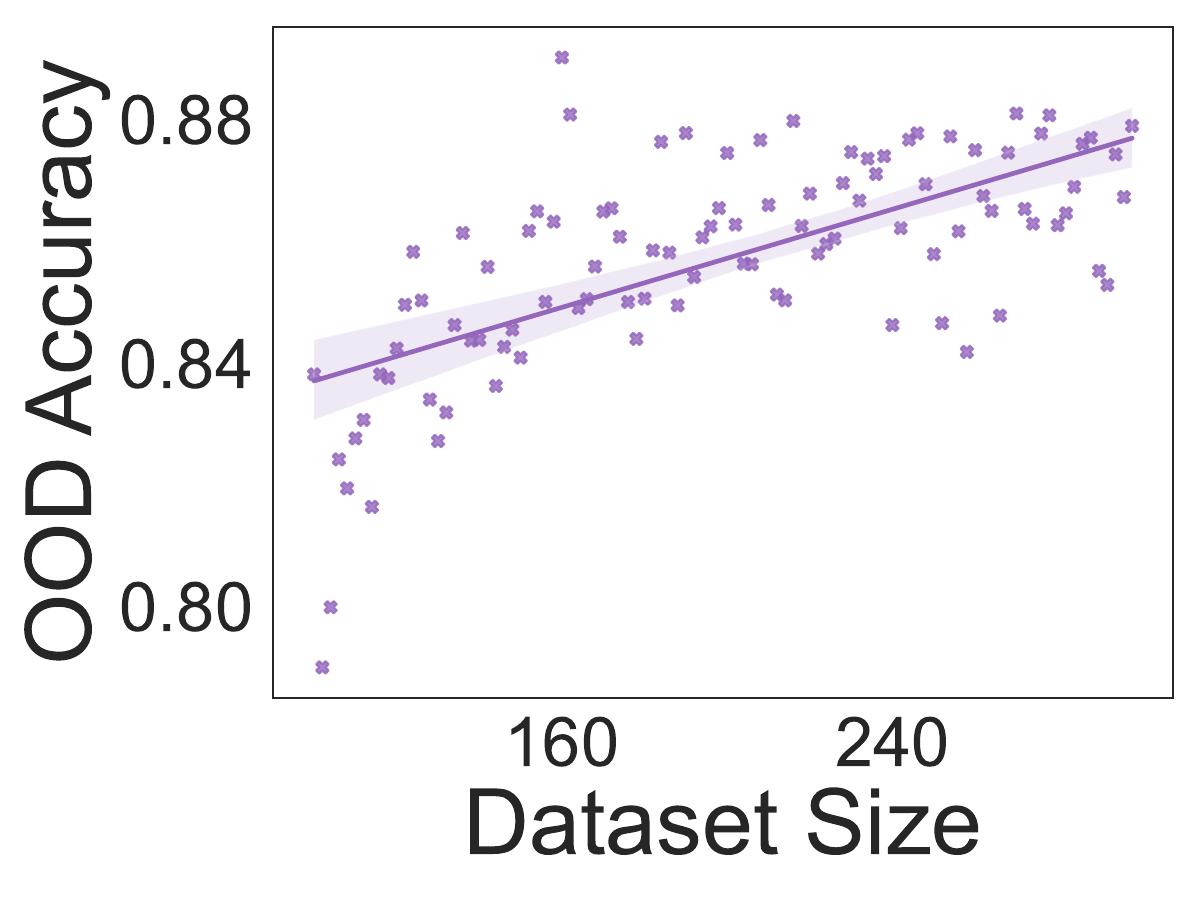}
    \end{subfigure}
    \begin{subfigure}[t]{0.49\linewidth}  
    \includegraphics[width=1.\linewidth]{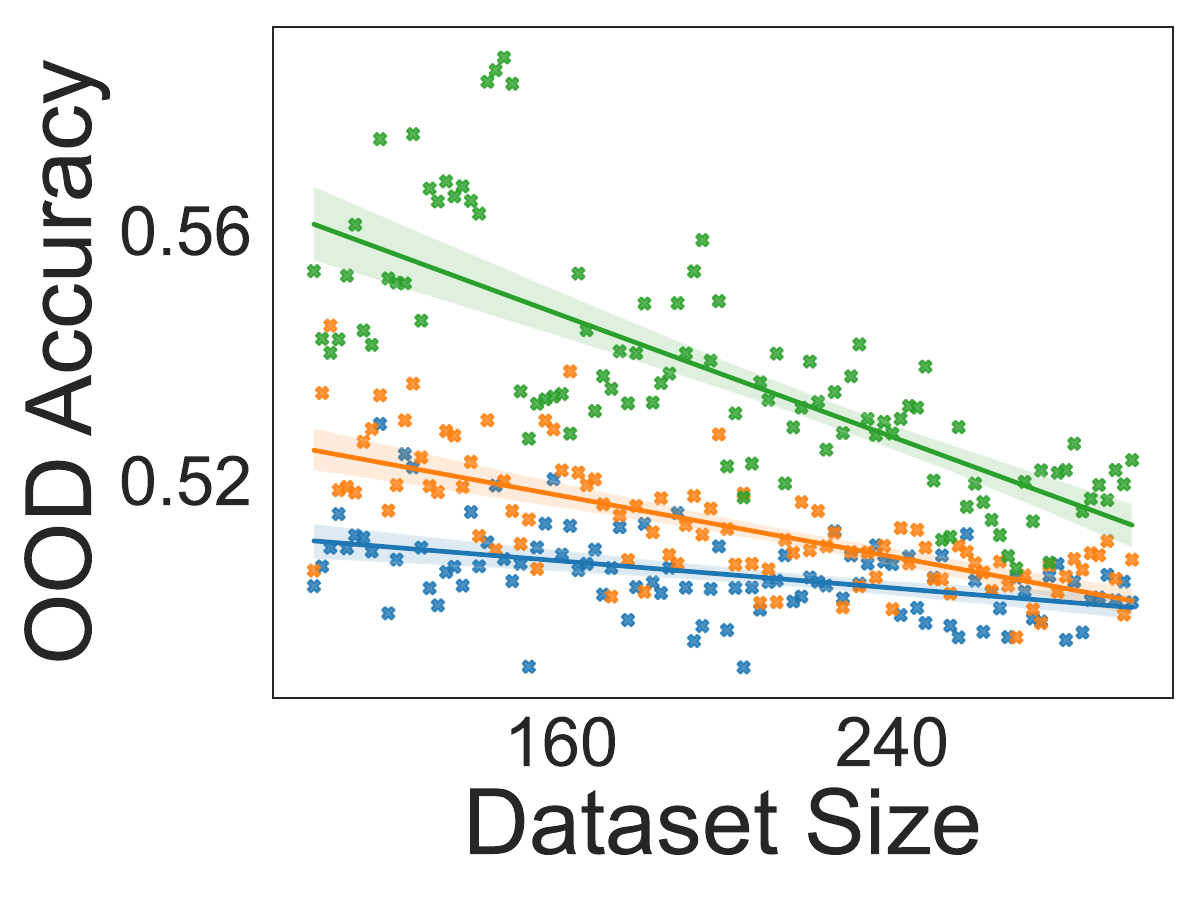}
    \end{subfigure}
    \caption{OOD Accuracy}
    \label{fig:var-noise-ood}
    \end{subfigure}\qquad
    \begin{subfigure}[t]{0.16\linewidth}  
    \includegraphics[width=1.\linewidth]{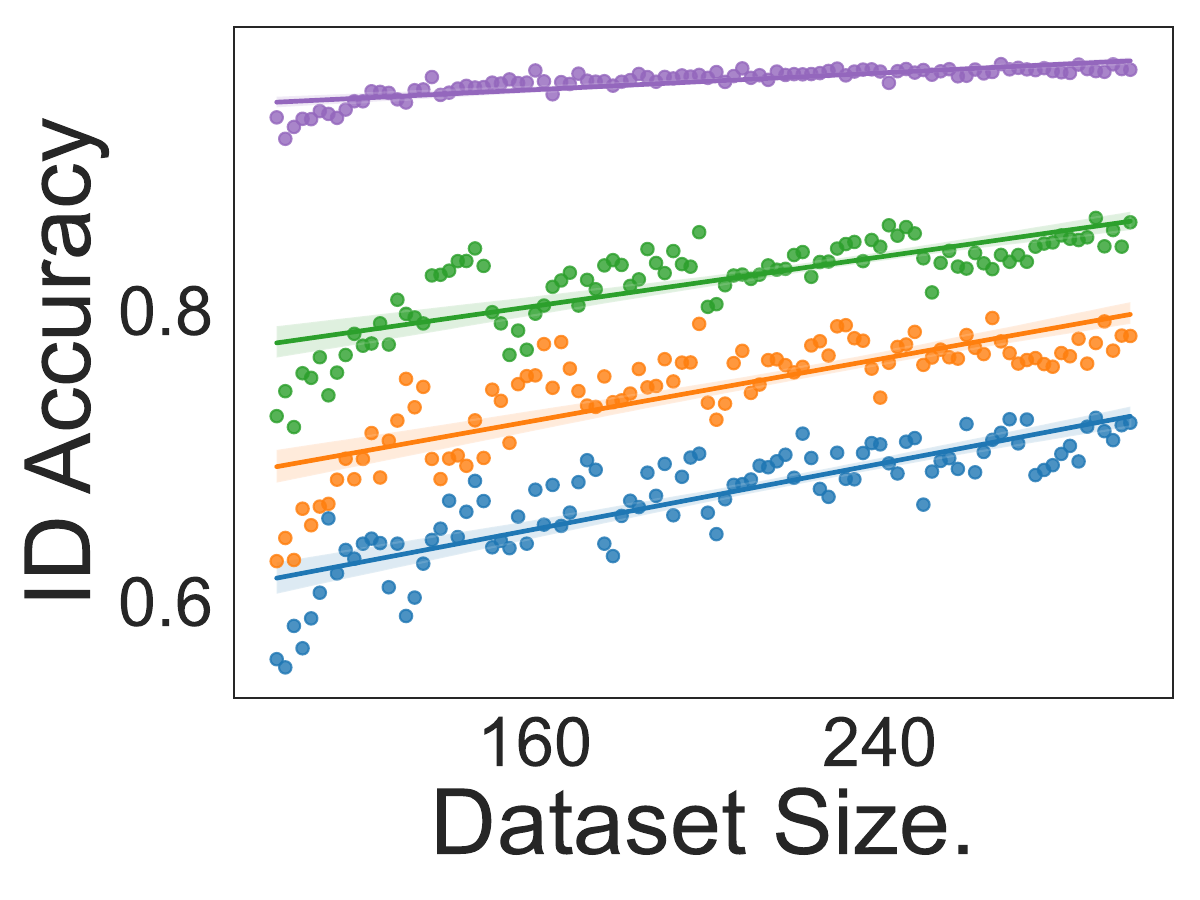}
    \caption{ID Accuracy}
    \label{fig:var-noise-id}
    \end{subfigure}
    \caption{\Cref{fig:var-noise-sens} shows as the amount of label noise increases, nuisance sensitivity as well as the nuisance density increases faster with larger dataset sizes. This leads to worse~\Gls{ood} accuracy as shown in~\Cref{fig:var-noise-ood}. However,~\Gls{id} accuracy still increases with dataset size as shown in~\Cref{fig:var-noise-id}.
    \label{fig:vary_noise}
    }
    
    \end{figure}
    
In this section, we conduct experimental simulations to corroborate
our theory by synthetically varying
conditions~\eqref{ass:1-sens},~\eqref{ass:2-align},
and~\eqref{ass:4-partalign} as well as label noise rate and dataset sizes.  The data distribution is \(300\)-dimensional with one signal feature and the remain nuisance features, a sparse setting often considered in the literature; See~\Cref{app:synthetic-exp-setting} for a detailed discussion of the data distribution. The default label noise rate is \(0.2\) unless otherwise mentioned and the default dataset size is \(300\) unless otherwise mentioned. We train a  \(\ell_1\)-penalised logistic regression classifier with coefficient \(0.1\) on varying dataset sizes. In short, our experiments show that all three conditions hold for this learned model in the presence of label noise and corroborates our theory regarding how these problem parameters affect the~\Gls{ood} and~\aline phenomenon.

    \begin{figure}\centering
    \begin{subfigure}[t]{0.19\linewidth}
    \includegraphics[width=1.\linewidth]{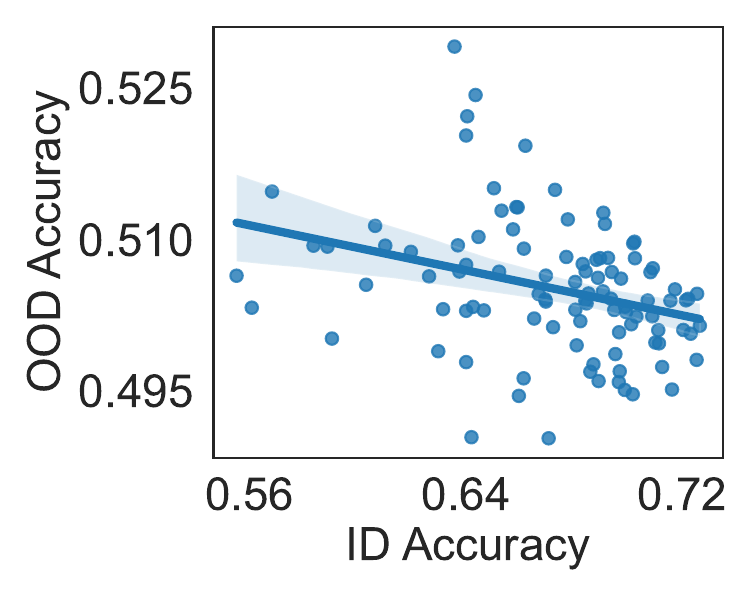}
    \caption*{\(\eta=0.2\)}
    \end{subfigure}
    \begin{subfigure}[t]{0.19\linewidth}
    \includegraphics[width=1.\linewidth]{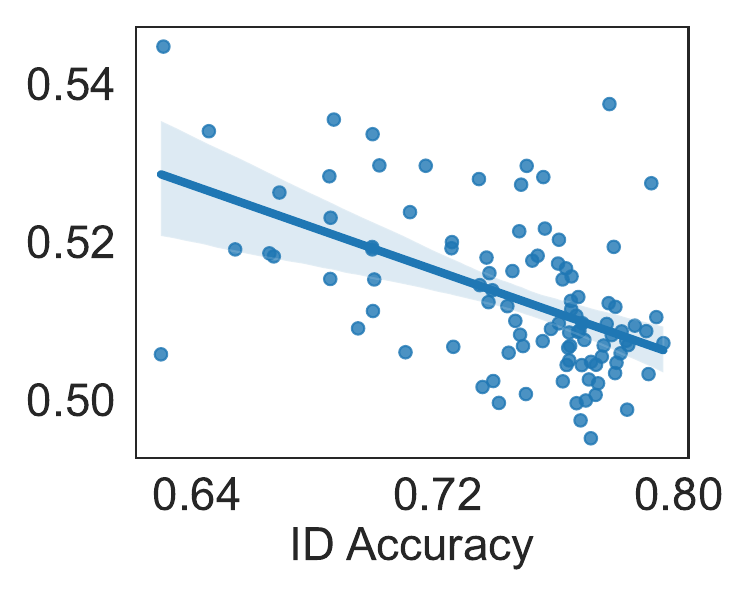}
    \caption*{\(\eta=0.15\)}
    \end{subfigure}
    \begin{subfigure}[t]{0.19\linewidth}
    \includegraphics[width=1.\linewidth]{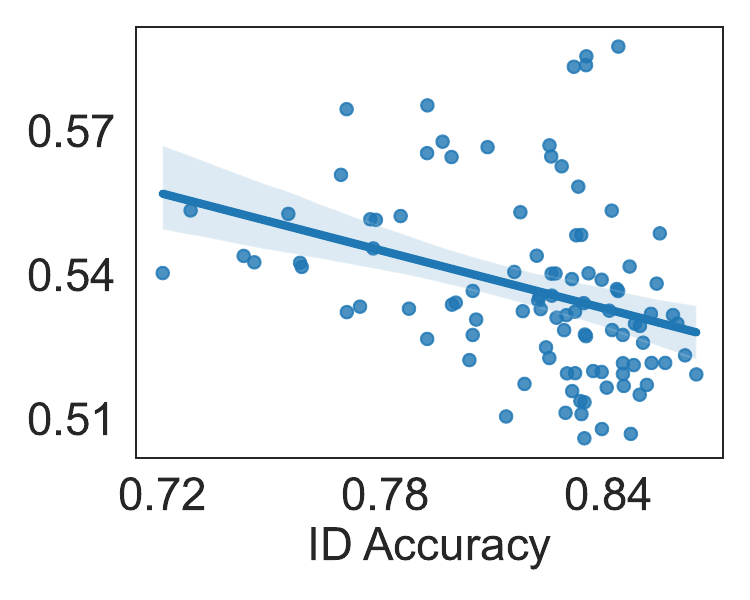}
    \caption*{\(\eta=0.1\)}
    \end{subfigure}
    \begin{subfigure}[t]{0.19\linewidth}
    \includegraphics[width=1.\linewidth]{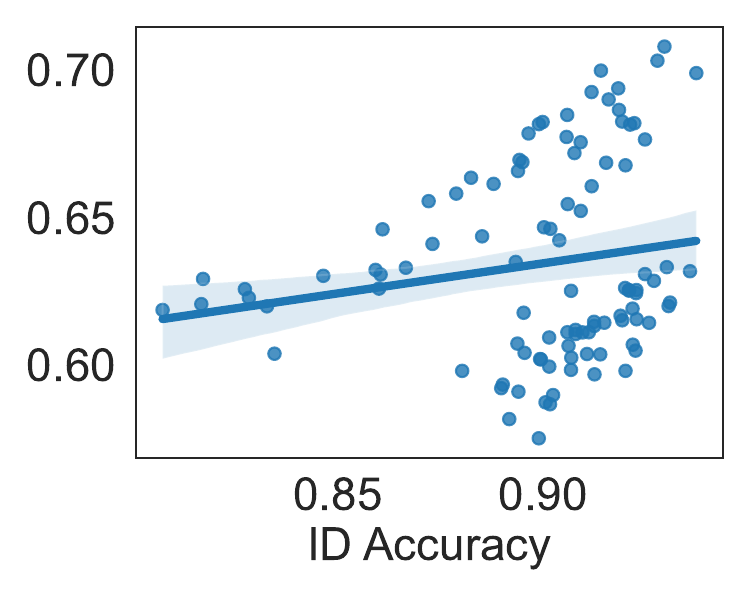}
    \caption*{\(\eta=0.05\)}
    \end{subfigure}
    \begin{subfigure}[t]{0.19\linewidth}
    \includegraphics[width=1.\linewidth]{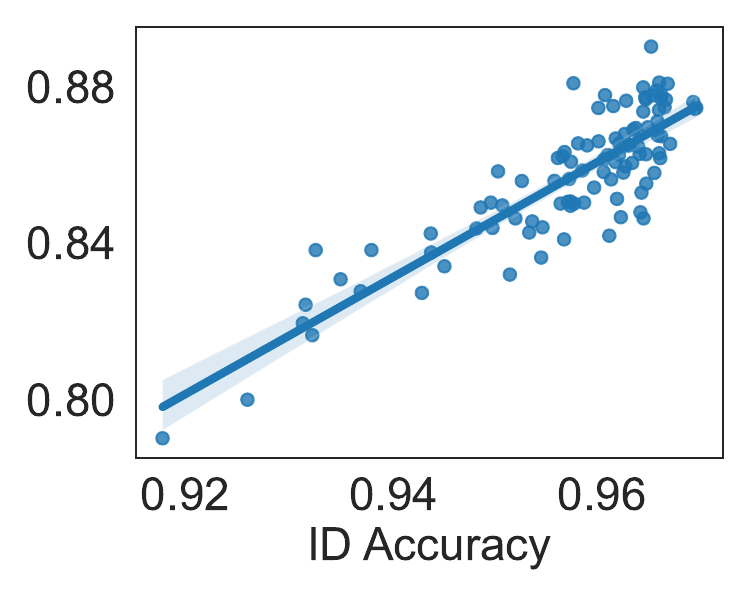}
    \caption*{\(\eta=0\)}
    \end{subfigure}
    \caption{\small \aline behaviour degrades increasing with increasing amount of label noise.}
    \label{fig:id-ood-vary-noise}
\end{figure}

\textbf{\eqref{ass:1-sens}: Spurious Sensitivity of the Learned Model.} We begin by examining the sensitivity of the learned model \(\what\) in the nuisance subspace. Condition~\eqref{ass:1-sens} states that the non-zero components are bounded both from above and below. While regularisation naturally imposes the upper bound, the lower bound is less common. A key contribution of this work is the demonstration that this occurs under \emph{noisy interpolation},~\ie, models that achieve zero training error in the presence of label noise. Intuitively, when some labels in the training dataset are noisy, the signal subspace cannot be used to ``memorise'' them. Consequently, covariates in the nuisance subspace are necessary to memorise these labels, thereby increasing the magnitude of these covariates. As the amount of label noise increases, more covariates in the nuisance subspace exhibit this behaviour. We corroborate this intuition using \(\ell_1\)-penalised logistic regression in experimental simulations, as shown in~\Cref{fig:var-noise-sens}.

\Cref{fig:var-noise-sens} illustrates that, with higher levels of label noise, the nuisance density and mean nuisance sensitivity increase more rapidly as the dataset size grows. ~\Cref{thm:main-thm} predicts that an increase in nuisance sensitivity leads to poorer~\Gls{ood} accuracy, which is confirmed in~\Cref{fig:var-noise-ood}~(center). However,~\Cref{fig:var-noise-ood}~(left) demonstrates that this behaviour is not observed in the absence of label noise; \Gls{ood} accuracy still improves with an increasing dataset size. ~\Cref{fig:var-noise-id} reveals that~\Gls{id} accuracy increases with larger datasets, thereby creating a distinction between the behaviour of~\Gls{id} and~\Gls{ood} accuracy in the presence of label noise. This distinction underpins the central observation of our paper, as illustrated in~\Cref{fig:id-ood-vary-noise}. For \(\eta=0\) (no label noise),~\Gls{id} and~\Gls{ood} accuracy are linearly correlated as noted in several prior studies~\citep{miller2021accuracy}. Conversely, as \(\eta\) increases, the two accuracies become (nearly) inversely correlated, resulting in the~\awline behaviour.

\begin{figure}[t]\centering
    \begin{subfigure}[t]{0.25\linewidth}
    \includegraphics[width=1.\linewidth]{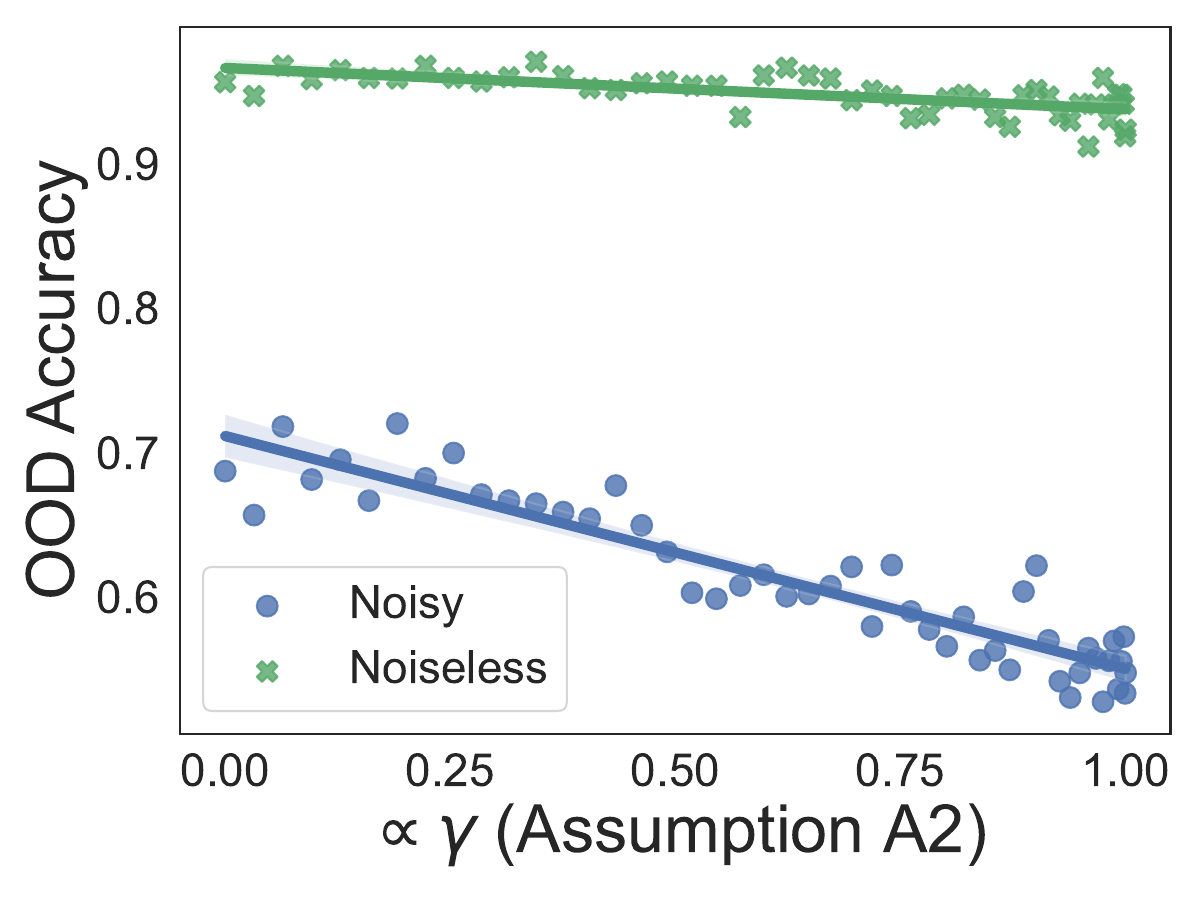}
    \caption{OOD Accuracy vs \(\gamma\)}
    \label{fig:ood-acc-noisy-gamma}
    \end{subfigure}\qquad
    \centering
    \begin{subfigure}[t]{0.25\linewidth}\centering
    \includegraphics[width=1.\linewidth]{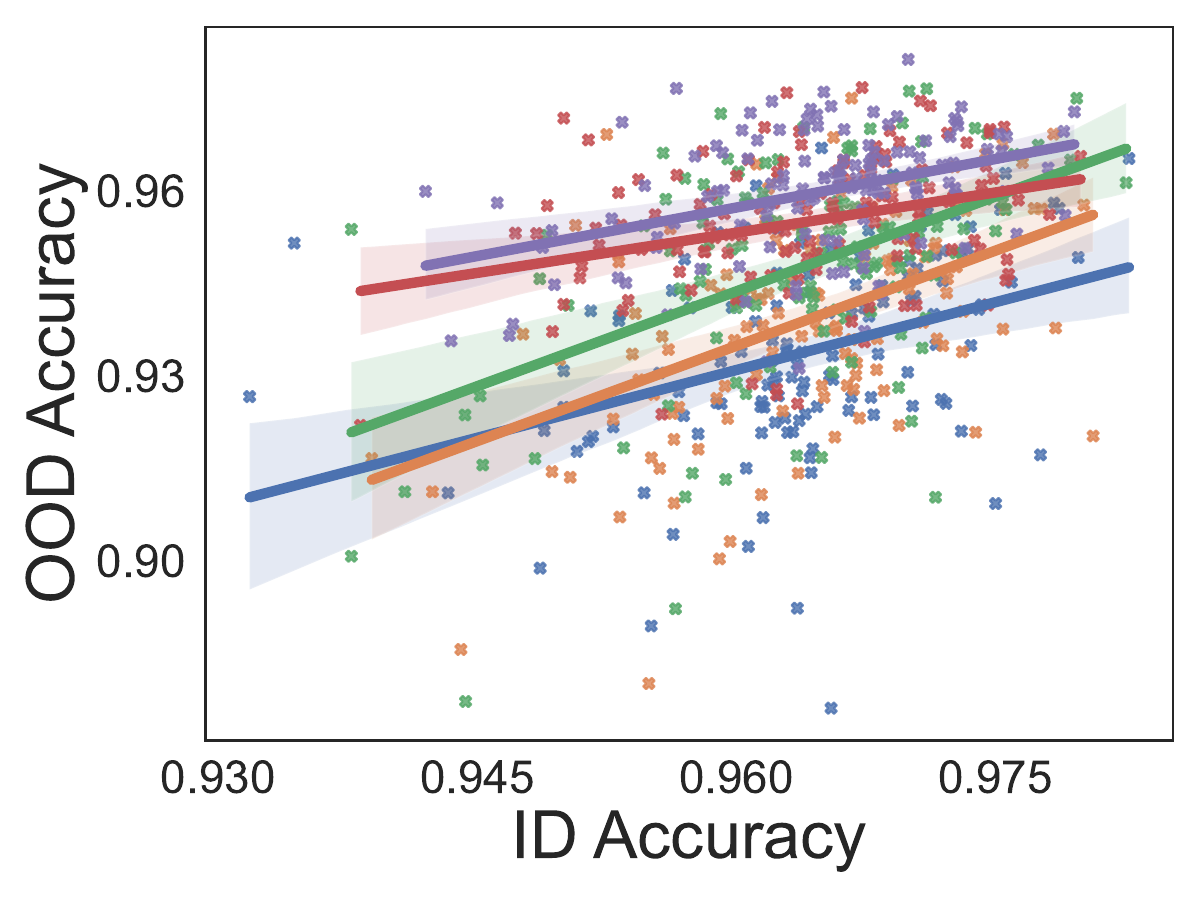}
    \caption{Noiseless}
    \label{fig:id-ood-gamma-noiseless}
    \end{subfigure}
    \begin{subfigure}[t]{0.25\linewidth}\centering
    \includegraphics[width=1.\linewidth]{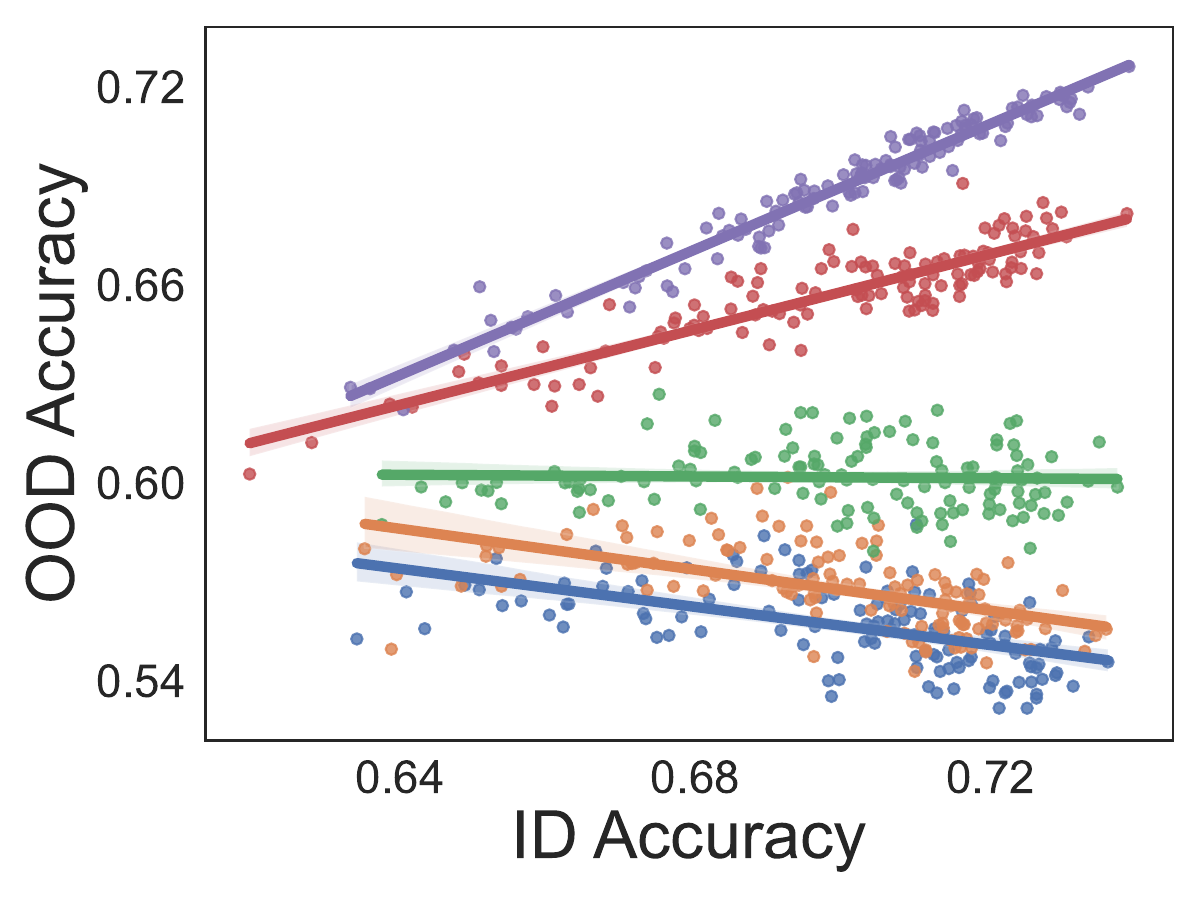}
    \caption{Noisy}
    \label{fig:id-ood-gamma-noisy}
    \end{subfigure}
    \begin{subfigure}[b]{0.14\linewidth}\centering
    \includegraphics[width=1.\linewidth]{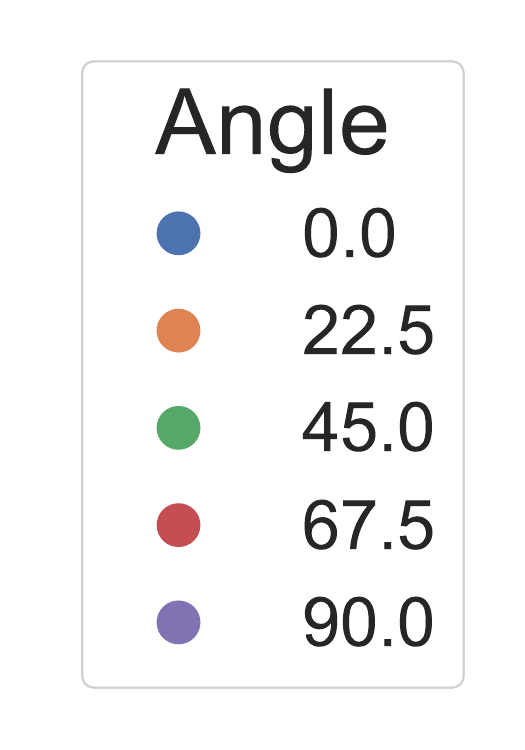}
    \end{subfigure}
    \caption{\small ~\Cref{fig:ood-acc-noisy-gamma} shows that as \(\gamma\) increases, the~\Gls{ood} accuracy decreases for the noisy setting but remains relatively unchanged for the noiseless setting. Figure~\ref{fig:id-ood-gamma-noiseless} shows that a large or small angle~(\(\cos\br{\mathrm{Angle}}=\gamma\)) does not have any impact on breaking the~\aline phenomenon. On the other hand, when angle is close to \(\ang{0}\)~\ie \(\what\) and \(\nu\) in~\ref{ass:2-align} are fully mis-aligned, the accuracy-on-the-wrong line behaviour is the strongest. The phenomenon slowly transforms to~\aline as the angle approaches \(\ang{90}\) degrees.}
\end{figure}

\textbf{\eqref{ass:2-align}: Mis-alignment of Learned Model with Shift distribution.} The next condition on \(\what\) requires that the mean \(\nu\) of the shift distribution \(\shift\) is misaligned. Mathematically, this requires that the dot product \(\langle \what, \nu \rangle \) is negative. Intuitively, this ensures that the term \(\langle \what, \delta \rangle\), where \(\delta \sim \shift\), is sufficiently negative to flip the decision of the classifier. In~\Cref{thm:main-thm}, the alignment is measured using the quantity \(\gamma\). In~\Cref{fig:ood-acc-noisy-gamma}, we synthetically vary \(\gamma\)~(represented by the angle between \(\nu\) and \(-\what\)) by controlling the projection of \(\nu\) on \(-\what\) and evaluate the~\Gls{ood} accuracy in the presence and absence of noise, respectively. The simulation shows that with increasing alignment between \(\nu\) and \(-\what\),~\Gls{ood} accuracy decreases sharply for the noisy setting but remains relatively stable for the noiseless setting.

In~\Cref{fig:id-ood-gamma-noiseless,fig:id-ood-gamma-noisy}, we show how the~\aline behaviour is affected by changing the alignment \(\gamma\). For the noiseless setting, irrespective of \(\gamma\),~\Gls{ood} accuracy remains positively correlated with~\Gls{id} accuracy. However, for the noisy setting, when the alignment is high (i.e., the angle is less than \(45^\circ\)), we observe that~\Gls{ood} accuracy is inversely correlated with~\Gls{id} accuracy, but they become more positively correlated as the angle increases. This highlights the necessity of our second condition, which requires a misalignment between the shift and the learned parameters. This also highlights that perfect misalignment is not strictly necessary for breaking~\aline.

\textbf{\eqref{ass:4-partalign}: Significant fraction of points have
low margin.} The final property of the learned classifier necessary
for the~\awline phenomenon is that a significant portion of the
distribution, correctly classified by \(\what\), has a small margin.
This is captured in~\ref{ass:4-partalign}. \Cref{corr:imbalanced}
suggests that the probability mass of points under distribution
\(\dist\) whose margin \(\langle \what, x \rangle\) is less than
\(\gamma \tau |S_k \cap S| \approx \gamma \|\what_{S_k \cap S}\|_1\)
is roughly equal to the probability mass of points under \(\dist\)
that are vulnerable to misclassification under the distribution shift.
Practically, a point classified correctly with a large margin is
likely robust to distribution shifts. However, it is typically the
case that not all points are classified with an equally large margin,
and some points are closer to the margin than others. %
We highlight that as
long as this is true,~\awline will continue to hold.

To validate this experimentally, we consider the distribution of~\Gls{id} margin \(\langle \what, x \rangle\) for \(x \sim \dist\) and plot its CDF in~\Cref{fig:margin-dist} (blue line). Then, we measure the term \(\gamma \tau |S_k \cap S|\) and plot it as a vertical red dashed line. The intersection of this red line with the CDF (blue) represents the probability mass of points under \(\dist\) whose margin is \emph{sufficiently small} to be vulnerable to the distribution shift. We plot the empirical~\Gls{ood} error for this model using the horizontal green dashed line and repeat this experiment for multiple dataset sizes, each represented in one box in~\Cref{fig:margin-dist}. Our simulations clearly show that the theoretically predicted quantity closely approximates the true~\Gls{ood} error.

\begin{figure}[!htb]
    \centering
    \includegraphics[width=1\linewidth]{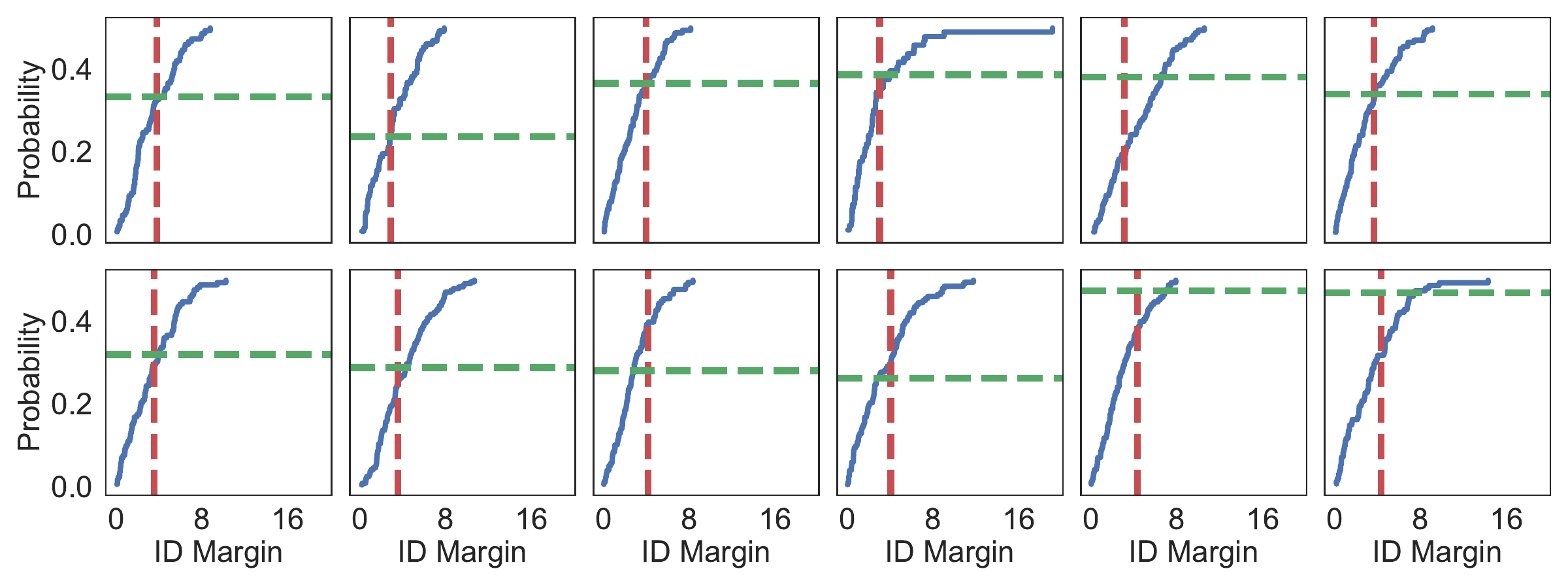}
    \caption{\small Blue line is the CDF of the \Gls{id} margin \ie \(\ip{\what}{x}\) for points \(x\) classified positively by \(\what\). The red vertical line indicates the theoretical quantity in the RHS of Condition~\eqref{ass:4-partalign}~\ie proportional to \(\tau\gamma\abs{S_k\cap S}\). The green line shows the \Gls{ood} error. Matching the result of~\Cref{corr:imbalanced}, this simulation shows that the fraction of positively classified points whose margin is less than \(\tau\gamma\abs{S_k\cap S}\) is very close to the true~\Gls{ood} error of that model. Each plot here represents a different run on a different-sized dataset.}
    \label{fig:margin-dist}\vspace{-10pt}
\end{figure}

\section{Implications and conclusion}
\label{sec:concl}
To summarise, our work argues that interpolation of label noise and presence of nuisance features break the otherwise positively correlated relationship between~\Gls{id} and~\Gls{ood} accuracy. In support of this argument, we provide experimental evidence with realistic datasets, theoretical results to isolate the sufficient conditions, and synthetic simulation to corroborate the theoretical assumptions. 

\noindent\textbf{Future Work} This raises several questions about the widely prevalent practice of preferring large but noisy datasets over smaller cleaner datasets. We hope future work will work towards striking the right balance between size and quality of datasets, keeping in mind their impact on trustworthiness metrics like~\Gls{ood} accuracy.~\Cref{thm:A1-informal} shows a sufficient condition for when label noise can increase the nuisance sensitivity; however this result is restricted to \(\min~\ell_2\) interpolators. It is an interesting question to consider what label noise models~(\eg uniform label flip) and inductive biases of the learning algorithm~(\eg~\(\min~\ell_p\))~\citep{aerni2023strong,ben2024role} can aggravate or mitigate this phenomenon. It is also interesting to investigate what other factors~(\eg spurious correlation alone~\citep{teney2023}) can lead to similar behaviours. Finally, other works have proposed approaches to mitigate memorisation of label noise~\citep{zhang2018mixup,Sanyal2020Stable}, selectively learning signal features~\citep{arjovsky2019invariant,parascandolo2021learning}, unlearning the effect of manipulated data~\citep{goel2024corrective,tanno2022repairing}, and improving robustness towards adversarial corruptions~\citep{madry2018towards,sinha2018certifiable}. The possible impact of these techniques on preventing~\awline is an interesting line of future work.

\section*{Acknowledgements}

YW is supported in part by the Office of Naval Research under grant number N00014-23-1-2590 and the National Science Foundation under grant number 2231174 and 2310831.
YY is supported by the joint Simons Foundation-NSF DMS grant \#2031899. 
YM is supported by the NSF awards: SCALE MoDL-2134209, CCF-2112665 (TILOS), as well as the DARPA AIE program, the U.S. Department of Energy, Office of Science, and CDC-RFA-FT-23-0069 from the CDC’s Center for Forecasting and Outbreak Analytics.

\bibliographystyle{plainnat}
\bibliography{acc_line}

\clearpage
\appendix
\section{Proofs}
\label{app:proof}
In this section, we provide the proofs of~\Cref{thm:main-thm} and~\Cref{corr:imbalanced}. Then, we state and prove~\Cref{thm:A1}, which is the full version of~\Cref{thm:A1-informal}.

\mainthm*
\begin{proof}
WLOG consider \(x\in\mathrm{dom}\br{\dist}\) such that
\(\ip{\what}{x}\geq 0\). Then, define the event \( E := \bc{\delta \in
\mathbb{R}^{d+k} : \ip{\what}{x + \delta} \leq 0} \) for which we need
to bound \(\prob\bs{E}\). Decomposing the dot product affected by the
shift, \(\ip{\what}{x + \delta} = \ip{\what}{x} +
\ip{\what}{\delta}\).  Given \(\shift\) is \(S_d\)-oblivious,
\(\delta\) contributes only from the coordinates in \(S_k \cap S\).
Then, we can simplify the probability as

\begin{equation}\label{ineq:prob-defn-event}
\begin{aligned}
\prob\bs{E} =\prob_{\delta}\bs{\ip{\hat{w}}{x + \delta} \leq 0}&=\prob_{\delta}\bs{\sum_{i \in S_k \cap S} \hat{w}_i \delta_i \leq -\ip{\what}{x}}\\
&= 1 - \prob_{\delta}\bs{\sum_{i \in S_k \cap S} \hat{w}_i \delta_i \geq -\cC}\\
&\geq 1 - \inf_{\lambda\geq 0}e^{\lambda\cC}~\bE\bs{e^{\lambda\sum_{i \in S_k \cap S} \hat{w}_i \delta_i }}
\end{aligned}
\end{equation}

As each \(\delta_i\) is independently distributed with subgaussian parameter \(\sigma\) and mean \(\nu_i\), by the properties of subgaussian random variables, for \(\lambda \geq 0\), the moment generating function (MGF) of \(\delta_i\) yields

\[
    \bE\bs{e^{\lambda \delta_i}} \leq e^{\lambda \nu_i + \frac{\lambda^2 \sigma^2}{2}}
\]

Subsituting this into the Chernoff bound in~\Cref{ineq:prob-defn-event}, we obtain

\begin{equation}\label{subst-mgf-into-chernoff}
\begin{aligned}    
\prob\bs{E}&\geq 1 - \inf_{\lambda\geq 0}e^{\lambda\cC+\lambda\ip{\what}{\nu}+\lambda^2\sum_{i\in S_k\cap S}\nicefrac{\what_i^2 \sigma^2}{2}}\\
&\geq 1 - \inf_{\lambda\geq 0}e^{\lambda\cC+\abs{S_k\cap S}\br{-\lambda\tau\gamma+\lambda^2 M^2\sigma^2 }}
\end{aligned}
\end{equation}
where the last inequality uses Assumptions A1 and A2. Solving the above optimisation to obtain the optimal lambda yields
\[\lambda = \frac{\tau\gamma\abs{S_k\cap S} - \cC}{\sigma^2M^2}\]

Note that assumption A3 ensures that this term is positive. Substituting this back into~\Cref{subst-mgf-into-chernoff} and simplifying the resultant expression yields the following probability bound

\[
\prob\br{\ip{\hat{w}}{x + \delta} \leq 0} \geq 1 - \exp\bc{-\frac{\abs{S_k \cap S}\br{\tau\gamma - \nicefrac{\cC}{\abs{S_k\cap S}}}^2 }{2 \sigma^2 M^2}}.
\]

\end{proof}

Here, we provide a full version of~\Cref{corr:imbalanced}. Specifically,~\Cref{corr:imbalanced} is a special case of~\Cref{corr:imbalanced-full} with $c = 1/2$. 
\begin{restatable}{corollary}{CorrImbalanced}\label{corr:imbalanced-full} Define \(S_d,S_k\), and \(\dist\) as in~\Cref{thm:main-thm} and \(\shift^\prime\) as described above. Consider any \(\what \in
\mathbb{R}^{d+k}\) with support \(S\) such that \(\what\) satisfies
Conditions~\ref{ass:1-sens} and \ref{ass:2-align}. For any
\(0\leq c\leq 1\) define
\begin{equation}\label{ass:4-partalign}\rho_c=\max_{\hat{y}\in\bc{-1,1}}\prob_{x\sim\dist}\bs{\bI\bc{\hat{y}\ip{\what}{x}\geq 0}\cdot \bI\bc{\hat{y}\ip{\what}{x}\leq c \tau\gamma_{\hat{y}} \abs{S_k \cap S}}}.\tag{C4}\end{equation}
Then, we have
    \[
    \prob_{x,\delta}\br{\ip{\hat{w}}{x + \delta} \neq \ip{\what}{x}} \geq \rho_c \sum_{i\in \{-1, 1\}}c_{i}\br{1 - \exp\bc{-\frac{\abs{S_k \cap S}\tau^2\gamma_{i}^2\br{ 1- c}^2 }{2 \sigma^2 M^2}}}.
    \]
\end{restatable}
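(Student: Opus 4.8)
The plan is to reuse the Chernoff / sub-Gaussian argument from the proof of \Cref{thm:main-thm} almost verbatim, applying it separately to each of the two mixture components of \(\shift^\prime\), to both sign classes, and with the uniform margin hypothesis \eqref{ass:3-margin} weakened to the per-point bound \(\hat y\ip{\what}{x}\le c\,\tau\gamma_{\hat y}\abs{S_k\cap S}\) that appears inside the definition of \(\rho_c\). The factor \(c<1\) is what eventually produces the \((1-c)^2\) in the exponent: it turns the effective drift from \(\gamma_i\tau\abs{S_k\cap S}\) into \((1-c)\gamma_i\tau\abs{S_k\cap S}\).

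First I would condition on the mixture component: since a draw \(\delta\sim\shift^\prime\) is from \(\shift_1\) with probability \(c_1\) and from \(\shift_{-1}\) with probability \(c_{-1}\),
\[
\prob_{x,\delta\sim\shift^\prime}\br{\ip{\what}{x+\delta}\neq\ip{\what}{x}}=\sum_{i\in\{-1,1\}}c_i\,\prob_{x,\delta\sim\shift_i}\br{\ip{\what}{x+\delta}\neq\ip{\what}{x}},
\]
where I read \(\ip{\what}{x+\delta}\neq\ip{\what}{x}\) as a flip of the predicted sign. Then I would determine which sign class each component can flip from the sign of its drift along \(\what\). Because \(\shift_1\) satisfies \eqref{ass:2-align} against \(\what\), \(\ip{\what}{\nu_1}\le-\gamma_1\norm{\what_{S_k\cap S}}_1\le-\gamma_1\tau\abs{S_k\cap S}<0\), so adding \(\delta\sim\shift_1\) pushes \(\ip{\what}{x}\) down and can only flip the \(+1\)-classified points, namely those with \(0\le\ip{\what}{x}\le c\,\tau\gamma_1\abs{S_k\cap S}\); symmetrically \(\shift_{-1}\) satisfies \eqref{ass:2-align} against \(-\what\), so \(\ip{\what}{\nu_{-1}}\ge\gamma_{-1}\tau\abs{S_k\cap S}>0\) and it flips the \(-1\)-classified points with \(-c\,\tau\gamma_{-1}\abs{S_k\cap S}\le\ip{\what}{x}\le0\). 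The \(\dist\)-mass of each of these two sets is one of the two arguments of the \(\max\) defining \(\rho_c\).

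Next, on each such small-margin set I would re-run the argument of \Cref{thm:main-thm}, the only change being that the quantity \(\cC=\max_{\ip{\what}{x}\ge0}\ip{\what}{x}\) there is now controlled by \(c\,\tau\gamma_i\abs{S_k\cap S}\) rather than by \(\tau\gamma_i\abs{S_k\cap S}\). Concretely, for \(x\) in the set the Markov/MGF step gives
\[
\prob_\delta\br{\text{predicted sign of }\ip{\what}{x+\delta}\text{ differs from that of }\ip{\what}{x}}\ \ge\ 1-\inf_{\lambda\ge0}\exp\bc{-\lambda(1-c)\gamma_i\tau\abs{S_k\cap S}+\tfrac12\lambda^2 M^2\sigma^2\abs{S_k\cap S}},
\]
using the bound on \(\ip{\what}{\nu_i}\) above together with \(\norm{\what_{S_k\cap S}}_1\ge\tau\abs{S_k\cap S}\), the uniform bound \(\abs{\what_j}\le M\) from \eqref{ass:1-sens} giving \(\sum_{j\in S_k\cap S}\what_j^2\le M^2\abs{S_k\cap S}\), and \(\abs{\ip{\what}{x}}\le c\,\tau\gamma_i\abs{S_k\cap S}\) on the set. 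Since \(c<1\) the linear term of the exponent is strictly negative, so the optimising \(\lambda\) is positive (the role played by \eqref{ass:3-margin} in the original proof), and evaluating the infimum yields \(1-\exp\{-\abs{S_k\cap S}\tau^2\gamma_i^2(1-c)^2/(2\sigma^2M^2)\}\) — precisely \Cref{eq:thm1-err-bnd} with \(\cC\) replaced by \(c\,\tau\gamma_i\abs{S_k\cap S}\). Integrating over the relevant small-margin set, weighting by \(c_i\), and summing the two contributions delivers the stated inequality; the special case \(c=\tfrac12\) is \Cref{corr:imbalanced}.

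I expect the only non-routine step to be the bookkeeping in the second and third paragraphs: pairing \(\shift_1\) with the \(+1\) class and \(\shift_{-1}\) with the \(-1\) class (the opposite pairing contributes essentially nothing, since that component's drift only enlarges the margin), keeping the two \(\gamma_i\) attached to the correct terms, and reconciling the two per-class small-margin masses with the single quantity \(\rho_c\) (the larger of them, which is also why \(1-\rho_c\) upper-bounds the \Gls{id} error). The sub-Gaussian tail bound, the optimisation over \(\lambda\), and the verification that the \(c\)-shrunk margin keeps that optimiser non-negative all go through exactly as in \Cref{thm:main-thm} and introduce nothing new.
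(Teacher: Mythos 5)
Your Chernoff computation for a single mixture component on a $c$-shrunk margin set is exactly the engine the paper uses (it re-invokes \Cref{thm:main-thm} with $\cC = c\tau\gamma_i\abs{S_k\cap S}$, which is where the $(1-c)^2$ comes from), but the way you assemble the two components does not reach the stated inequality, and the step you defer as ``bookkeeping'' is where the argument breaks. By pairing $\shift_1$ with the $+1$-classified small-margin set (mass $\rho^{(+)}$, say) and $\shift_{-1}$ with the $-1$-classified one (mass $\rho^{(-)}$), the best you can conclude is
\[
\prob_{x,\delta}\br{\ip{\what}{x+\delta}\neq\ip{\what}{x}}\;\geq\; c_1\rho^{(+)}\br{1-\exp\bc{-\tfrac{\abs{S_k\cap S}\tau^2\gamma_1^2(1-c)^2}{2\sigma^2M^2}}}+c_{-1}\rho^{(-)}\br{1-\exp\bc{-\tfrac{\abs{S_k\cap S}\tau^2\gamma_{-1}^2(1-c)^2}{2\sigma^2M^2}}},
\]
whereas the corollary asserts $\rho_c\sum_{i}c_i(1-\exp\{\cdots\})$ with $\rho_c=\max\br{\rho^{(+)},\rho^{(-)}}$. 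These are not reconcilable in your favour: if $\rho^{(-)}=0$ and $\rho^{(+)}=\rho_c$, your expression collapses to $c_1\rho_c(1-\exp\{\cdots\gamma_1^2\cdots\})$, which is strictly smaller than the claimed bound whenever $c_{-1}>0$. So the proposal as written proves a genuinely weaker statement, and no amount of careful accounting along this route recovers the missing $c_{-1}\rho_c(1-\exp\{\cdots\gamma_{-1}^2\cdots\})$ term.

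The paper's decomposition is different and is what produces the single factor $\rho_c$ in front of the whole sum: it first fixes the maximiser $y_{\max}$ in the definition of $\rho_c$, conditions on the small-margin event for that one class (an event of mass exactly $\rho_c$), and only then splits over the mixture, invoking \Cref{thm:main-thm} for \emph{both} components on that same conditioning event. Note that this means applying the theorem also to the component whose mean drift points away from the decision boundary for the fixed class; your sign analysis (that each component can only flip one of the two classes) is the physically natural one, and it is precisely this correct observation that prevents your route from yielding the bound in the form stated. If you want to follow the paper, you must abandon the per-class pairing and work only with the $y_{\max}$ class throughout.
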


\begin{proof}
Let $y_{\max}$ denote the value of $\hat{y}$ that achieve $\rho_c$, ie.\[y_{\max} = \argmax_{\hat{y}\in\bc{-1,1}}\prob_{x\sim\dist}\bs{\bI\bc{\hat{y}\ip{\what}{x}\geq 0}\cdot \bI\bc{\hat{y}\ip{\what}{x}\leq c \tau\gamma_{\hat{y}} \abs{S_k \cap S}}}.\]

For simplicity, we also denote the event $\bI\bc{\ba{\hat{w}, x + \delta} \neq \ba{\hat{w}, x}}$ as $\cE_{\textrm{err}}$. Then, the goal of the proof is to lower bound $\prob_{x, \delta\sim \Delta'}\br{\cE_{\textrm{err}}}$. Let $\cE$ denote the event $\bI\bc{y_{\max}\ip{\what}{x}\geq 0}\cdot \bI\bc{y_{\max}\ip{\what}{x}\leq c \tau\gamma \abs{S_k \cap S}}$. We apply law of total probability over the event $\cE$ and $\cE^c$
\begin{equation}\label{eq:corollary-ineq1}
\begin{aligned}
      \prob_{x\sim \dist, \delta\sim \Delta'}\br{\cE_{\textrm{err}}} &= \prob_{x\sim \dist}\br{\cE}\prob_{x\sim\mu|\cE, \delta\sim \Delta'}\br{\cE_{\textrm{err}}} + \prob_{x\sim \dist}\br{\cE^c} \prob_{x\sim \dist|\cE^c, \delta\sim \Delta'}\br{\cE_{\textrm{err}}}\\
      &\geq \rho_c\prob_{x\sim \dist|\cE, \delta\sim \Delta'}\br{\cE_{\textrm{err}}} \\
\end{aligned}
\end{equation}
WLOG, we assume $y_{\max} = 1$. Then, we rewrite the probability $\prob_{x|\cE, \delta\sim \Delta'}\br{\ba{\hat{w}, x + \delta} \neq \ba{\hat{w}, x}}$ and invoke~\Cref{thm:main-thm} to lower bound the term. As $y_{\max} = 1$, $\ba{\hat{w}, x}\geq 0$ when event $\cE$ holds. In other words, $\ba{\hat{w}, x}\geq 0$ for all $x$ in the support of $\mu|\cE$. Hence, by law of total probability over the the mixture distributions $\Delta_1$ and $\Delta_{-1}$ of the shift random variable $\delta$, 
\begin{equation}\label{eq:corollary-ineq2}
    \begin{aligned}
    \prob_{x\sim \dist|\cE, \delta\sim \Delta'}\br{\cE_{\textrm{err}}} &= \prob_{x\sim\dist|\cE, \delta\sim \Delta'}\br{\ba{\hat{w}, x + \delta} \leq 0}\\
    &= c_1\prob_{x\sim\dist|\cE, \delta\sim \Delta_1}\br{\ba{\hat{w}, x + \delta} \leq 0} + c_{-1}\prob_{x\sim\dist|\cE, \delta\sim \Delta_{-1}}\br{\ba{\hat{w}, x + \delta} \leq 0}\\
    \end{aligned}
\end{equation}

Then, we derive lower bounds on $\prob_{x|\cE, \delta\sim \Delta_1}\br{\ba{\hat{w}, x + \delta} \leq 0}$ and $\prob_{x|\cE, \delta\sim \Delta_{-1}}\br{\ba{\hat{w}, x + \delta} \leq 0}$ respectively using~\Cref{thm:main-thm}. By the definition of event $\cE$ and the fact that $y_{\max} = 1$, any $x$ in the support of $\mu|\cE$ satisfies $\ba{\hat{w}, x}\leq c\tau\gamma\abs{S_k\cap S}$. We then employ~\Cref{thm:main-thm} with $\cC = c\tau\gamma\abs{S_k\cap S}$ and $\gamma = \gamma_{+1}$ to lower bound $\prob_{x|\cE, \delta\sim \Delta_1}\br{\ba{\hat{w}, x + \delta} \leq 0} $ when $\ba{\hat{w}, x}\geq 0$, 
\begin{equation}\label{eq:corollary-ineq3}
    \prob_{x|\cE, \delta\sim \Delta_1}\br{\ba{\hat{w}, x + \delta} \leq 0} \geq 1-\exp\bc{-\frac{|S_k\cap S|\tau^2\gamma_1^2 (1-c)^2}{2\sigma^2M^2}}
\end{equation}
Similarly, setting $\cC = c\tau\gamma\abs{S_k\cap S}$ and $\gamma = \gamma_{-1}$ we can lower bound $\prob_{x|\cE, \delta\sim \Delta_{-1}}\br{\ba{\hat{w}, x + \delta} \leq 0} $ when $\ba{\hat{w}, x}\geq 0$ by
\begin{equation}\label{eq:corollary-ineq3}
    \prob_{x|\cE, \delta\sim \Delta_{-1}}\br{\ba{\hat{w}, x + \delta} \leq 0} \geq 1-\exp\bc{-\frac{|S_k\cap S|\tau^2\gamma_{-1}^2 (1-c)^2}{2\sigma^2M^2}}
\end{equation}
Substituting~\Cref{eq:corollary-ineq3} into~\Cref{eq:corollary-ineq2} and then substituting~\Cref{eq:corollary-ineq2} into~\Cref{eq:corollary-ineq1}, we obtain     
\[
    \prob_{x,\delta}\br{\ip{\hat{w}}{x + \delta} \neq \ip{\what}{x}} \geq \rho_c \sum_{i\in \{-1, 1\}}c_{i}\br{1 - \exp\bc{-\frac{\abs{S_k \cap S}\tau^2\gamma_{i}^2\br{ 1- c}^2 }{2 \sigma^2 M^2}}}.
    \]
for $y_{\max} = 1$. The case with $y_{\max} = -1$ follows the same analysis.
\end{proof}

Now we state the full version of~\Cref{thm:A1-informal} as~\Cref{thm:A1} and provide its proof. 

\begin{thm}\label{thm:A1}
    If $d = \Omega(\log n)$ and the noise distribution $\pi$ satisfies $\Pr_{\xi\sim \pi^n, X_{2:d}}[X_{2:d}^+\xi\geq C]\geq 1-\beta$ for some constant $C$ and $\beta \in (0, 1)$. Then, for $\beta_1, \beta_2\in (0, 1)$ and $\beta_1 = O(1/d)$, with probability at least $0.92 - \beta_2 - d\beta_1 - \beta$, the min-$\ell_2$-interpolator $\hat{w}$ on the noisy dataset $(X, Y)$ satisfies 
     \[\norm{\hat{w}_{2:d}}_\infty\geq 0.1\br{1-\frac{2\log\frac{n}{\beta_1}}{(d-1)\br{1-\sqrt{6\beta_2\log\frac{\beta_1}{d-1}}}}}C.\]
\end{thm}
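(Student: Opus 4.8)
The plan is to exploit the block structure of the minimum-$\ell_2$ interpolator, treating the single signal coordinate $w_1$ and the $d-1$ nuisance coordinates $w_{2:d}$ separately, and to reduce the claimed $\ell_\infty$ lower bound to a statement that reweighting $X_1$ by the noise pattern $\xi$ genuinely ``moves'' it inside the nuisance column space.

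First I would carry out a partial-regression / Sherman--Morrison decomposition. Fixing $w_1 = t$, the interpolation constraint $Xw = Y$ forces $X_{2:d} w_{2:d} = Y - t X_1$, whose minimum-norm solution is $X_{2:d}^{+}(Y - t X_1)$ (feasible since $d \ge n$, so that $X_{2:d} \in \reals^{n \times (d-1)}$ has full row rank almost surely and $X_{2:d}^{+} = X_{2:d}^{\top}(X_{2:d}X_{2:d}^{\top})^{-1}$). Minimising the scalar quadratic $t \mapsto t^2 + \norm{X_{2:d}^{+}(Y - t X_1)}_2^2$ --- equivalently, applying Sherman--Morrison to $XX^{\top} = X_{2:d}X_{2:d}^{\top} + X_1 X_1^{\top}$ --- yields $\what_1 = \langle X_{2:d}^{+}Y,\, X_{2:d}^{+}X_1 \rangle / (1 + \norm{X_{2:d}^{+}X_1}_2^2)$, and hence $\what_{2:d} = X_{2:d}^{+}Y - \what_1\, X_{2:d}^{+}X_1$.

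Next, since $\what_1$ is not the coefficient that minimises $\norm{X_{2:d}^{+}Y - t\, X_{2:d}^{+}X_1}_2$ over $t$ (that coefficient is $\langle X_{2:d}^{+}Y, X_{2:d}^{+}X_1 \rangle / \norm{X_{2:d}^{+}X_1}_2^2$), choosing it can only enlarge the norm, so $\norm{\what_{2:d}}_2 \ge \mathrm{dist}(X_{2:d}^{+}Y,\ \mathrm{span}(X_{2:d}^{+}X_1)) = \norm{X_{2:d}^{+}Y}_2 \cdot \sin\angle(X_{2:d}^{+}Y,\, X_{2:d}^{+}X_1)$. It then suffices to (a) lower bound $\norm{X_{2:d}^{+}Y}_2$ and (b) bound the sine away from $0$. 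For (a) I would transfer the hypothesis $\norm{X_{2:d}^{+}\xi}_2 > C$ from $\xi$ to $Y = \xi \odot X_1$: writing $X_1 = \upsilon \mathbf{1}_n + g$ with $g \sim \cN(0, I_n)$ independent of $(X_{2:d}, \xi)$ gives $X_{2:d}^{+}Y = \upsilon X_{2:d}^{+}\xi + X_{2:d}^{+}(\xi \odot g)$, and a coordinatewise tail bound on $X_1$ plus a union bound over the $n$ coordinates controls the second term --- this produces the multiplicative factor $1 - 2\log(n/\beta_1)/((d-1)(\cdots))$ and the $d\beta_1$ contribution to the failure probability. For (b) I would use concentration of the Wishart matrix $X_{2:d}X_{2:d}^{\top}$ around $(d-1)I_n$ (this is where moderately large $d$ is used, contributing the $\beta_2$ term), which replaces $\langle X_{2:d}^{+}Y, X_{2:d}^{+}X_1 \rangle$ and the two norms by $\tfrac{1}{d-1}\sum_i \xi_i X_{i,1}^2$ and $\tfrac{1}{d-1}\norm{X_1}_2^2$ up to small relative error; since $\abs{\xi_i} \le 1$ while the $\xi_i$ are not almost surely $+1$ --- which the large-$C$ hypothesis encodes quantitatively --- the cosine stays bounded away from $1$, so $\sin\angle$ is bounded below, with a residual absolute-constant probability ($\approx 0.08$) charged to an anti-concentration event. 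A final step passes from the $\ell_2$ bound to $\ell_\infty$ (arguing $\what_{2:d}$ is not too spread across its coordinates, or absorbing a $1/\sqrt{d-1}$ into the scale $C$ as in the statement), and collecting $\beta + d\beta_1 + \beta_2 + 0.08$ gives the stated $0.92 - \beta_2 - d\beta_1 - \beta$.

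The hard part is step (b): converting ``$\norm{X_{2:d}^{+}\xi}_2$ large'' into a robust lower bound on $\sin\angle(X_{2:d}^{+}(\xi\odot X_1),\, X_{2:d}^{+}X_1)$ \emph{uniformly over the noise model} $\pi$, while managing the coupling that $X_1$ appears both inside $Y$ and as the competing regressor (whereas $X_{2:d}^{+}$ is independent of both). This estimate must degrade gracefully as the noise vanishes --- when $\xi \to \mathbf{1}_n$ the two vectors coincide and the sine collapses, consistent with the bound turning vacuous ($C$ small) there --- which is exactly what makes the quantitative conversion delicate.
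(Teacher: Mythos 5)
Your decomposition of $\what_{2:d}$ via partial regression is algebraically equivalent to the paper's rank-one (Sherman--Morrison) expansion of $(XX^\top)^{-1}$, so the starting point is sound and matches the paper's first step. But the two steps you build on it each contain a genuine gap. First, the reduction of $\|\what_{2:d}\|_2$ to $\|X_{2:d}^{+}Y\|_2\cdot\sin\angle\br{X_{2:d}^{+}Y,\,X_{2:d}^{+}X_1}$ cannot be closed under the stated hypothesis: the hypothesis constrains only $X_{2:d}^{+}\xi$ and says nothing about how $\xi\odot X_1$ is oriented relative to $X_1$. Concretely, if $\xi$ is (close to) a constant positive vector $c\mathbf{1}_n$ with $c$ large, the hypothesis holds with large $C$, yet $Y=cX_1$, the vectors $X_{2:d}^{+}Y$ and $X_{2:d}^{+}X_1$ are collinear, and your lower bound degenerates to zero. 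Your remark that the large-$C$ hypothesis ``encodes quantitatively'' that $\xi$ is nontrivial is not correct --- largeness of $X_{2:d}^{+}\xi$ does not encode variation of $\xi$ across coordinates, which is exactly what the sine bound needs; also, nothing in the model gives you $|\xi_i|\le 1$. Second, the passage from the $\ell_2$ bound to the claimed $\ell_\infty$ bound loses a factor of order $\sqrt{d-1}$: $\what_{2:d}$ lies in the row space of the Gaussian matrix $X_{2:d}$, where mass is typically spread across coordinates, so ``arguing $\what_{2:d}$ is not too spread'' goes the wrong way, and absorbing $\sqrt{d-1}$ into $C$ changes the statement being proved.

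The paper avoids both problems by staying coordinatewise rather than working in $\ell_2$: it reads the hypothesis as a per-coordinate lower bound on $X_{2:d}^{+}\xi$, writes $\what_{2:d}$ as the main term $X_{2:d}^{+}(\xi\odot X_1)$ minus a correction that is controlled multiplicatively through $\|\Sigma_1\tilde{\Sigma}^{-1}\|_{op}\ll 1$ --- this operator-norm step, not a transfer of the hypothesis from $\xi$ to $Y$ as you propose, is where the factor $1-2\log(n/\beta_1)/\bigl((d-1)(\cdots)\bigr)$ and the $\beta_1,\beta_2$ failure probabilities arise --- and then extracts the $\ell_\infty$ bound from anti-concentration of a single coordinate of $X_1$, which is the source of the constants $0.1$ and $0.92$. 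To salvage your $\ell_2$/angle route you would need both a strictly stronger hypothesis on $\pi$ (one that rules out near-constant $\xi$ and genuinely separates $\xi\odot X_1$ from $X_1$) and a weaker, dimension-dependent conclusion; as written, the proposal does not establish the stated bound.
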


\begin{proof}
         
    For $i\in \{1, ..., d\}$, let $X_i\in \reals^{n}$ denote the $i^{\text{th}}$ feature of the data matrix $X$. For simplicity, let $\tilde{X} = X_{2:d}\in \reals^{n\times (d-1)}$ denote the nuisance covariates, $\Sigma_1 = X_1X_1^\top$ and $\tilde{\Sigma} = \tilde{X}\tilde{X}^\top$. When $d > n$, min-$\ell_2$-interpolator on the noisy dataset $(X, Y)$ has a closed-form solution $
        \hat{w} = X^\top \br{XX^\top}^{-1}Y$. We will show that the estimated parameters for nuisance features $\hat{w}_{2:d}$ is lower bounded with high probability. 
    \begin{equation}
        \label{eq:min-l2-interpolator-closed-form-solution}
        \begin{aligned}
            \hat{w}_{2:d} &= \bs{X^\top \br{XX^\top}^{-1}Y }_{2:d} \\
            &\overset{(a)}{=} \tilde{X}^\top\br{\Sigma_1 + \tilde{\Sigma}}^{-1}\xi\odot X_1 \\
           &\overset{(b)}{=} \tilde{X}^T\br{\tilde{\Sigma}^{-1} -\frac{\tilde{\Sigma}^{-1}\Sigma_1\tilde{\Sigma}^{-1}}{1 + Tr(\Sigma_1\tilde{\Sigma}^{-1})} }^{-1}(\xi\odot X_1) \\
           &= \underbrace{\tilde{X}^\top \tilde{\Sigma}^{-1}\xi\odot X_1}_{\text{Part I}} -\underbrace{ \tilde{X}^T\frac{\tilde{\Sigma}^{-1}\Sigma_1\tilde{\Sigma}^{-1}}{1 + Tr(\Sigma_1\tilde{\Sigma}^{-1})} \xi \odot X_1}_{\text{Part II}}\\
        \end{aligned}
    \end{equation}  
    where step (a) follows from the definition of $\Sigma_1, \tilde{\Sigma}$ and $Y$, and step (b) follows from~\Cref{lem:inverse-of-sum-of-matrices}~\citep{kenneth1981inverse}.
    \begin{lem}[Inverse of sum of matrices~\citep{kenneth1981inverse}]\label{lem:inverse-of-sum-of-matrices}
        For two matrices $A$ and $B$, let $g = Tr(BA^{-1})$. If $A$ and $A + B$ are invertible and $B$ has rank $1$, then $g\neq -1$ and \[(A + B)^{-1} = A^{-1} - \frac{1}{g + 1}A^{-1}B A^{-1}. \]
    \end{lem}
     We will lower bound Part I and Part II separately using the assumption on the noise distribution $\mu$ and the concentration bound on Gaussian random matrix. 

     Applying the assumption on the noise distribution and the fact that $\tilde{X}^\top \tilde{\Sigma}^{-1} = X_{2:d}^+$, we can lower bound Part I with probability at least $1-\beta$, 
     \begin{equation}
         \label{eq:lower-bound-part-i}
         \tilde{X}^\top \tilde{\Sigma}^{-1}\xi\odot X_1 = X_{2:d}^+ \xi \odot X_1 \geq C \odot X_1 \geq C X_1. 
     \end{equation}
    It remains to upper bound Part II. 
    \begin{equation}\label{eq:lower-bound-part-ii}
        \begin{aligned}
            \tilde{X}^T\frac{\tilde{\Sigma}^{-1}\Sigma_1\tilde{\Sigma}^{-1}}{1 + Tr(\Sigma_1\tilde{\Sigma}^{-1})} \xi \odot X_1 &\leq \tilde{X}^\top \tilde{\Sigma}^{-1}\xi\odot X_1 \norm{\frac{\Sigma_1\tilde{\Sigma}^{-1}}{1 + Tr(\Sigma_1\tilde{\Sigma}^{-1})}}_{op}\\
            &\leq C \norm{\frac{\Sigma_1\tilde{\Sigma}^{-1}}{1 + Tr(\Sigma_1\tilde{\Sigma}^{-1})}}_{op} \leq C\norm{\Sigma_1\tilde{\Sigma}^{-1}}_{op}
        \end{aligned}
    \end{equation}
    where the last inequality follows from $Tr(\Sigma_1\tilde{\Sigma}^{-1}) \geq 0$ for positive semi-definite matrices $\Sigma_1$ and $\tilde{\Sigma}$. 
    
     Then, we derive lower bound and upper bound on the term $\norm{\frac{1}{n}\Sigma_1}_{op}$ and $\norm{\br{\frac{1}{n}\tilde{\Sigma}}^{-1}}_{op}$, 
    \begin{equation}
       \norm{\frac{1}{n}\Sigma_1}_{op} = \frac{1}{n}X_1^\top X_1 = \frac{1}{n}\sum_{i = 1}^n X_{1i}^2
    \end{equation}
    where each $X_{1i}\sim \cN(0, 1)$.
    \begin{lem}[Tail bound of norm of Gaussian random vector]\label{lem:gaussian-norm-tail-bound}
    Let $X = (X_1, ..., X_n)\in \reals^n$ be a vector where each $X_i$ is an independent standard Gaussian random variable, then for some constant $C\geq 0$, \[\prob\bs{\frac{1}{n}\sum_{i = 1}^n X_i^2\leq C}\geq 1-ne^{-\nicefrac{nC}{2}}\] 
    \end{lem}
    
    Thus, with probability $1-\beta_1$ for $\beta_1\in (0, 1)$, 
    \begin{equation}\label{eq:operation-norm-bounds1}
        \prob\bs{\norm{\Sigma_1}_{op}\leq 2\log\frac{n}{\beta_1}}=\prob\bs{\frac{1}{n}\norm{\Sigma_1}_{op}\leq \frac{2}{n}\log\frac{n}{\beta_1}}\geq1-\beta_1.
    \end{equation}
        Let $\lambda_i(\Sigma)$ denote the $i^{\text{th}}$ eigenvalue of a matrix $\Sigma$. Then, $\norm{\br{\frac{1}{n}\tilde{\Sigma}}^{-1}}_{op} = \br{\min_{i}\lambda_i\br{\frac{1}{n}\tilde{\Sigma}}}^{-1}$, we only need to lower bound $\min_{i}\lambda_i\br{\frac{1}{n}\tilde{\Sigma}}$,   
    \begin{equation}
        \label{eq:operation-norm-prep}
        \begin{aligned}
                \prob\bs{\abs{\min_{i}\lambda_i\br{\frac{1}{d-1}\tilde{\Sigma}} - 1}\leq t} &\geq \prob \bs{\forall i, \abs{\lambda_i\br{\frac{1}{d-1}\tilde{\Sigma}} - 1}\leq t}\\
                &= \prob \bs{\max_i\abs{\lambda_i\br{\frac{1}{d-1}\tilde{\Sigma}} - 1}\leq t} \\
                &\overset{(a)}{\geq}\prob\bs{\norm{\frac{1}{d-1}\tilde{\Sigma} - I }_{op}\leq \sqrt{6\beta_2\log \frac{\beta_1}{(d-1)}}}\\
                &\overset{(b)}{\geq}1 - \beta_2 - (d-1)\beta_1
        \end{aligned}
    \end{equation}
       where the first inequality follows from Weyl's inequality (\Cref{lem:weyl-ineq}), and step (b) follows a corollary of matrix Bernstein inequality (\Cref{cor:modified-matrix-bernstein}) by setting $t = \sqrt{6\beta_2\log \frac{\beta_1}{(d-1)}}$. 
    \begin{lem}[Weyl's inequality~\citep{Vershynin_2018}]\label{lem:weyl-ineq}
        For any two symmetric matrices $A, B$ with the same dimension, \[\max_{i}\abs{\lambda_i(A)-\lambda_i(B)}\leq \norm{A - B}_{op}\]
    \end{lem}
        \begin{corollary}\label{cor:modified-matrix-bernstein}
        Let $X_1, ..., X_d$ be independent Gaussian random vectors in $\bR^n$ with mean $0$ and covariance matrix $I_n$. Then for all $t\geq 0$ and $\beta\in (0, 1)$, \[\prob\bs{\norm{\frac{1}{d}\sum_{i = 1}^d X_iX_i^T - I_n}_{op}\leq t}\geq 1-2n\exp\br{\frac{-dt^2}{4\log\frac{n}{\beta}\br{1 + \nicefrac{2t}{3}}}}-d\beta\]
    \end{corollary}

    Therefore, with probability $1-\beta_2 - d\beta_1$, 
    \begin{equation}\label{eq:upperbound-norm}
        \norm{\Sigma_1\tilde{\Sigma}^{-1}}_{op}\leq \norm{\Sigma_1}_{op}\norm{\tilde{\Sigma}^{-1}}_{op}\leq \frac{2\log\frac{n}{\beta_1}}{(d-1)\br{1-\sqrt{6\beta_2\log\frac{\beta_1}{d-1}}}},
    \end{equation}
    where the last inequality is obtained by substituting the upper bound for $\norm{\Sigma_1}_{op}$ and $\norm{\tilde{\Sigma}}_{op}$ in~\Cref{eq:operation-norm-prep} and~\Cref{eq:operation-norm-bounds1} respectively.

    Substituting~\Cref{eq:upperbound-norm} into~\Cref{eq:lower-bound-part-ii}, we obtain an upper bound on Part II, 
    \begin{equation}
        \label{eq:lower-bound-part-ii-final}
        \tilde{X}^T\frac{\tilde{\Sigma}^{-1}\Sigma_1\tilde{\Sigma}^{-1}}{1 + Tr(\Sigma_1\tilde{\Sigma}^{-1})} \xi \odot X_1 \leq \frac{2CX_1\log\frac{n}{\beta_1}}{(d-1)\br{1-\sqrt{6\beta_2\log\frac{\beta_1}{d-1}}}}. 
    \end{equation}
    
    Combining the lower bound on Part I (\Cref{eq:lower-bound-part-i}) and the upper bound on Part II (\Cref{eq:lower-bound-part-ii-final}), we get the following lower bound in terms of the random vector $X$, 
    \begin{equation}
        \label{eq:lower-bound-in-terms-of-X1}
        \hat{w}_{2:d}\geq \br{1-\frac{2\log\frac{n}{\beta_1}}{(d-1)\br{1-\sqrt{6\beta_2\log\frac{\beta_1}{d-1}}}}}C X_1
    \end{equation}

    Finally, we employ anti-concentration bound of standard normal random variable to lower bound $\norm{X_1}_\infty$ to conclude the proof. 

    By calculation with the cumulative distribution function of the standard normal random variable, with probability at least 0.92, there exists $i$ such that $\abs{X_{1i}}\geq 0.1$. Therefore, with probability at least $0.92 - \beta_2 - d\beta_1 - \beta$, 
    \[\norm{\hat{w}_{2:d}}_\infty\geq 0.1\br{1-\frac{2\log\frac{n}{\beta_1}}{(d-1)\br{1-\sqrt{6\beta_2\log\frac{\beta_1}{d-1}}}}}C.\]
    
    This concludes the proof. 
    
\end{proof}

\begin{proof}[Proof of~\Cref{lem:gaussian-norm-tail-bound}]
    \begin{equation}
        \begin{aligned}
            \prob\bs{\frac{1}{n}\sum_{i = 1}^nX_i^2\leq C} &= \prob\bs{\sum_{i = 1}^nX_i^2\leq nC}\\
            &\geq \prob\bs{\forall i, X_i^2\leq nC} = \prob\bs{\forall i, X_i \geq \sqrt{nC}}\\
            &= 1-\prob\bs{\exists i, X_i\geq \sqrt{nC}}\\
            &\geq 1-n e^{-\nicefrac{nC}{2}}
        \end{aligned}
    \end{equation}
    where the last inequality follows from Union bound and the tail bound of a standard Gaussian random variable. 
\end{proof}
\begin{proof}[Proof of~\Cref{cor:modified-matrix-bernstein}]
    We first apply~\Cref{lem:gaussian-norm-tail-bound} with $C = 2\log\frac{n}{\beta}$. That is, with probability at least $1-d$, all $X_1, ..., X_d\in \reals^n$ satisfy 
    \[\prob\bs{\forall i\in [d], \norm{X_i}_2\leq 2\log\frac{n}{\beta}}\geq 1-d\beta. \]

    Applying a standard corollary of Matrix Bernstein inequality\footnote{See Theorem 13.5 in the lecture notes: \url{https://www.stat.cmu.edu/~arinaldo/Teaching/36709/S19/Scribed_Lectures/Mar5_Tim.pdf}} on $X_1, ..., X_d$ concludes the proof. 
\end{proof}

\section{Experimental details}
\label{app:exp-det}

\subsection{Data Distribution for simulation using synthetic data}\label{app:synthetic-exp-setting}
We construct a binary classification task in a \(300\)-dimensional
space. The procedure for generating the training dataset is as
follows: Each label \(y \in \{-1, 1\}\) is sampled uniformly at
random. The first component \(x_1\) is sampled from a mixture of two Gaussian distributions with a variance of
$0.15$, centered at $y$ and $1-y$ respectively, with mixing proportions of $0.9$ and $0.1$. As the training dataset size increases, the model's ability to learn this feature improves, thereby improving the test accuracy. The remaining
\(299\) dimensions (\(x_2,\ldots,x_{300}\)) are drawn from a standard
normal distribution with zero mean and a variance of \(0.1\). They
constitute the nuisance subspace, primarily used to memorise label
noise. We introduce label noise into training data by flipping 20\% of the labels.

For in-distribution (ID) accuracy evaluation, we generate a fresh set
of data points from the initial distribution, devoid of label noise. Out-of-distribution
(OOD) accuracy is evaluated by first constructing the shift distribution \(\shift\). Assuming \(\what\) represents the trained linear model, the mean \(\nu\) of the \(\shift\) distribution for each component \(i\) for \(i>2\) is set to \(-0.25 \sgn{\what_i}\) with a variance of \(10^{-3}\). We then simulate a new~\Gls{id} test instance \(z, y\) in the usual manner, sample a shift \(\delta \sim \shift\),
and add them \(z + \delta\) to generate the OOD test point.

All plots related to linear synthetic experiments can be generated in a total of less than two hours on a Macbook Pro M2.

\subsection{Additional results on synthetic linear setting}

In this section, we provide new results in~\Cref{fig:vary-reg} where we vary the regularisation strength. The results show that both~\Gls{id} and~\Gls{ood} accuracy increases with increasing regularisation coefficeint but larger datasets have a noticeably smaller~\Gls{ood} accuracy in the noisy setting uniformly across all regularisation strengths. For all other cases, including noiseless~\Gls{ood} and both noisy and noiseless~\Gls{id}, larger datasets perform better. The results also show that regularisation affects nuisance density and sensitivity as expected\ie larger regularisation leads to lower sensitivity and density. But both the sensitivity and density falls to zero faster for the noiseless setting compared to the noisy setting.

\begin{figure}[!htb]\centering
\begin{subfigure}[t]{0.15\linewidth}
    \centering
    \includegraphics[width=1.0\linewidth]{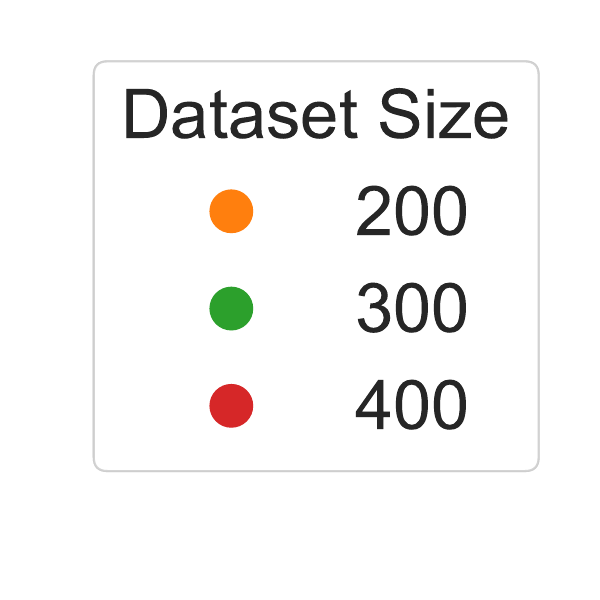}
    \includegraphics[width=1.0\linewidth]{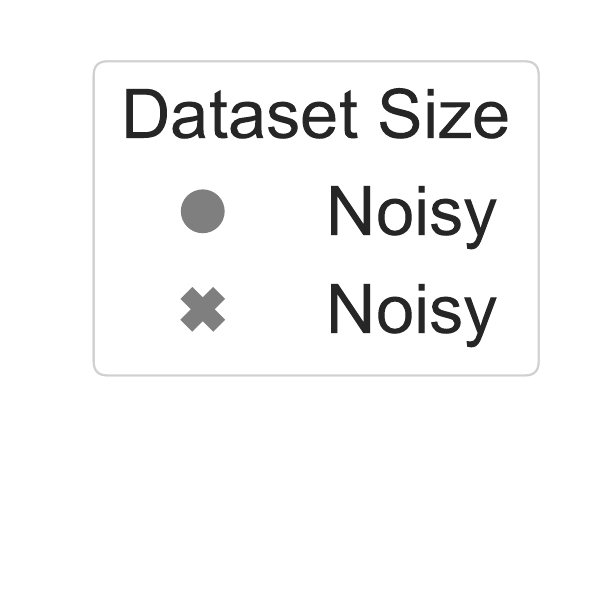}
\end{subfigure}
\begin{subfigure}[t]{0.4\linewidth}
    \centering    \includegraphics[width=0.48\linewidth]{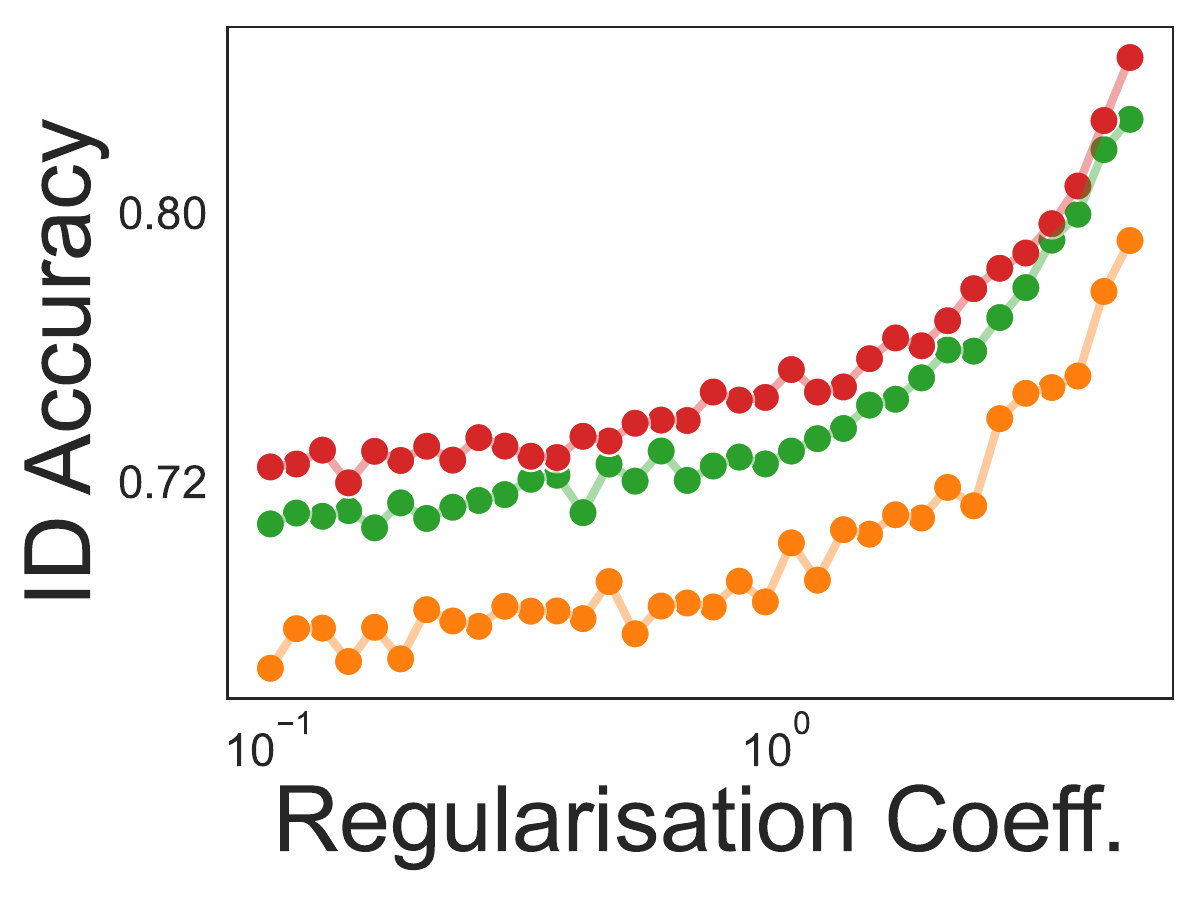}   
    \includegraphics[width=0.48\linewidth]{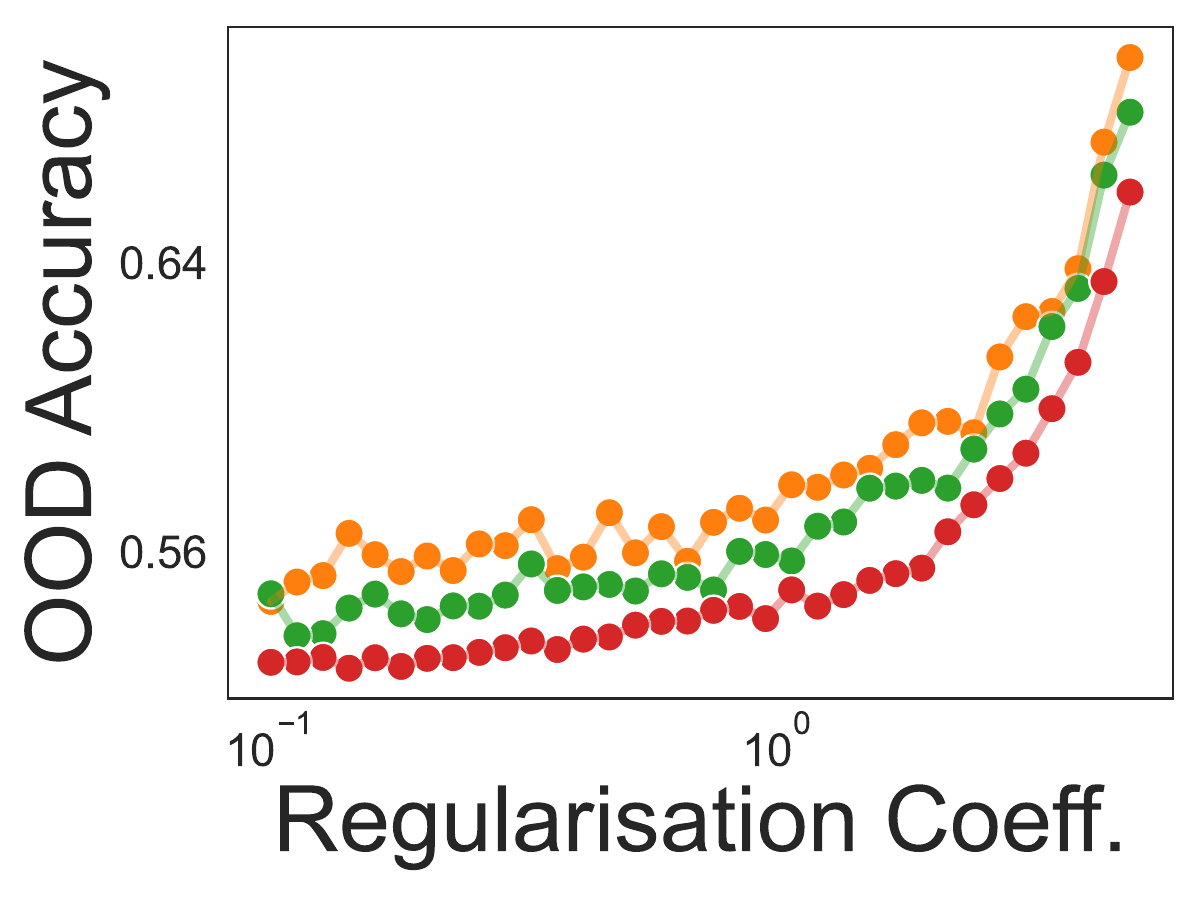}
    \includegraphics[width=0.48\linewidth]{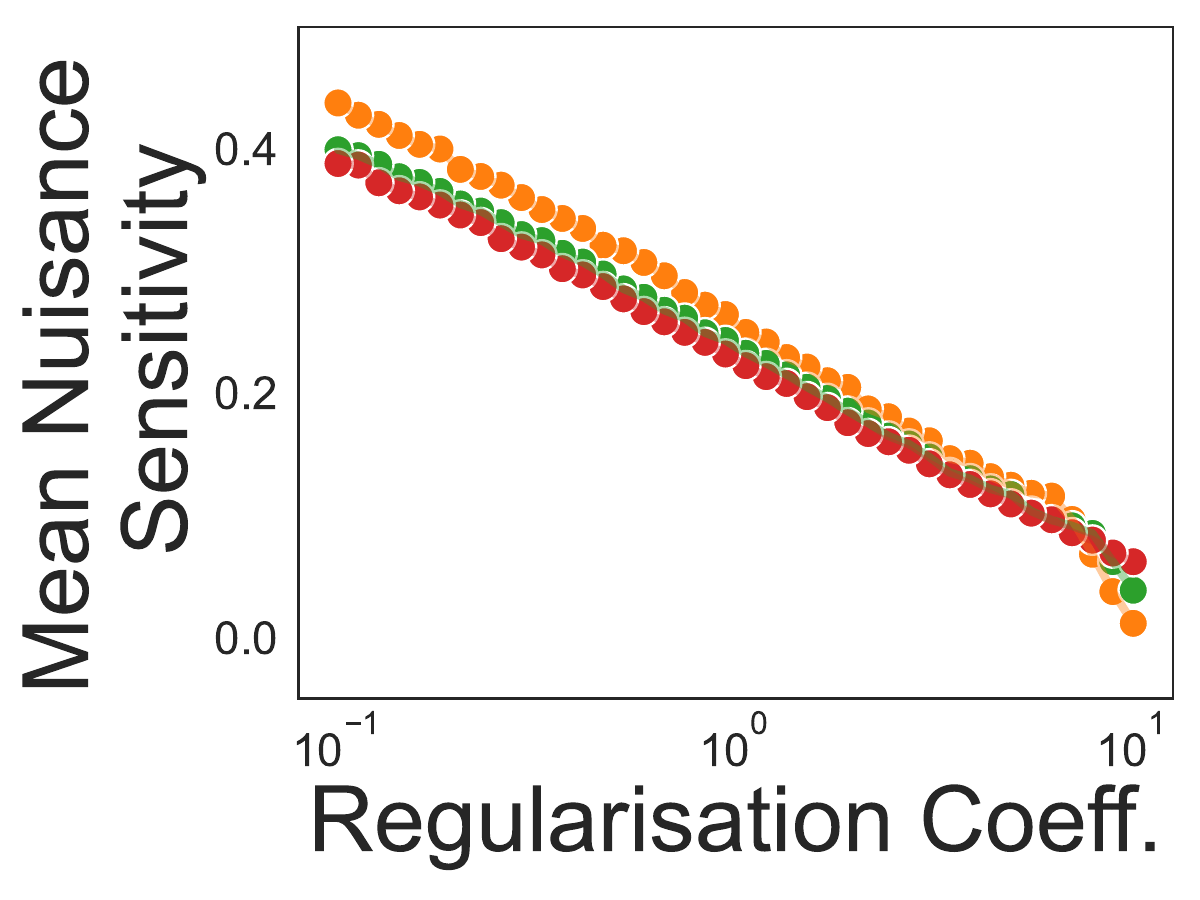}   
    \includegraphics[width=0.48\linewidth]{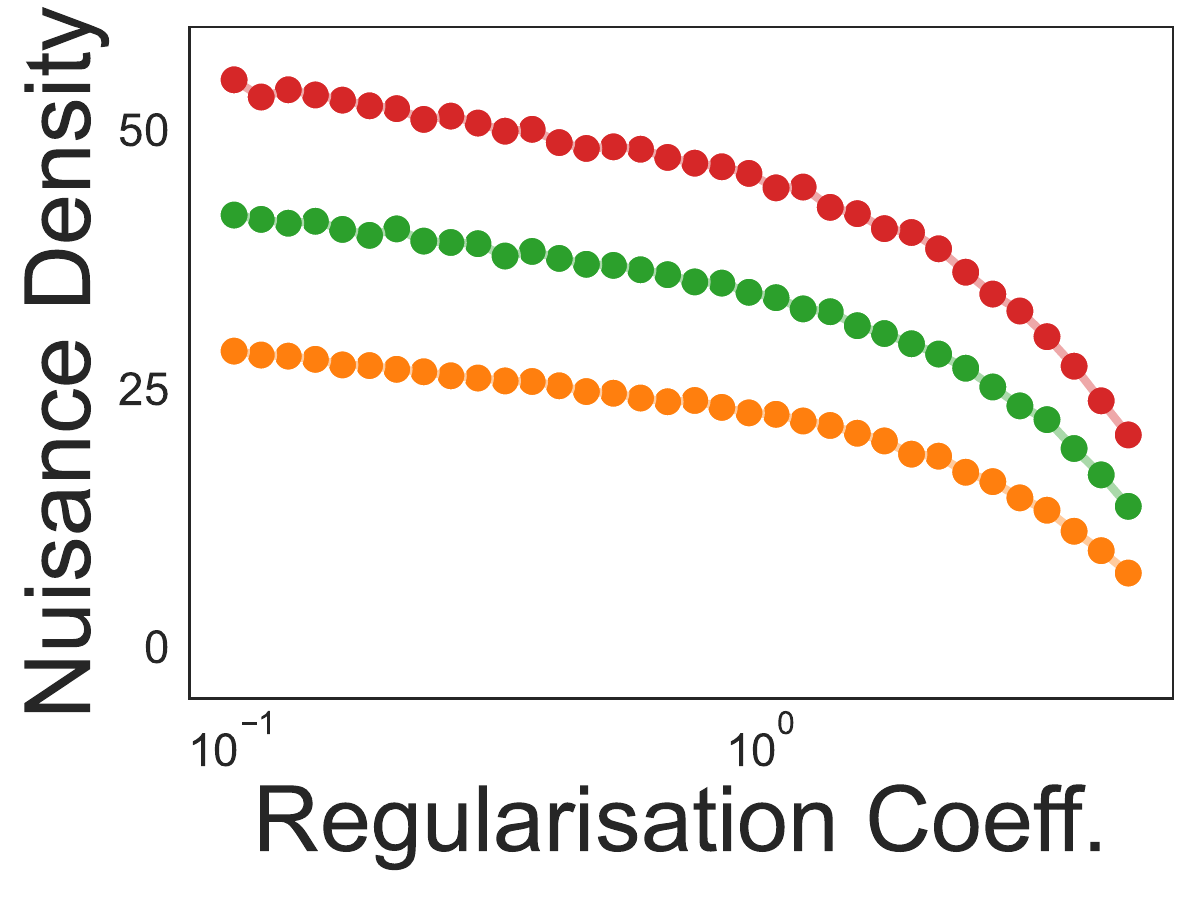}
   \caption*{Noisy}
\end{subfigure}\vline
\begin{subfigure}[t]{0.4\linewidth}
    \centering    \includegraphics[width=0.48\linewidth]{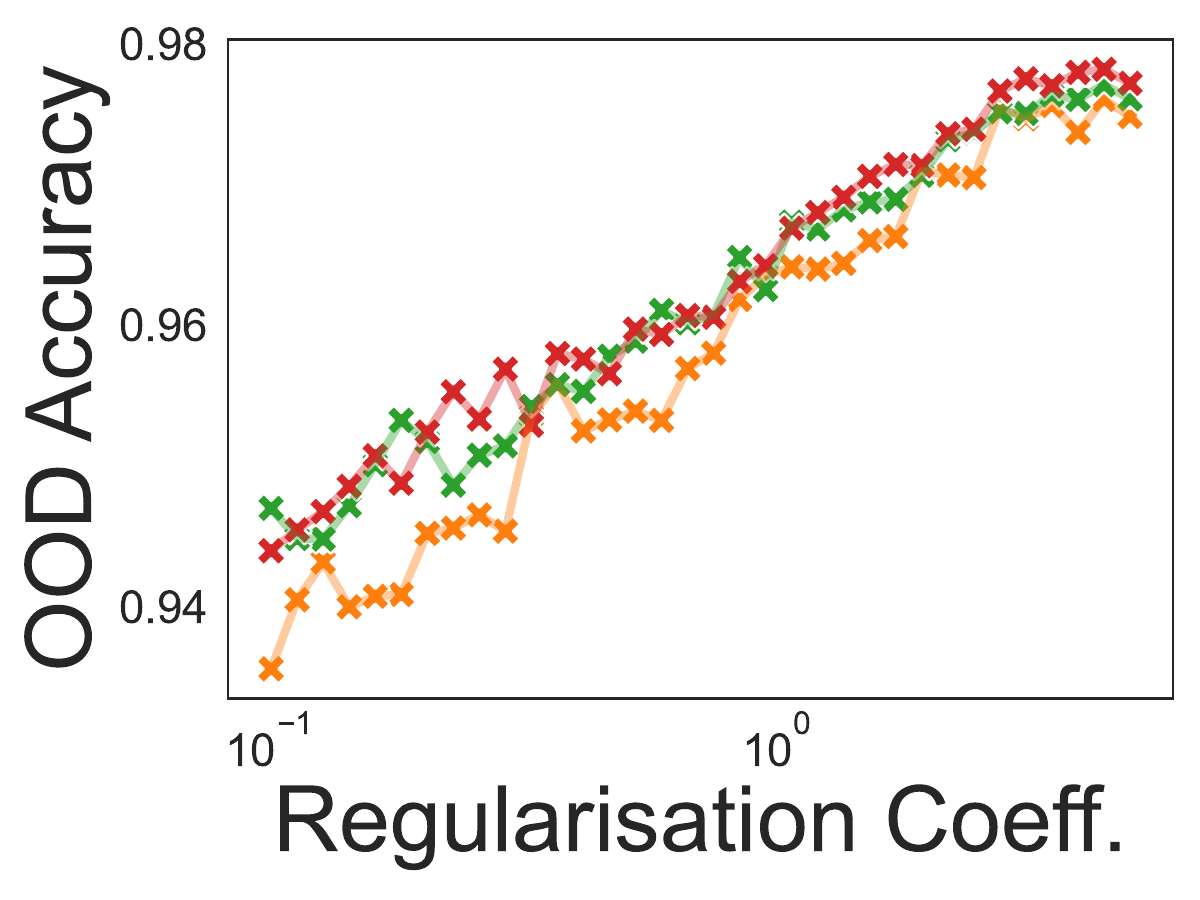}
     \includegraphics[width=0.48\linewidth]{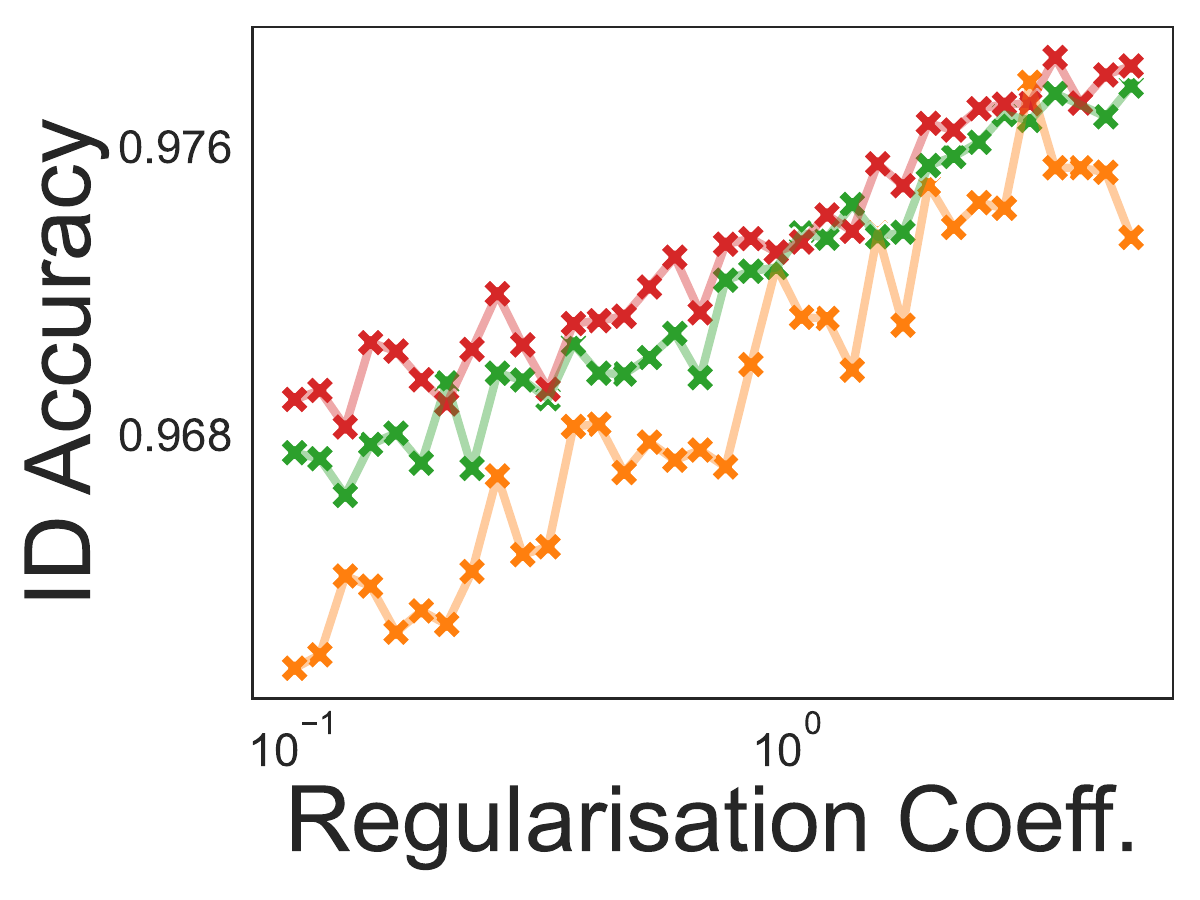}
    \includegraphics[width=0.48\linewidth]{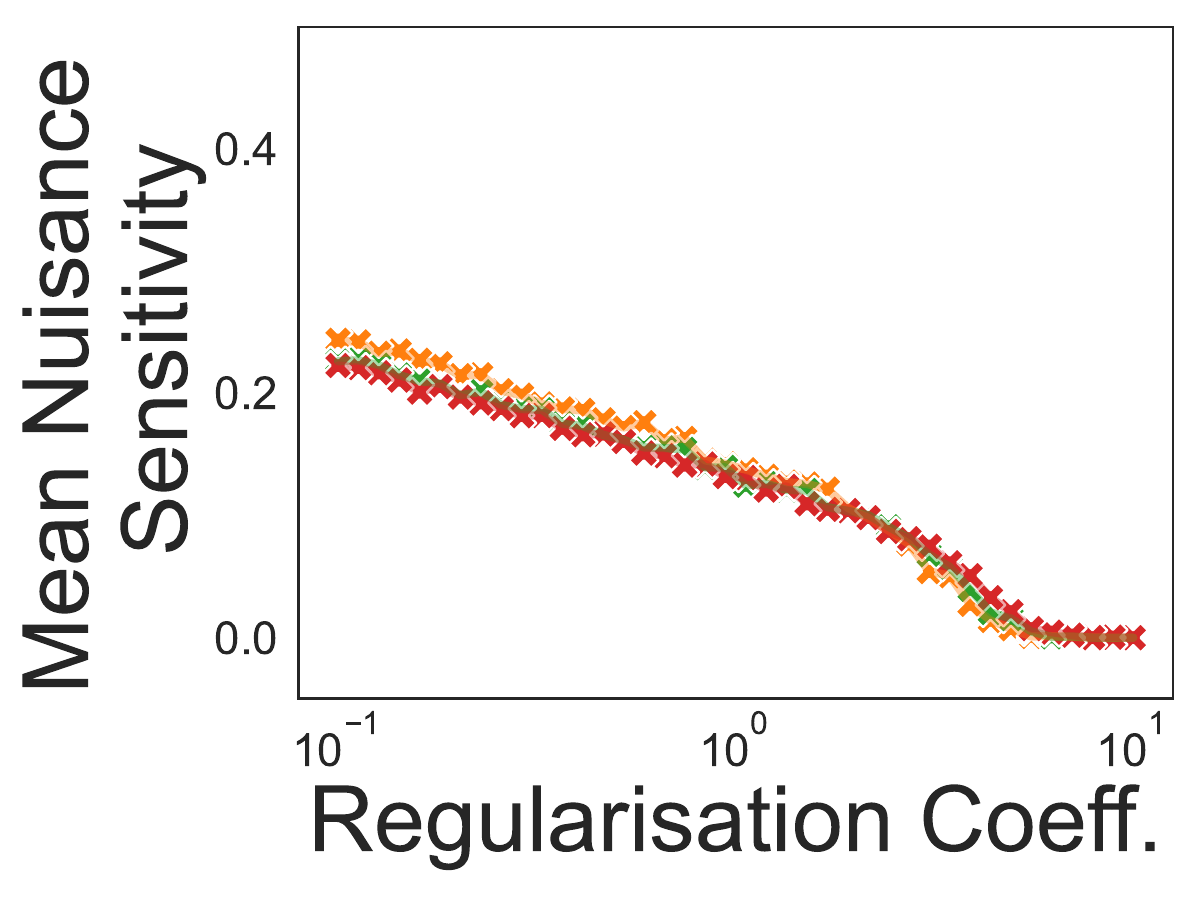}   
    \includegraphics[width=0.48\linewidth]{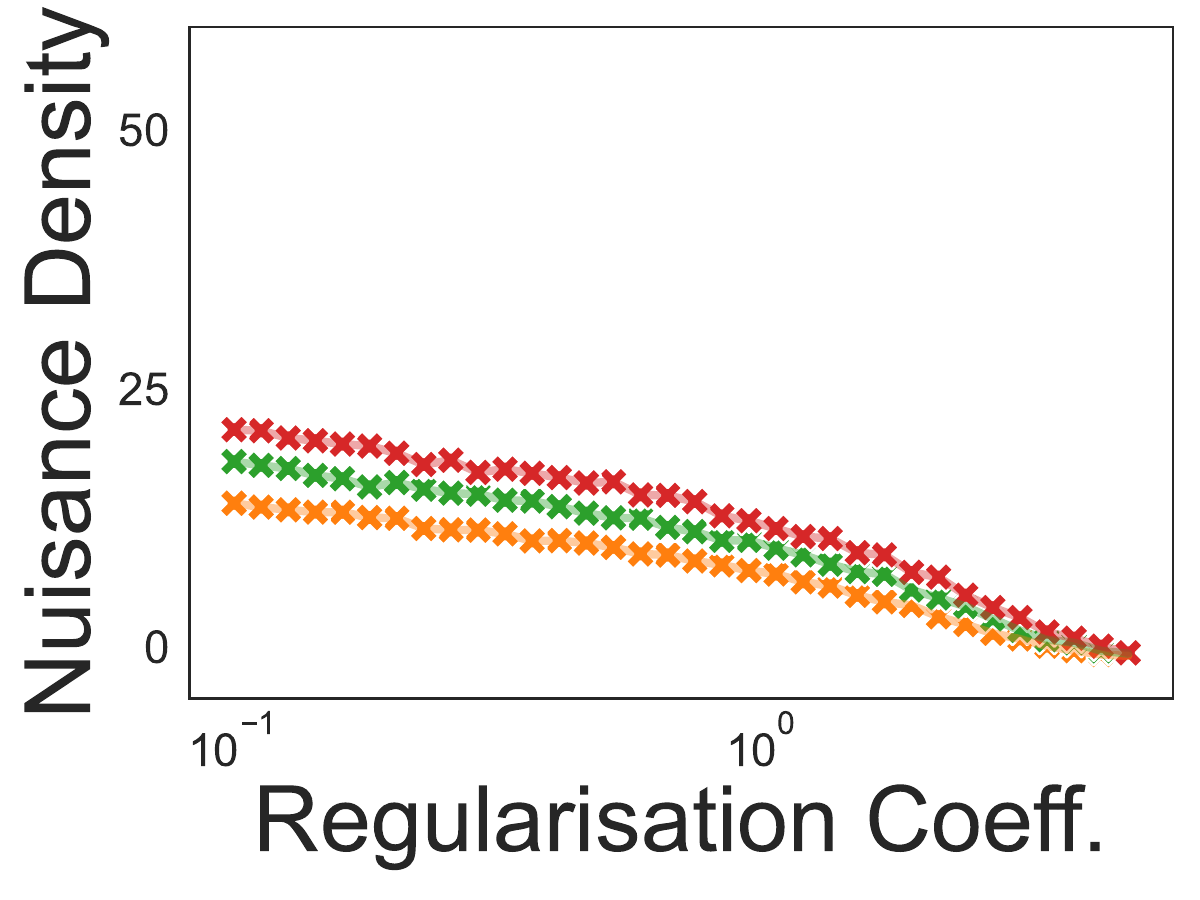}
    \caption*{Noiseless}
    \label{fig:enter-label}
\end{subfigure}
    \caption{We show varying the strength of regularisation impacts the~\Gls{id} and~\Gls{ood} accuracy as well as the spurious sensitivity and density. While both~\Gls{id} and~\Gls{ood} accuracy increases with increasing regularisation coefficient, larger datasets have a noticeably smaller~\Gls{ood} accuracy in the noisy setting for all regularisation strengths. In all other settings, larger datasets have a higher accuracy and this is the main factor leading to the \awline behaviour. Regarding spurious sensitivity and density, for sufficiently large regularisation both the spurious sensitivity and the density drops to zero much faster for the noiseless setting than the noisy setting.}
    \label{fig:vary-reg}
\end{figure}

\subsection{Colored MNIST dataset}
\label{app:cmnist}
As discussed in the main text, this dataset is derived from MNIST by introducing a color-based spurious correlation. Specifically, digits are initially assigned a binary label based on their numeric value (less than 5 or not), and this label is then corrupted with label noise probability \(\eta\). To make this set of experiment more realistic, we use an algorithm that is supposed to be robust to distribution shift. In particular, we construct domains one with \(0.35\) and one with \(0.7\) fraction of the samples with correlated label and colour. Then, we optimise an average of the losses on these two domains. For test set, the spurious correlation is at \(0.1\).

A three-layer MLP is then trained on this dataset to achieve zero training error by running the Adam optimizer for 1000 steps with \(\ell_2\) regularization. The width of the MLP is varied from \(16\) to \(2048\) to generate various runs. The learning rate is set at 0.001. The accuracy on the training distribution is referred to as the~\Gls{id} accuracy, and the accuracy on the test distribution is referred to as the~\Gls{ood} accuracy. Each set of runs~(multiple seeds etc) was run on a single GPU and took less than 30 minutes for the whole set.

\subsection{Functional Map of the World (fMoW) Dataset}
\label{app:fmow}

The original~\Gls{fmow} dataset~\citep{christie2018functional} contains satellite images from various parts of the world, classified into five geographical regions: Africa, Asia, America, Europe, and Oceania, and labeled according to one of 30 objects in the image. It also includes additional metadata regarding the time the image was captured. The~\Gls{fmow}-CS dataset is constructed by introducing a correlation between the domain and the label, \ie only sampling certain labels for certain domains. We use the domain-label pairing originally used by~\citet{shi2023how}, which ensures that if the dataset is sampled according to this pairing, the population of each class relative to the total number of examples remains stable. In this work, we use a spurious correlation level of 0.9, meaning 90\% of the training dataset follows the domain-label pairing, while the remaining 10\% does not match any domain-label pairs. The domain-label pairing for~\Gls{fmow}-CS is detailed in~\Cref{tab:fmow-c-dl}. To simplify the problem further, we only select five labels instead of all thirty, which is the first in each of the rows.

Similar to our previous experiments, we also introduce label noise with a probability of 0.5. For the~\Gls{ood} test data, we use the original WILDS~\citep{wilds2021,sagawa2022extending} test set for~\Gls{fmow}, which essentially creates a distribution shift by thresholding based on a timestamp; images before that timestamp are~\Gls{id} and images after are~\Gls{ood}. To obtain various training runs, we fine-tuned ImageNet pre-trained models, including ResNet-18, ResNet-34, ResNet-50, ResNet-101, and DenseNet121, with various learning rates and weight decays on the~\Gls{fmow}-CS dataset. We also varied the width of the convolution layers to increase the width of each network. In total, we trained nearly 400 models using various configurations where each model was trained on a single  48GB NVIDIA Quadro RTX 6000 with 36 CPUs or a 32GB NVIDIA V100 with 28 CPUs. Each run took between 9 hours and 15 hours depending on problem parameters.

\begin{figure}[!htb]
\centering
    \centering
    \includegraphics[width=0.49\linewidth]{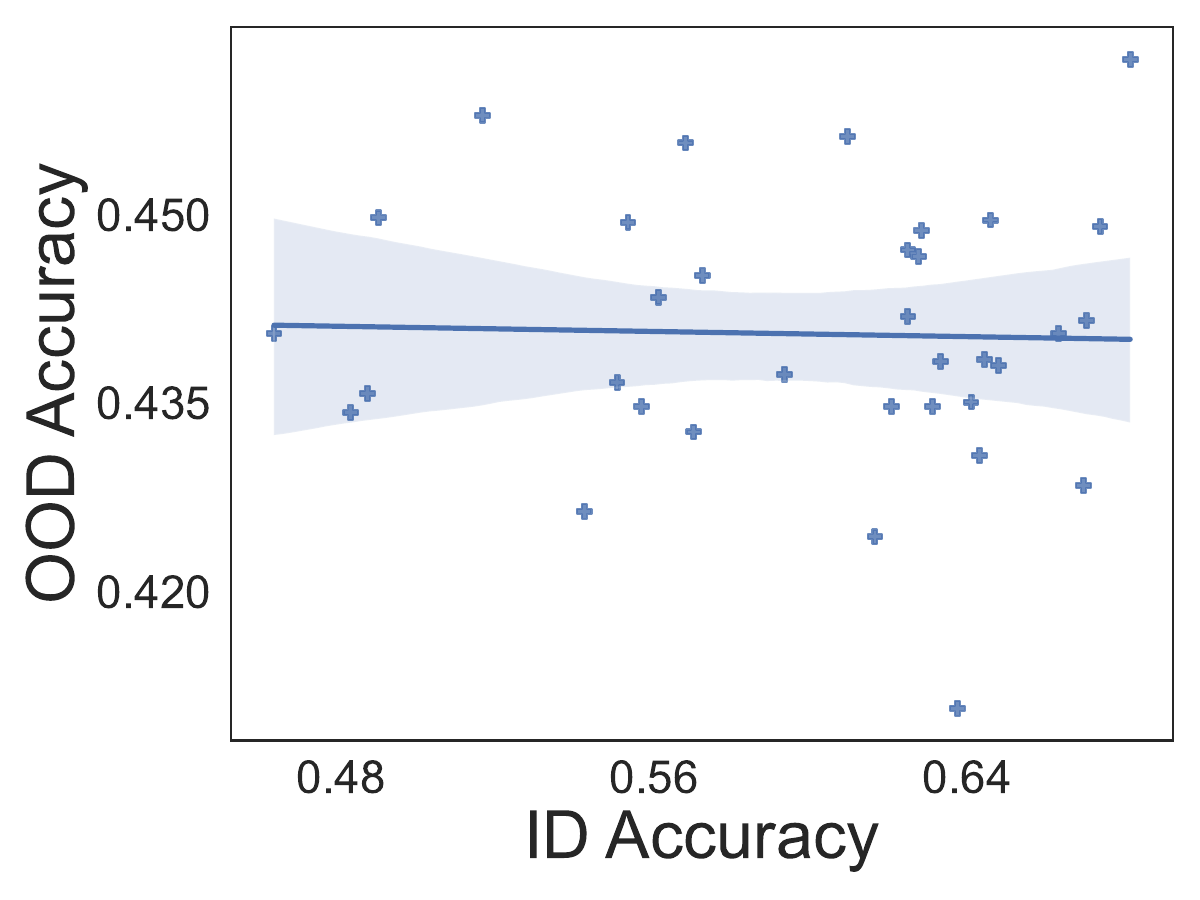}
    \caption*{Noisy no Spurious correlation}
\caption{Experiments on the \Gls{fmow} dataset without spurious correlation shows almost zero correlation between~\Gls{id} and~\Gls{ood} accuracy.}
\label{fig:fmow-exp-plot}
\end{figure}

\begin{table}[t]
\centering
\caption{Domain-label pairing for~\Gls{fmow}-CS.} 
\scalebox{0.78}{
\begin{tabular}{lc}    
    \toprule
    Domain (region) & Label \\ \midrule
    \multirow{4}{*}{Asia}  &  \multirow{4}{*}{\shortstack{Military facility, multi-unit residential, tunnel opening, \\ wind farm, toll booth, road bridge, oil or gas facility,\\ helipad, nuclear powerplant, police station, port}} \\
    & \\
    & \\
    & \\
    \multirow{4}{*}{Europe}  &  \multirow{4}{*}{\shortstack{Smokestack, barn, waste disposal, hospital, water  \\ treatment facility, amusement park, fire station, fountain, \\ construction site, shipyard, solar farm, space facility}} \\
    & \\
    & \\
    & \\
    \multirow{4}{*}{Africa}  &  \multirow{4}{*}{\shortstack{Place of worship, crop field, dam, tower, runway, airport, electric \\ substation, flooded road, border checkpoint, prison, archaeological site, \\factory or powerplant, impoverished settlement, lake or pond}} \\
    & \\
    & \\
    & \\
    \multirow{4}{*}{Americas}  &  \multirow{4}{*}{\shortstack{Recreational facility, swimming pool, educational institution, \\stadium, golf course, office building, interchange, \\ car dealership, railway bridge, storage tank, surface mine, zoo}} \\
    & \\
    & \\
    & \\
    \multirow{4}{*}{Oceania}  &  \multirow{4}{*}{\shortstack{Single-unit residential, parking lot or garage, race track, park, ground \\ transportation station, shopping mall, airport terminal, airport hangar,  \\ lighthouse, gas station, aquaculture, burial site, debris or rubble}} \\
    & \\
    & \\
    & \\
    \bottomrule
\end{tabular}
\label{tab:fmow-c-dl}}
\end{table}
\end{document}